\newcommand{\vertiii}[1]{{\left\vert\kern-0.25ex\left\vert\kern-0.25ex\left\vert #1 \right\vert\kern-0.25ex\right\vert\kern-0.25ex\right\vert}}
\DeclareMathOperator*{\argmin}{arg\,min}
\DeclareMathOperator{\Tr}{Tr}
\DeclareMathOperator*{\mvec}{\text{vec}}
\DeclareMathOperator{\ellinf}{\ell_{\infty}}
\newcommand{\defeq}{\vcentcolon=}
\theoremstyle{definition}
\newtheorem{definition}{Definition}
\theoremstyle{theorem}
\newtheorem{theorem}{Theorem}
\theoremstyle{corollary}
\newtheorem{corollary}{Corollary}
\theoremstyle{lemma}
\newtheorem{lemma}{Lemma}
\theoremstyle{remark}
\newtheorem{remark}{Remark}
\newcommand{\transpose}{\mathsf{T}}
\newcommand*\xbar[1]{%
  \hbox{%
    \vbox{%
      \hrule height 0.5pt 
      \kern0.4ex
      \hbox{%
        \kern-0.15em
        \ensuremath{#1}%
        \kern-0.15em
      }%
    }%
  }%
} 
\title{Learning the Structure of Large Networked Systems Obeying Conservation Laws \thanks{This work was supported in part by the National Science Foundation (NSF) under the grants CCF-2048223 and OAC-1934766, and by the National Institutes of Health (NIH) under the grant 1R01GM140468-01.}}
\date{}
\author{%
  Anirudh Rayas \\ Arizona State University\\ \texttt{ahrayas@asu.edu} \and Rajasekhar Anguluri\\ Arizona State Univeristy \\ \texttt{rangulur@asu.edu} \and Gautam Dasarathy\\
  Arizona State University\\
  \texttt{gautamd@asu.edu} \\
}
\begin{document}

\maketitle

\begin{abstract}
Many networked systems such as electric networks, the brain, and social networks of opinion dynamics are known to obey conservation laws. Examples of this phenomenon include the Kirchoff laws in electric networks and opinion consensus in social networks. Conservation laws in networked systems may be modeled as {\em balance equations} of the form $X = B^\ast Y$, where the sparsity pattern of $B^\ast \in \mathbb{R}^{p\times p}$ captures the connectivity of the network on $p$ nodes, and  $Y, X \in \mathbb{R}^p$ are vectors of ``potentials'' and ``injected flows'' at the nodes respectively. The node potentials $Y$ cause flows across edges  and the  flows $X$ injected at the nodes are extraneous to the network dynamics. In several practical systems, the network structure is often unknown and needs to be estimated from data to facilitate modeling, management, and control. To this end, one has access to samples of the node potentials $Y$, but only the statistics of the node injections $X$. Motivated by this important problem, we study the estimation of the sparsity structure of the matrix $B^{*}$ from $n$ samples of $Y$ under the assumption that the node injections $X$ follow a Gaussian distribution with a known covariance $\Sigma_X$. We propose a new $\ell_{1}$-regularized maximum likelihood estimator for tackling this problem in the high-dimensional regime where the size of the network may vastly be larger than the number of samples $n$. We show that this optimization problem is convex in the objective and admits a unique solution. Under a new mutual incoherence condition, we establish sufficient conditions on the triple $(n,p,d)$ for which exact sparsity recovery of $B^{*}$ is possible with high probability; $d$ is the degree of the underlying graph. We also establish guarantees for the recovery of $B^*$ in the element-wise maximum, Frobenius, and operator norms. Finally, we complement these theoretical results with experimental validation of the performance of the proposed estimator on synthetic and real-world data.
\end{abstract} 

\section{Introduction}
\label{sec:intro}
Let $\mathcal{G}=([p],E)$ be a directed graph on the vertex set $[p] \triangleq \{1,2,\ldots, p\}$ with a size $m$ edge set $E \subset [p]\times[p]$. Let $\mathcal{D}$ denote the $p\times m$ incidence matrix that encodes the edges of $\mathcal{G}$ as follows: each column of $\mathcal{D}$ is associated with an edge $(i,j)\in E$ and is a vector of zeros except at the locations $i$ and $j$ where it is $-1$ and $+1$ respectively. 
 
Let $X\in \mathbb{R}^p$ be a vector of \emph{injected flows or signals} at the vertices and  let $f\in \mathbb{R}^m$ be the vector of \emph{flows} through the edges. Then, the basic \emph{conservation law} between these flows may be expressed as $\mathcal{D}f+X=0$; that is, at each vertex, the flow (which is a linear combination of flows at the edges incident on the vertex) has to balance the injections. 
In physical systems, edge flows $f$ often arise as a way to balance the differences between certain \emph{potentials} $Y\in \mathbb{R}^{p}$ at the vertices. That is,  the flows  satisfy $f=-\mathcal{D}^\transpose Y$; notice that this implies that the flow at the edge $(i,j)$ is given by $Y_j - Y_i$. Thus, the above conservation law yields the following relationship, called a {\em balance equation}, between the node potentials and injected flows: 
\begin{align}\label{eq: flow systems}
    X-B^*Y = 0, 
\end{align}
 where  $B^*\triangleq \mathcal{D}\mathcal{D}^\transpose \in \mathbb{R}^{p\times p}$ is the symmetric Laplacian matrix \cite{bapat2010graphs, van2017modeling}. In an electrical circuit (with unit resistances on edges), $Y$ corresponds to the voltage potentials at the vertices, $f$ corresponds to the edge currents, and $X$ denotes the injected currents at the vertices. Indeed, this picture can be generalized by assigning weights to the edges of the network (conductances in the case of an electric network), and allowing the flows to be weighted by these weights. The model in \eqref{eq: flow systems} is referred to as generalized Kirchoff's law, and importantly, this models the relationship between flows and potentials in a range of systems that satisfy conservation laws such as hydraulic networks, opinion consensus in social networks, and transportation/distribution networks (see \cite{van2017modeling, mesbahi2010graph, temkin2020chemical} and references therein). 

It can be readily seen that the Laplacian $B^*$ is a positive semi-definite that encodes the edges of $\mathcal{G}$. Specifically, $(i,j)\in E$ if and only if $B^*_{ij}\ne 0$. The Laplacian lies at the heart of spectral graph theory~\citep{chung1997spectral}, and owing to its fascinating properties has found a range of applications in diverse areas such as image processing, manifold learning, spectral clustering, and bandits~\citep{shi2000normalized, belkin2005towards, von2007tutorial, valko2014spectral, lejeune2020thresholding}.  
In this paper, we consider a situation where the edge set $E$ of the graph is unknown and needs to be estimated from measurements of the node potentials $Y$. 
Based on the above discussion, we will cast this as a problem of learning an unknown positive definite $B^\ast$ (or the sparsity pattern thereof) from measurements of $Y$. Further, we suppose that we only have access to the statistics of $X$, namely, that it is a 0-mean Gaussian random vector with a covariance matrix $\Sigma_X$. The situations where $\Sigma_X$ is unknown and $B^\ast$ is non-invertible is briefly discussed in the remarks in Section~\ref{sec: problem-setup}. 
We list a variety of applications where this learning problem arises naturally. 

\begin{enumerate}
\item \emph{Topology learning in electric networks}: Consider an electric network (or circuit) with $p$ nodes, current injections 
$X$, node voltages $Y$, conductances $A_{ij}\geq 0$ between nodes $i$ and $j$, and shunt conductances $A_{ii}\geq 0$ connecting $i$-th node to the ground. The current-balance equation is given by \eqref{eq: flow systems}, where 
$B^*$ is the Laplacian with $B^*_{ij}=-A_{ij}$ and $B^*_{ii}=A_{ii}+\sum_{j=1}^nA_{ij}$ \citep{dorfler2012kron}. In real electric grids, current injections are unknown random variables. To ensure reliable power supply, learning $B^*$ and its underlying graph from  voltage samples is important and has been widely studied \citep{DekaTSG2020, li2020learning, GC-VK-LZ-GBG:21, anguluri2021grid}. The current-balance equation also appears in Markov chains and flow networks where Kirchoff laws apply \citep{van2017modeling, pozrikidis2014introduction}. 
\item \emph{Brain connectivity from graph filters}: The structural connectivity of the human brain is often studied using a network with nodes representing brain regions, and the edge weights representing the density of anatomical connections \citep{pongrattanakul2013sparse, hagmann2008mapping}. Recent studies showed that the weights can be inferred using \emph{graph filters} satisfying \eqref{eq: flow systems} with $B^*=(\sum_{l=0}^{L-1}h_lA^L)^{-1}$, where $A$ is the symmetric adjacency matrix; $h_l$ is the filter coefficient; and $X$ is the latent graph signal. For brain networks, \citep{shafipour2017network, marques2017stationary} showed that $L=3$ and $B^*=(I+\alpha A)^{-1}$ are reasonable. Graph filters are also used in social and protein interaction networks \citep{shafipour2019online, mateos2019connecting}.
\item \emph{Structural equation models (SEM)}: Structural equation models are used to explain relationships among exploratory variables in several domains; for e.g., psychoanalysis \citep{epskamp2018gaussian}, social sciences \citep{civelek2018essentials}, medical research, and neuroimaging \citep{beran2010structural, pruttiakaravanich2020convex, mclntosh1994structural}. Using SEMs, \citep{mogensen2021equality}
provided a causal interpretation of Linear Hawkes Processes. In SEM with no latent variables, we let $y=Ay+x$, where $A$ is the \emph{path matrix}. Then the SEM satisifies \eqref{eq: flow systems} with $B^*=I-A$.
\item \emph{Linear dynamical (diffusion) networks}: These network dynamics are satisfied by many systems including consensus dynamics, thermal capacitance networks, power swing dynamics,  \citep{talukdar2020physics}. Further, by lifting approach,  these dynamics can be used to study periodic/cyclic behavior in atmospheric systems \citep{talukdar2015reconstruction}. 
\end{enumerate}

Before we detail our topology discovery method, we comment on a few competing approaches that only have limited utility in our setting. First, penalized (nodal) regression methods \cite{meinshausen2006high} are not applicable here since these require samples of 
both $X$ and $Y$. Second, a recent line of work \cite{DekaTSG2020, anguluri2021grid} proposed estimating $B^\ast$ by estimating the inverse covariance (or precision) matrix $\Theta^*$ of $Y$ 
using the graphical LASSO (GLASSO)~\cite{yuan2007model, friedman2008sparse}. In particular, \cite{DekaTSG2020, anguluri2021grid} showed that $\Theta^*$ has non-zeros corresponding to those pairs of vertices that are connected by paths of length at most two; that is, the $(i,j)$-th entry of $\Theta^*$ is non zero if and only if $(i,j)$ is an edge in $\mathcal{G}$ or there is a $k \in [p]$ such that $i-k-j$ is a path in $\mathcal{G}$. The authors then estimated edges of $\mathcal{G}$ by identifying (and eliminating) the pairs of vertices that have two-hop connections in $\mathcal{G}$ (see Fig.~\ref{fig:motivation_figure})---for future reference, we call this estimator as GLASSO+2HR (hop refinement). However, this estimator requires strong structural assumptions on $\mathcal{G}$ such as triangle-freeness. Further, the precision matrix $\Theta^*$ of $Y$ is far more dense than the underlying graph $\mathcal{G}$ since $\Theta^*=(B^*)^2$; this results in sub-optimal data requirements for reliable recovery (see Remark \ref{rmk: GLASSO comparison}). Finally, if $\Sigma_X$ is a diagonal matrix, we can estimate the sparsity pattern of $B^*$ by taking the principal square root of the empirical covariance matrix of $Y$.  Unfortunately, this method does not allow for any correlation between the node injections which is not the case in practice. Moreover, this method is numerically unstable unless one has a large number of samples ($n$) of $Y$ so that the empirical covariance matrix is invertible, a requirement that is at odds with the high-dimensional regime where one typically desires $n$ to be smaller than the number of variables $p$.

In light of the limitations of previous approaches, we study a natural penalized maximum likelihood estimator for $B^*$ using the samples of $\{Y_i\}_{i=1}^n$. The advantage of this estimator is that it is not only statistically efficient but also obviates the assumptions imposed by the aforementioned methods. We now summarize the main contributions of the paper: 
\begin{itemize}[leftmargin=0.45cm]
    \item We propose a novel $\ell_1$ regularized maximum likelihood estimator (MLE) for $B^\ast$ from samples of $Y$. It is worth noting that the optimization program we propose is not the standard graphical LASSO program~\citep{yuan2007model, friedman2008sparse} as it involves terms that are quadratic in the optimization variable. Our first result shows that, notwithstanding its form, the $\ell_{1}$ regularized MLE is convex in $B$ and it has a unique minimum even in the high-dimensional regime ($n\ll p$) under certain standard conditions. 
    \item Under a new mutual incoherence condition, our second result provides a sufficient condition on the number of samples $n$ required to recover the exact sparsity of $B^{*}$ with high probability. Furthermore, under these sufficient conditions we also establish the consistency of our estimator in the element-wise maximum, Frobenius, and spectral norms. More precisely, we show that if $n = \Omega(d^{2}\log p)$ then with high probability $\Vert\widehat{B} - B^{*}\Vert_{\infty} \in \mathcal{O}(\sqrt{{\log p}/n})$.
    \item Finally, we complement our theoretical results with experimental results both on the synthetic data sets and data from a benchmark power distribution system. Our experiments demonstrate the clear benefit of the proposed estimator over baseline and competing methods.  

\end{itemize}
	\begin{figure*}[h]
		\centering
		\includegraphics[width=1\textwidth]{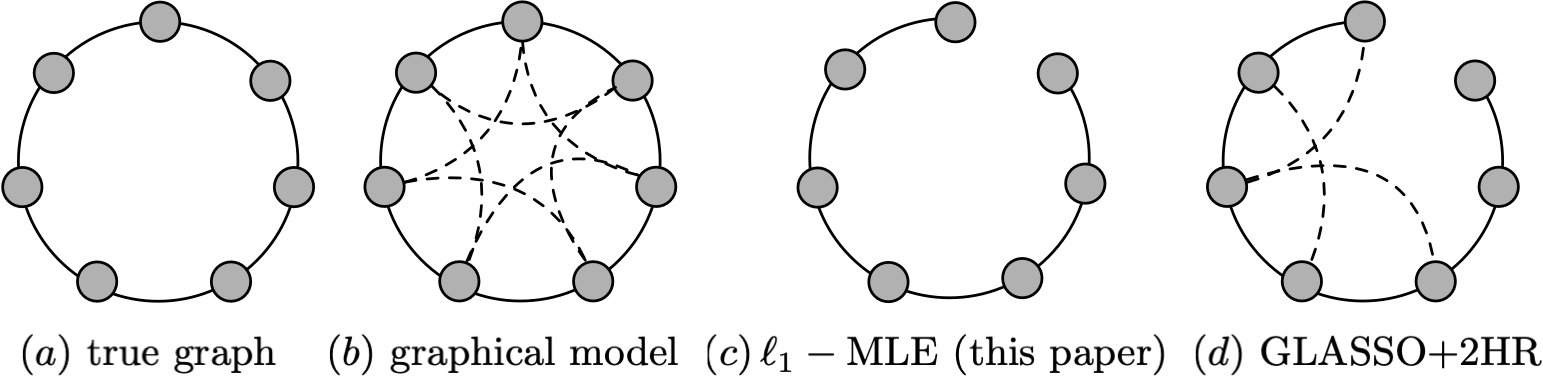}
		\caption{\small{Stylistic visualization of $\ell_1$-MLE vs GLASSO+2HR in \cite{DekaTSG2020}. (a) Graph $\mathcal{G}$ with $B^*$; (b) graphical model of $\Theta^*=B^*\Sigma^{-1}_X(B^*)^\transpose$; (c) estimate of $B^*$ by our proposed estimator; and (d) estimate of $B^*$ by GLASSO+2HR. Graph $\mathcal{G}$ and the graphical model of $\Theta^*$ have same set of vertices; however, in the latter, there are spurious edges (dashed lines in (b)) between vertices that are two-hop neighbors in $\mathcal{G}$ (see main text in Introduction). Consequently, GLASSO+2HR estimate, which relies on a estimate of $\Theta^*$ has also spurious edges. Instead, our $\ell_1$-MLE directly estimates $B^*$, and hence, there are no spuruious edges in it.}}
		\label{fig:motivation_figure}	
	\end{figure*} 

\noindent\textbf{Organization of the paper}: In Section \ref{sec: problem-setup}, we introduce an $\ell_1$-regularized ML estimation problem for networked systems obeying conservation laws. In Section \ref{sec: Statistical guarantees}, we show that this optimization problem is convex in the objective and establish consistency and support recovery rates for our estimator. In Section \ref{sec: experiments}, we provide simulation results. In Section \ref{sec: discussion}, we summarize our paper with future directions. 

\noindent\textbf{Notation}: For any two subsets $T_1$ and $T_2$ of $[p]\times [p]$, we denote by $A_{T_1T_2}$, the submatrix of $A$ with rows and columns indexed by $T_1$ and $T_2$, respectively. When $T_1=T_2$ we denote the submatrix by $A_{T_1}$. For a matrix $A=(A_{i,j})\in \mathbb{R}^{p\times p}$, we use $\Vert A\Vert_{\infty} \triangleq \max_{i,j}\vert A_{ij}\vert $ to denote the maximum element-wise norm, and $\|A\|_F$ and $\Vert A\Vert_{2}$ to denote the Frobenius norm and the operator norm. We denote the $\ellinf$-matrix norm of $A$ defined as
    $\nu_{A} = \vertiii{A}_{\infty} \triangleq \max_{j=1,\ldots,p}\sum_{j=1}^{p}\vert A_{ij}\vert.$ 
We use $\Vert A\Vert_{1,\text{off}} = \sum_{i\neq j}\vert A_{ij}\vert$ to denote the off-diagonal $\ell_{1}$ norm. We use $\mvec(A)$ to denote the $p^2$-vector formed by stacking the columns of $A$ and use $\Gamma(A)=(I\otimes A)$ to denote the kronecker product of $A$ with the identity matrix $I$. For symmetric positive definite matrices $A_1$ and $A_2$, we use $A_1\succ A_2$ to denote  $A_1-A_2$ is positive definite.  We define $\text{sign}(A_{ij}) = +1$ if $A_{ij}>0$ and $\text{sign}(A_{ij}) = -1$ if $A_{ij}<0$. For two-real valued functions $f(\cdot)$ and $g(\cdot)$, we write $f(n) = \mathcal{O}(g(n))$ if $f(n)\leq cg(n)$ and $f(n) = \Omega(g(n))$ if $f(n)\geq c^{\prime}g(n)$ for constants $c,c^{\prime}>0.$

\section{Problem Setup}
\label{sec: problem-setup}
Consider a $p$-dimensional random vector $X$ following the Gaussian distribution $\mathcal{N}(0,\Sigma_X)$ with a known covariance matrix $\Sigma_X\succ 0$ (we outline a relaxation of this assumption in Remark~\ref{rmk: unknown cov matrix}). Let $Y = ({B^{*}})^{-1}X$ with a symmetric $p\times p$ matrix $B^*\succ 0$ and note that $Y\sim \mathcal{N}(0,{\Theta^{*}}^{-1})$, where $\Theta^*=B^*\Sigma_XB^*$. Define the sample covariance matrix $S=n^{-1}\sum_{i=1}^nY_iY_i^\transpose$, where $\{Y_1,\ldots,Y_n\}$ are the $n$ (possibly $n<p$) i.i.d. samples of $Y$. For some $\lambda_n>0$, we consider the $\ell_1$ regularized MLE for estimating $B^*$: 
\begin{align}\label{eq: log-det0}
    \argmin_{B\succ 0; \Theta=B\Sigma^{-1}_{X} B^\transpose  }\left[\Tr(S\Theta)-\log\det(\Theta)+\lambda_{n}\Vert B\Vert_{1,\text{off}}\right], 
\end{align}
where $\Vert B\Vert_{1,\text{off}}=\sum_{i\neq j}\vert B_{ij}\vert$ is the $\ell_1$-norm applied to the off-diagonal entries of $B\in \mathbb{R}^{p\times p}$. The loss function in \eqref{eq: log-det0} without the $\ell_1$ penalty is the negative log-likelihood of $Y$, and maximizing it to estimate $B^*$ yields an unrestricted MLE.

The optimization problem in \eqref{eq: log-det0} looks similar to the $\ell_1$-regularized log-determinant problem, which has a rich, long history in high-dimensional statistics, machine learning, signal processing, and network sciences (see for instance \cite{maathuis2018handbook, banerjee2008model, yuan2007model, friedman2008sparse}). The bulk of this literature focuses on estimating $\Theta^*$. The resultant estimator, referred to as the graphical LASSO (or GLASSO), has many nice theoretical properties (e.g., asymptotic consistency and support recovery in the high-dimensional regime) \cite{ravikumar2011high, rothman2008sparse, zhang2014sparse}. However, our estimator in \eqref{eq: log-det0} is significantly different from GLASSO because we are estimating $B^*$ rather than $\Theta^*$. Other studies close to our setup estimate a sparse Cholesky factor of $\Theta^*$ \cite{huang2006covariance, cordoba2020sparse, jelisavcic2018fast}. Recall that the Cholesky decomposition is given by $\Theta^*=LL^\transpose$ where $L\succ0$ is a lower triangular matrix. We differ from this line of work on multiple fronts: (i) we do not require $B^*$ to be a lower or upper triangular Cholesky factor; (ii) our method allows for  arbitrary correlations between the nodal injections resulting in an extra $\Sigma_X^{-1}$ between the factors; and (iii) to the best of our knowledge, ours is the first work to provide guarantees on the sample complexity for estimating $B^*$ in the high-dimensional regime.

\smallskip 
\begin{remark}(\emph{Unknown covariance matrix $\Sigma_X$}).\label{rmk: unknown cov matrix}
In problem \eqref{eq: log-det0}, we assume that $\Sigma_X$ is known. If this is not the case, we can slightly modify \eqref{eq: log-det0} to estimate $B^*D$ instead of $B^*$, where $D$ is the unique square root of $\Sigma^{-1}_X$ satisfying $D^2=\Sigma^{-1}_X$. This approach works best if the sparsity of $B^*$ (approximately) equals the sparsity of $B^*D$, which for instance happens when $\Sigma_X$ is (approximately) diagonal.    \qed
\end{remark}

\begin{remark}(\emph{On invertibility of $B^*$}).\label{rmk: invertibility}
The invertiblity assumption of $B^*$ ensures that $B^*$ is identifiable from samples. This holds in several applications including the ones in (2)-(4) in Section~\ref{sec:intro}. However, this might not be true if $B^*$ is a Laplacian matrix that has $k$ zero eigenvalues. One common work around (see e.g., \citep{grone1990laplacian, DekaTSG2020, dorfler2012kron}) is to work with a reduced Laplacian matrix by deleting $k$ rows and columns of $B^*$ (we employ this insight in our experiments; see~Section \ref{sec: experiments}).  \qed
\end{remark}

\section{A Convex Estimator and Statistical Guarantees}\label{sec: Statistical guarantees}
In this section, we first recast the objective in \eqref{eq: log-det0} in terms of  $B$ for a known $\Sigma_X$. We then present our main results on the performance of our estimator in \eqref{eq: log-det0} when $X$ is Gaussian and non-Gaussian. We comment on extending { our results to other convex loss functions} and conclude with an overview of the key steps in proving our results. Full details are given in the Appendix. 

We begin by rewriting the problem in \eqref{eq: log-det0} in a form that is more suitable to our methods of analysis. Let $D$ be the unique square root of $\Sigma^{-1}_X$ satisfying $D^2=\Sigma^{-1}_X$ (see \cite{bhatia2009positive}). Substituting $B=B^\transpose$ and  $\Theta=BD^2B^\transpose$ in the cost function of \eqref{eq: log-det0} yields the following: 
\begin{align}\label{eq: log-det}
    \widehat{B} = \argmin_{B\succ0 }\left[\Tr(DBSBD)-\log\det(B^2)+\lambda_{n}\Vert B\Vert_{1,\text{off}}\right], 
\end{align}
where we use the fact that the trace operator is cyclic and the determinant of a matrix product equals the product of matrix determinants. We dropped constants that have no effect on the estimate. The symmetry and invertibility of $B$ is sufficient enough to ensure that $\log(\cdot)$ is well-defined. In other words, the positive-definiteness assumption is not needed for the well-posedness of \eqref{eq: log-det}. 

Lemma \ref{lma: uniq soln} below is the starting point of our analysis. It establishes two key properties of the estimator in \eqref{eq: log-det} under the positive definiteness of $B$: (i) loss function in \eqref{eq: log-det} is convex in $B$ and (ii) $\widehat{B}$ is unique.
\begin{lemma}{{\emph(convexity and uniqueness)}}\label{lma: uniq soln}
For any $\lambda_{n}\!>\!0$ and $B\!\succ\! 0$, (i) the $\ell_{1}$-log determinant problem in \eqref{eq: log-det} is convex and (ii) $\widehat{B}$ in \eqref{eq: log-det} is the unique minima satisfying the sub-gradient condition $2D^{2}\widehat{B}S - 2\widehat{B}^{-1}\!+\!\lambda_{n}\widehat{Z}\!=\!0$. Here $\widehat{Z}$ belong to the sub-gradient $\partial\Vert \widehat{B} \Vert_{1,\text{off}}$ so that $\widehat{Z}_{ij}=0$, for $i=j$, $\widehat{Z}_{ij}\!=\!\mathrm{sign}(\widehat{B}_{ij})$ when $\widehat{B}_{ij}\neq 0$ and $|\widehat{Z}_{ij}|\leq 1$ when $\widehat{B}_{ij}=0$, for $i \ne j$.  

\end{lemma}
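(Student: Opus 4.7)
The plan is to decompose the objective in \eqref{eq: log-det} into its three constituent terms and verify convexity of each, then obtain uniqueness from strict convexity combined with an existence (coercivity) argument, and finally derive the stated first-order optimality condition via subdifferential calculus. For part (i), I would argue convexity piece-by-piece: (a) $\Tr(DBSBD) = \Vert S^{1/2}BD\Vert_F^2$ is the squared Frobenius norm of an affine function of $B$, hence convex; (b) $-\log\det(B^2) = -2\log\det B$ is a classical strictly convex function on the PD cone $\{B\succ 0\}$; and (c) $\lambda_n\Vert B\Vert_{1,\text{off}}$ is convex as a seminorm composed with a linear map. The sum is therefore convex on $\{B \succ 0\}$, and strictly convex thanks to (b).

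For part (ii), strict convexity gives uniqueness of the minimizer provided one exists. Existence follows from coercivity of the objective on the open cone: as $\lambda_{\min}(B) \to 0^+$ the $-\log\det$ term diverges to $+\infty$, keeping the minimum in the interior; and as $\Vert B\Vert_F \to \infty$, the quadratic term $\Vert S^{1/2}BD\Vert_F^2$ together with the $\ell_1$ penalty dominates the $-2\log\det B$ contribution (which decreases only logarithmically in the eigenvalues of $B$), so level sets are compact and a unique minimizer $\widehat{B}$ exists.

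For the subgradient condition, standard matrix calculus gives $\nabla_B \Tr(DBSBD) = D^2BS + SBD^2$ (the symmetrization of $2D^2BS$ arising from restricting to symmetric $B$) and $\nabla_B(-\log\det B^2) = -2B^{-1}$, while the subdifferential of the off-diagonal $\ell_1$ norm takes the standard form with $\widehat{Z}_{ii}=0$, $\widehat{Z}_{ij}=\mathrm{sign}(\widehat{B}_{ij})$ when $\widehat{B}_{ij}\ne 0$, and $|\widehat{Z}_{ij}|\le 1$ otherwise. Setting the total subdifferential at $\widehat{B}$ to zero and using symmetry of $\widehat{B}^{-1}$ and $\widehat{Z}$ forces $D^2\widehat{B}S = S\widehat{B}D^2$ at the optimum, which collapses the condition to the stated form $2D^2\widehat{B}S - 2\widehat{B}^{-1} + \lambda_n\widehat{Z} = 0$. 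The main obstacle I foresee is making the coercivity argument airtight in the high-dimensional regime $n<p$ where $S$ is rank-deficient: one must rule out unbounded sequences in $\{B \succ 0\}$ along which both the quadratic trace term and the $\ell_1$ penalty fail to grow. For diagonal-dominated growth this reduces to Schur-product positivity $D^2 \odot S \succ 0$ (which generically follows from $\mathrm{diag}(S)>0$), while for directions with off-diagonal mass the $\ell_1$ regularizer provides the required growth; the PD constraint on $B$ rules out the degenerate case $\mathrm{range}(B) \subseteq \ker(S)$.
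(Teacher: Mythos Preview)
Your plan is essentially the paper's: term-by-term convexity, strict convexity from $-2\log\det B$, then coercivity for existence. The main presentational difference is that the paper handles coercivity by passing to the Lagrangian-dual constrained form $\min\{\Tr(D^2BSB)-2\log\det B:\ B\succ0,\ \|B\|_{1,\mathrm{off}}\le \mathcal{C}_n\}$, so the off-diagonals are bounded by fiat and only diagonal blow-up must be controlled; it then combines Hadamard's inequality $\log\det B\le\sum_i\log B_{ii}$ with the lower bound $\|DBM\|_F^2\ge\sigma_{\min}(D)^2\sigma_{\min}(M^{\transpose})^2\|B\|_F^2$ (writing $S=MM^\transpose$) to show the quadratic diagonal term dominates the logarithmic one. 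Your direct Schur-product route is in fact more robust in the $n<p$ regime you worry about: the paper's $\sigma_{\min}(M^{\transpose})$ bound is vacuous when $S$ is rank-deficient, whereas restricting the trace term to diagonal $B=\mathrm{diag}(b)$ gives $b^\transpose(D^2\odot S)b\ge\lambda_{\min}(D^2)\min_iS_{ii}\,\|b\|^2>0$ whenever $\mathrm{diag}(S)>0$, which is exactly the fix needed.

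One genuine correction: your claim that symmetry of $\widehat B^{-1}$ and $\widehat Z$ ``forces $D^2\widehat{B}S=S\widehat{B}D^2$ at the optimum'' is false. The unconstrained gradient of $B\mapsto\Tr(D^2BSB)$ is $D^2BS+SBD^2=A+A^\transpose$ with $A=D^2BS$, which is symmetric automatically; the stationarity equation therefore imposes no additional constraint that would make $A$ itself symmetric. The paper simply writes $2D^2\widehat BS$ in the subgradient condition without justification, so this is a notational shortcut inherited from the statement rather than an identity you can derive by the argument you gave.
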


A few comments of Lemma \ref{lma: uniq soln} are in order. ({ \emph{Convexity}}) First, we recall that the compositions of two convex functions is in general not convex. As an example, consider two convex functions $f(x)=x^2$ and $g(x)=-x$, however, $g(f(x))=-x^2$ is not convex. Therefore, in light of the fact that the loss function is a composite function of $B$, it is not clear if \eqref{eq: log-det} is convex. Nonetheless, in Lemma \ref{lma: uniq soln} we prove that \eqref{eq: log-det} is a convex program. Key to our proof is the notion of monotone convex functions. (\emph{Uniqueness}) Second, the uniqueness result is non-trivial in high-dimensions $(n<p)$ because the Hessian is rank deficient, and hence, the loss function in \eqref{eq: log-det} might not be be strictly convex. However, in Lemma \ref{lma: uniq soln} (ii) we show that $\widehat{B}$ is unique. Key to our proof is the 
notion of coercivity and it adapts techniques in \cite{ravikumar2011high} to the case where the objective function is quadratic in the optimization variable. 

\subsection{Statement of Main result}\label{sec: Main results}
Our first result theoretically characterizes the performance of $\widehat{B}$ in \eqref{eq: log-det} when $Y$ is Gaussian. Furthermore, our second result provides such a characterization for $\widehat{B}$ when $Y$ is non-Gaussian. At a crude level, our results guarantee that when the number of samples $n$ scales as $d^2\log p$, our $\ell_1$-regularized MLE correctly recovers the support of $B^*$ and is close to $B^*$ (measured in Frobenius and operator norms) with high probability. Here, $d$ is the degree of the graph underlying $B^*$.

Since we consider an $\ell_1$ regularized log-determinant program for our ML estimator \eqref{eq: log-det}, our results might appear similar to that of 
\citep{ravikumar2011high}. However, as also pointed in Section \ref{sec: outline}, our main results, including the assumptions and sufficient conditions needed to derive them, are not subsumed by those in \citep{ravikumar2011high}, or vice versa (see Remark \ref{rmk: GLASSO comparison} and Section \ref{sec: experiments} for more thorough discussion).
 
We begin with the assumptions that are essential to prove our theoretical statements. Similar subset of assumptions in the context of $\ell_1$ regularized least squares problem appeared in \citep{wainwright2009sharp, tropp2006just,meinshausen2006high,zhao2006model}, and in the context of $\ell_1$ regularized inverse covariance estimation problem appeared in \cite{ravikumar2011high, zhang2014sparse}. We define the edge set $\mathcal{E}(B^{*}) = \{(i,j): B^{*}_{ij}\neq 0, \text{for all} \hspace{3px} i\neq j\}$. Let $E:=\{\mathcal{E}(B^{*})\cup (1,1)\ldots\cup (p,p)\}$ be the augmented set including the diagonal elements. Let $E^{c}$ be the complement of $E$.\\

\textbf{[A1] Mutual incoherence condition.}\label{mutual incoherence}  Let $\Gamma^{*}$ be the Hessian of the log-determinant function in \eqref{eq: log-det}: 
\begin{align}\label{eq: hessian log-det}
    \Gamma^{*} \triangleq \nabla^{2}_{B}\log\det(B)\vert_{B=B^{*}} = {{B^{*}}^{-1}}\otimes {{B^{*}}^{-1}}. 
\end{align}
For $\Gamma^{*}$ in \eqref{eq: hessian log-det}, there exists some $\alpha \in (0,1]$ such that 
    $\vertiii{\Gamma^{*}_{E^{c}E}(\Gamma^{*}_{EE})^{-1}}_{\infty}\leq 1-\alpha$. 

\textbf{[A2] Hessian regularity condition.} Let $d$ be the maximum number of non zero entries among all the rows in $B^*$ (i.e., the degree of the underlying graph), $\Theta^*=B^*\Sigma^{-1}_{X}B^*$, and $D^2=\Sigma^{-1}_{X}$. Then, 
\begin{align}\label{eq: Hessian regularity}
    \vertiii{{\Gamma^{*}}^{-1}}_{\infty}\leq \frac{1}{4d\Vert {\Theta^{*}}^{-1}\Vert_{\infty}\vertiii{D^{2}}_{\infty}}.
\end{align}

\textbf{[A3] Maximum row norm condition.}\label{eq: max row norm} There exists a constant $c>0$ for which 
$\vertiii{B^{*}}_{\infty}\geq c$, or equivalently, the spectral norm is bounded as $ \vertiii{B^{*}}_{2} \geq c/\sqrt{p}$. \\

A few comments are in order. [\textbf{A1}] Our novel incoherence condition on $B^*$ regulates the influence of irrelevant variables (elements of Hessian restricted to $E^c\times E$) on relevant variables (elements of Hessian restricted to $E\times E$). The $\alpha$-incoherence assumption of the above type is standard in literature, and \citep{ravikumar2011high} demonstrates its validity for several graphs, including chain and grid graphs, which we will explore in experimental section. Notice that the $\alpha$-incoherence in \citep{ravikumar2011high} is imposed on $\Theta^*$. Instead, we require it on $B^*$. [\textbf{A2}] This condition is in parallel with bounding the maximum eigenvalue of ${\Theta^{*}}^{-1}$ condition for estimating sparse $\Theta^*$ (see for e.g., \citep{rothman2008sparse,lam2009sparsistency}).
[\textbf{A3}] This is a type of signal-to-noise ratio condition and is unique to our problem. It says that the energy (measured in 2-norm ) in the signal $B^*$ should be greater than a certain threshold. 

Our problem set-up assumes that the node potentials $Y_{i}$, $i\in [p]$, are Gaussian. However, we work with sub-Gaussian distributions, a natural generalization to the Gaussian case, which encompasses many well known distributions that occur in practice (for e.g., bounded random variables, gaussians and mixture of gaussians). We define this distributional assumption below.    
\begin{definition}(Sub-Gaussian random variable) A zero mean random variable $Z$ is said to be sub-Gaussian if there exists a constant $\sigma>0$ such that for any $t\in \mathbb{R}$, 
\(\mathbb{E}[\exp(tZ)]\leq \exp\left({{\sigma^{2}t^{2}}/{2}}\right)\). 
\end{definition}

Our first main result below provides sufficient conditions on the number of samples $n$ needed for $\widehat{B}$ in \eqref{eq: log-det} to exactly recover the sparsity structure of $B^*$ and to achieve sign consistency, defined as $\text{sign}(\widehat{B}_{ij}) = \text{sign}(B^{*}_{ij})$, for all $(i,j)\in E$. We recall that  $\nu_{A} = \vertiii{A}_{\infty} \triangleq \max_{j=1,\ldots,p}\sum_{j=1}^{p}\vert A_{ij}\vert$ and define $\Sigma^{*} = {\Theta^{*}}^{-1}$ to be the covariance matrix of the node potential $Y$. 

\begin{theorem}{(Support Recovery: Sub-Gaussian)}\label{thm: sub-Gaussian support recovery} Let $Y=(Y_1,\ldots,Y_p)$ be the node potential vector. 
Suppose that $Y_{i}/\sqrt{\Sigma^{*}_{ii}}$ is sub-Gaussian with parameter $\sigma$  and assumptions [\textbf{A1-A3}] hold. Let the regularization parameter $\lambda_{n} = C_{0}\sqrt{\tau(\log 4p)/n}$, where $C_0$ is given below. If the sample size 
$n>C^2_{1}d^{2}(\tau\log p+\log 4)$, the following hold
with probability at least $1-\frac{1}{p^{\tau-2}}$, for some $\tau>2$:
\begin{enumerate}[label=(\alph*)]
\item  $\widehat{B}$ exactly recovers the sparsity structure of $B^{*}$; that is, $\widehat{B}_{E^{c}} = 0$,
\item $\widehat{B}$ satisfies the element-wise $\ellinf$ bound $\Vert \widehat{B} - B^{*}\Vert_{\infty}\leq C_{2}\sqrt{\frac{\tau\log p+\log 4}{n}}$, and
\item $\widehat{B}$ satisfies sign consistency if $\vert B^{*}_{\min} \vert\geq 2C_{2}\sqrt{\frac{\tau\log p + 4}{n}}$, where $B^{*}_{\min}\triangleq \min_{(i,j)\in \mathcal{E}(B^{*})}\vert B^{*}_{ij}\vert$, 
\end{enumerate}
where $C_{1}=192\sqrt{2}\left[(1+4\sigma^{2})\max\limits_{i}(\Sigma^{*}_{ii})\nu_{D^{2}}\nu_{B^{*}}\right]\max\{\nu_{{\Gamma^{*}}^{-1}}\nu_{{B^{*}}^{-1}},2\nu^{2}_{{\Gamma^{*}}^{-1}}\nu^{3}_{{B^{*}}^{-1}},2\alpha^{-1}d^{-1}\}$, $C_{2} = [64\sqrt{2}(1+4\sigma^{2})\max\limits_{i}(\Sigma^{*}_{ii})\nu_{{\Gamma^{*}}^{-1}}\nu_{D^{2}}\nu_{B^{*}}]$, and $C_{0} = C_{2}/(4\nu_{{\Gamma^{*}}^{-1}})$.
\end{theorem}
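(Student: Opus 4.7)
The overall strategy is a primal--dual witness construction, analogous to the approach used for the graphical LASSO in \cite{ravikumar2011high} but adapted to the \emph{quadratic-in-$B$} log-determinant objective in \eqref{eq: log-det}. The first ingredient is a concentration step: since $Y_i/\sqrt{\Sigma^{*}_{ii}}$ is sub-Gaussian with parameter $\sigma$, a standard Bernstein-type bound for quadratic forms yields $\|S-\Sigma^{*}\|_{\infty}\lesssim (1+4\sigma^{2})\max_{i}\Sigma^{*}_{ii}\sqrt{\tau\log p/n}$ with probability at least $1-p^{-(\tau-2)}$ under the sample-size condition of the theorem. This is the only probabilistic step; the rest of the argument is deterministic on the event that this tail bound holds.

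Next I construct a candidate $\widetilde B$ by minimizing \eqref{eq: log-det} subject to the side constraint $B_{E^{c}}=0$; by Lemma~\ref{lma: uniq soln} this restricted program has a unique minimizer satisfying $2D^{2}\widetilde B S-2\widetilde B^{-1}+\lambda_{n}\widetilde Z=0$ with $\widetilde Z_{E}=\mathrm{sign}(\widetilde B_{E})$. I then \emph{define} $\widetilde Z_{E^{c}}$ so that this stationarity equation also holds on $E^{c}$. The goal is to establish strict dual feasibility, $\|\widetilde Z_{E^{c}}\|_{\infty}<1$, which, again by Lemma~\ref{lma: uniq soln}, forces $\widetilde B=\widehat B$ and yields support recovery in part~(a). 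Writing $\Delta=\widetilde B-B^{*}$ and $W=S-\Sigma^{*}$, the population identity $D^{2}B^{*}\Sigma^{*}=(B^{*})^{-1}$ allows me to subtract the population gradient and linearize the matrix inverse $(B^{*}+\Delta)^{-1}$ by a Neumann series, producing the vectorized equation
\[
\Gamma^{*}\mvec(\Delta)+\mvec\!\bigl(D^{2}\Delta\Sigma^{*}\bigr)=-\mvec\!\bigl(D^{2}B^{*}W\bigr)-\tfrac{\lambda_{n}}{2}\mvec(\widetilde Z)+\mvec(R(\Delta)),
\]
in which $R(\Delta)$ collects the higher-order inverse-expansion terms together with the bilinear coupling $D^{2}\Delta W$.

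Restricting the equation above to rows indexed by $E$, inverting $\Gamma^{*}_{EE}$, and taking the $\ell_{\infty}$ norm yields the key bound $\|\Delta_{E}\|_{\infty}\le \nu_{{\Gamma^{*}}^{-1}}\bigl(\nu_{D^{2}}\nu_{B^{*}}\|W\|_{\infty}+\lambda_{n}/2+\|R(\Delta)\|_{\infty}\bigr)$. Assumption~[\textbf{A2}] controls $\nu_{{\Gamma^{*}}^{-1}}$ against the quadratic remainder, so a bootstrap/continuity argument (showing that $\|\Delta_{E}\|_{\infty}$ cannot escape a critical radius on which $R(\Delta)$ stays subdominant) closes this inequality and gives the element-wise bound of part~(b) with the stated constant $C_{2}$. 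Feeding this bound back into the $E^{c}$-block of the linearized stationarity equation and invoking the incoherence assumption~[\textbf{A1}], namely $\vertiii{\Gamma^{*}_{E^{c}E}(\Gamma^{*}_{EE})^{-1}}_{\infty}\le 1-\alpha$, produces $\|\widetilde Z_{E^{c}}\|_{\infty}\le 1-\alpha/2<1$ once $\lambda_{n}=C_{0}\sqrt{\tau\log(4p)/n}$ is tuned large enough (via $C_{0}=C_{2}/(4\nu_{{\Gamma^{*}}^{-1}})$) to dominate the perturbation terms. Part~(c) is then immediate: an $\ell_{\infty}$ perturbation strictly smaller than $|B^{*}_{\min}|/2$ cannot flip the sign of any on-support entry.

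The main obstacle, and the principal departure from the GLASSO analysis, is the coexistence of the two linear pieces $\Gamma^{*}\mvec(\Delta)$ and $\mvec(D^{2}\Delta\Sigma^{*})$: the trace term in \eqref{eq: log-det} is quadratic in $B$ and thus produces an extra Hessian block beyond the log-det Hessian defined in assumption~[\textbf{A1}], which must be absorbed into the remainder or into the effective row-wise estimates. A second delicate point is the bilinear coupling $D^{2}\Delta W$ between the perturbation and the sample noise; its control requires the row-norm quantities $\nu_{D^{2}}$, $\nu_{B^{*}}$, and $\nu_{{B^{*}}^{-1}}$ together with the sample-size scaling $n=\Omega(d^{2}\log p)$ (via the constant $C_{1}$) to ensure that $R(\Delta)$ stays of strictly smaller order than the dominant $\sqrt{\log p/n}$ noise level throughout the bootstrap window.
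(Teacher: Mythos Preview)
Your proposal is essentially correct and tracks the paper's own argument closely: primal--dual witness on the restricted problem, sub-Gaussian concentration for $\|W\|_\infty$, linearization of the stationarity condition via a Neumann expansion of $(B^*+\Delta)^{-1}$, a self-consistency bound on $\|\Delta_E\|_\infty$, and then strict dual feasibility via [\textbf{A1}].

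Two points of imprecision are worth flagging. First, the paper does \emph{not} fold $D^{2}\Delta W$ or $D^{2}\Delta\Sigma^{*}$ into $R(\Delta)$; it keeps all four pieces $D^{2}\Delta W$, $D^{2}B^{*}W$, $D^{2}\Delta{\Theta^{*}}^{-1}$, and $R(\Delta)$ separate in the sufficient condition (Lemma~\ref{eq: suff cond}), and each is shown to be $\le \alpha\lambda_n/24$ individually. Your displayed bound $\|\Delta_E\|_\infty\le \nu_{{\Gamma^*}^{-1}}(\nu_{D^2}\nu_{B^*}\|W\|_\infty+\lambda_n/2+\|R(\Delta)\|_\infty)$ is obtained by inverting $\Gamma^{*}_{EE}$ alone, but your own linearized equation still carries $\mvec(D^2\Delta\Sigma^*)$ on the left; that term has to be moved to the right and controlled, and this is precisely where assumption [\textbf{A2}] enters (the paper calls this piece $T_4$ in the proof of Lemma~\ref{lma: control of delta} and bounds it by $r/4$ directly from \eqref{eq: Hessian regularity}). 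You acknowledge this obstacle qualitatively but the bound as written is incomplete without it. Second, your ``bootstrap/continuity'' step is instantiated in the paper as a Brouwer fixed-point argument: one defines $F(\bar\Delta_E)=-(\Gamma^{*}_{EE})^{-1}\bar G(\Delta_E+B^*_E)+\bar\Delta_E$, shows $F$ maps the $\ell_\infty$-ball of radius $r=4\nu_{{\Gamma^*}^{-1}}(\nu_{D^2}\nu_{B^*}\|W\|_\infty+\lambda_n/2)$ into itself by bounding four terms $T_1,\dots,T_4$ each by $r/4$, and concludes the unique fixed point $\bar\Delta_E$ lies in that ball. This is the concrete mechanism that closes the self-referential inequality you describe.
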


The quantities $(\nu_{{\Gamma^{*}}^{-1}},\nu_{D^{2}},\nu_{B^{*}},\nu_{{B^{*}}^{-1}})$ capture the inherent complexity of the model and do not depend on the number of samples $n$. As long as the magnitude of the entries in ${\Gamma^{*}}^{-1}, D^{2}$, and $B^{*}$ scale as $\mathcal{O}(1/d)$, the model complexity parameters do not depend on $(p,d)$. {That is, as the size of the network grows with $(p,d)$ the edge strengths decay with $d$}. Suppose that the model complexity parameters are constants and that $n=\Omega(d^{2}\log p)$. Then part (a) of Theorem \ref{thm: sub-Gaussian support recovery} guarantees that our ML estimator does not falsely include entries (or edges in the underlying graph) that are not in the support of $B^*$. Part (b) establishes the element-wise $\ellinf$ norm consistency of $\widehat{B}$; that is, $\Vert \widehat{B}-B^{*}\Vert_{\infty} = \mathcal{O}(\sqrt{(\log p)/n})$. Finally, part (c) establishes sign consistency of $\widehat{B}$, and hence, our estimator does not falsely exclude entries that are in the support of $B^*$. Crucial is the requirement of $\vert B^{*}_{\min}\vert = \Omega\left(\sqrt{(\log p)/n}\right)$, which puts a limit on the minimum (in absolute) value of the entries in $B^*$. This condition parallels the familiar \emph{beta-min} condition in the LASSO literature (see \citep{wainwright2009sharp, van2008high}). 
 
We now present a corollary to Theorem \ref{thm: sub-Gaussian support recovery} that gives consistency rates of convergence for $\widehat{B}$ in the Frobenius and operator norms. Let $\mathcal{E}(B^{*}) = \{(i,j): B^{*}_{ij}\neq 0, \text{for all} \hspace{3px} i\neq j\}$ be the edge set of $B^{*}$. 
\begin{corollary}\label{cor: corollary1}
Let $s = \vert\mathcal{E}(B^{*})\vert$ be the cardinality of $\mathcal{E}(B^{*})$. Under the same hypotheses in Theorem \ref{thm: sub-Gaussian support recovery}, with probability greater than $1-\frac{1}{p^{\tau-2}}$, the estimator $\widehat{B}$ satisfies 
\begin{align*}
    \Vert \widehat{B} - B^{*}\Vert_{F} &\leq C_{2}\sqrt{\frac{(s+p)(\tau\log p + 4)}{n}} \,\, \text{ and }\,\,
    \Vert{\widehat{B} - B^{*}}\Vert_{2} \leq C_{2}\min\{d,\sqrt{s+p}\}\sqrt{\frac{\tau\log p + 4}{n}}. 
\end{align*}
\end{corollary}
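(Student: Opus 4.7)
The proof is a direct consequence of parts (a) and (b) of Theorem~\ref{thm: sub-Gaussian support recovery}, with the key observation being that the error matrix $\Delta \defeq \widehat{B} - B^{*}$ inherits the sparsity of $B^{*}$. The plan is as follows.

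First, I would invoke Theorem~\ref{thm: sub-Gaussian support recovery}(a), which asserts that $\widehat{B}_{E^{c}} = 0$ with probability at least $1 - p^{-(\tau-2)}$. Since $B^{*}_{E^{c}} = 0$ by definition of $E$, the difference $\Delta$ is supported entirely on the augmented edge set $E = \mathcal{E}(B^{*}) \cup \{(1,1), \ldots, (p,p)\}$, whose cardinality is exactly $s + p$ (recall that $\mathcal{E}(B^{*})$ counts both $(i,j)$ and $(j,i)$ since $B^{*}$ is symmetric, so $s = |\mathcal{E}(B^{*})|$ and $|E| = s + p$). On the same high-probability event, part (b) of Theorem~\ref{thm: sub-Gaussian support recovery} yields $\|\Delta\|_{\infty} \le C_{2}\sqrt{(\tau \log p + 4)/n}$.

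For the Frobenius norm, the bound follows from the simple inequality $\|\Delta\|_{F}^{2} = \sum_{(i,j) \in E} \Delta_{ij}^{2} \le |E| \cdot \|\Delta\|_{\infty}^{2} = (s+p)\,\|\Delta\|_{\infty}^{2}$. Plugging in the bound from part (b) and taking square roots yields the claimed Frobenius rate.

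For the operator norm, I would combine two distinct bounds and take the minimum. On one hand, for any real matrix we have $\|\Delta\|_{2} \le \|\Delta\|_{F}$, which gives the $\sqrt{s+p}$ factor immediately from the preceding step. On the other hand, since $\Delta$ is symmetric, one has $\|\Delta\|_{2} \le \vertiii{\Delta}_{\infty}$ (using that for symmetric matrices the operator norm is bounded by the maximum absolute row sum); and by assumption~[\textbf{A2}] each row of $B^{*}$ (and therefore of $\Delta$, thanks to support recovery) has at most $d$ nonzero entries. Hence $\vertiii{\Delta}_{\infty} \le d\,\|\Delta\|_{\infty}$, which supplies the $d$ factor. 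Taking the minimum of the two bounds yields the stated operator-norm rate.

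There is essentially no technical obstacle here; all the heavy lifting occurs in Theorem~\ref{thm: sub-Gaussian support recovery}. The only point that warrants a moment's care is the bookkeeping $|E| = s + p$, which relies on $s$ already counting each undirected edge twice in the symmetric setting, and the symmetric-matrix operator-norm bound $\|\Delta\|_{2} \le \vertiii{\Delta}_{\infty}$ used to convert the $\ell_{\infty}$ control into a spectral guarantee.
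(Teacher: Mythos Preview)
Your proposal is correct and follows essentially the same argument as the paper: invoke parts (a) and (b) of Theorem~\ref{thm: sub-Gaussian support recovery} to obtain a sparse error matrix supported on $E$ with $|E|=s+p$, then use $\|\Delta\|_F^2\le (s+p)\|\Delta\|_\infty^2$ for the Frobenius bound and combine $\|\Delta\|_2\le\|\Delta\|_F$ with $\|\Delta\|_2\le\vertiii{\Delta}_\infty\le d\|\Delta\|_\infty$ for the operator-norm bound. The only cosmetic difference is that the paper splits the Frobenius sum explicitly into diagonal and off-diagonal parts rather than writing $|E|=s+p$ directly, but the content is identical.
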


\emph{Proof sketch}. Both the Frobenius and operator norm bounds follows by applying standard matrix norm inequalities to the $\ellinf$ consistency bound in part (b) of Theorem \ref{thm: sub-Gaussian support recovery}. Importantly, $s+p$ is the bound on the maximum number of non-zero entries in $B^*$, where $s$, by definition, is the total number of off-diagonal non-zeros in $B^*$. Complete details are provided in the Appendix.\\

Thus far we have assumed that the nodal potentials $Y_{i}$ are sub-Gaussian random variables. We now explore another broad class of random variables with bounded $k^{\text{th}}$ moments, which are known to have tails that decay according to some power law \citep{petrov1995limit}. An important example of power law distributions are Pareto distributions which finds applications in a wide variety of areas \citep{newman2005power,milojevic2010power}. Motivated by such important practical considerations, we state our next result for random variables with bounded moments. We begin with the following definition.
\begin{definition}(Bounded moments)\label{def: Bounded moments} A random variable $Z$ is said to have bounded $4k^{\text{th}}$ moment if there exists a constant $M_{k}\in \mathbb{R}$ such that 
    $\mathbb{E}\left[(Z)^{4k}\right]\leq M_{k}$. 
\end{definition}

Results below parallel Theorem \ref{thm: sub-Gaussian support recovery} and Corollary \ref{cor: corollary1} for random variables with bounded moments. 

\begin{theorem}(Support Recovery: Bounded Moments)\label{thm: bounded moment support recovery} Let $Y=(Y_1,\ldots,Y_p)$ be the node potential vector. Suppose that $Y_{i}/\sqrt{\Sigma^{*}_{ii}}$ has bounded moment as in Definition \ref{def: Bounded moments} and assumptions [\textbf{A1-A3}] hold. Let the regularization parameter $\lambda_{n}= C_{0}\sqrt{\tau(\log 4p)/n}$, with $C_{0}$ defined in Theorem \ref{thm: sub-Gaussian support recovery}. If the sample size  $n > C_{4}d^{2}p^{\tau/k}$. Then with probability greater than $1-{1}/{p^{\tau-2}}$, for some $\tau>2$, the following hold: (a) $\widehat{B}$ exactly recovers the sparsity structure of $B^{*}$ (that is $\widehat{B}_{E^{c}} = 0$); (b) the element-wise $\ellinf$ bound $\Vert \widehat{B} - B^{*}\Vert_{\infty}\leq C_{5}\sqrt{\frac{p^{\tau/k}}{n}}$; and (c) $\widehat{B}$ satisfies sign consistency if $\vert B^{*}_{\min}\vert\geq 2C_{5}\sqrt{\frac{p^{\tau/k}}{n}}$. 
 
\end{theorem}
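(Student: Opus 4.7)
The plan is to mirror the proof of Theorem~\ref{thm: sub-Gaussian support recovery} and isolate the single step in which the sub-Gaussian assumption was used, namely the tail bound on $\Vert S - \Sigma^*\Vert_{\infty}$. Everything else in Theorem~\ref{thm: sub-Gaussian support recovery}---the primal-dual witness construction built on the subgradient optimality condition in Lemma~\ref{lma: uniq soln}, the Taylor expansion of the matrix $\log\det$, and the use of assumptions [\textbf{A1}]--[\textbf{A3}] to control remainder terms---is a deterministic argument. I would first refactor that deterministic part into a master lemma: \emph{conditional} on an event $\mathcal{A}(\delta_n) \defeq \{\Vert S - \Sigma^*\Vert_{\infty} \le \delta_n\}$, for any $\delta_n$ small enough that $d\, \delta_n$ fits into the incoherence/Hessian-regularity budget, the conclusions $\widehat{B}_{E^c}=0$, $\Vert\widehat{B}-B^*\Vert_\infty \le C\, \delta_n$, and sign-consistency (under $|B^*_{\min}| \ge 2C\delta_n$) all follow with $\lambda_n \asymp \delta_n$. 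Theorem~\ref{thm: sub-Gaussian support recovery} is then the instantiation $\delta_n = c\sqrt{\tau\log p/n}$; Theorem~\ref{thm: bounded moment support recovery} is the instantiation $\delta_n = c\sqrt{p^{\tau/k}/n}$.

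The second step is the new probabilistic ingredient: a polynomial tail bound for $\Vert S - \Sigma^*\Vert_{\infty}$ under Definition~\ref{def: Bounded moments}. Fix $(i,j)$ and write $S_{ij}-\Sigma^*_{ij} = n^{-1}\sum_{\ell=1}^n W_\ell$ with $W_\ell = Y^{(\ell)}_i Y^{(\ell)}_j - \Sigma^*_{ij}$, where the $W_\ell$ are i.i.d.\ zero-mean. Cauchy-Schwarz together with the bounded-$4k$th-moment hypothesis on the normalized coordinates $Y_i/\sqrt{\Sigma^*_{ii}}$ gives $\mathbb{E}[W_\ell^{2k}] \le C(M_k)\,\big(\max_i \Sigma^*_{ii}\big)^{2k}$. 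A Marcinkiewicz--Zygmund (or Rosenthal) moment inequality then yields
\[
\mathbb{E}\!\left[\bigl|S_{ij}-\Sigma^*_{ij}\bigr|^{2k}\right] \le \frac{C_k\, M_k \,(\max_i \Sigma^*_{ii})^{2k}}{n^{k}},
\]
and Markov's inequality followed by a union bound over the $p^2$ pairs gives
\[
\mathbb{P}\!\left(\Vert S-\Sigma^*\Vert_\infty \ge t\right) \le \frac{C'_k\, p^2}{n^{k} t^{2k}}.
\]
Setting $t = c\sqrt{p^{\tau/k}/n}$ makes the right-hand side at most $p^{-(\tau-2)}$, so $\mathcal{A}(\delta_n)$ holds with the claimed probability when $\delta_n = c\sqrt{p^{\tau/k}/n}$.

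Finally, I would combine the two. The sample-complexity requirement $n > C_4 d^2 p^{\tau/k}$ is exactly the condition that $d\,\delta_n$ is below the threshold dictated by the incoherence parameter $\alpha$ and the norms $\nu_{{\Gamma^*}^{-1}}, \nu_{D^2}, \nu_{B^*}, \nu_{{B^*}^{-1}}$ appearing in Theorem~\ref{thm: sub-Gaussian support recovery}; the regularization choice $\lambda_n = C_0\sqrt{\tau(\log 4p)/n}$ carries over because the deterministic master lemma only asks that $\lambda_n$ dominate $\delta_n$ up to constants absorbed into $C_0$. The $\ell_\infty$ bound $\Vert\widehat{B}-B^*\Vert_\infty \le C_5 \sqrt{p^{\tau/k}/n}$ and the beta-min condition in (c) then drop out by substitution. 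I expect the main obstacle to be the bookkeeping in step one: cleanly extracting the deterministic master lemma from the existing proof so that the only place the distribution of $Y$ enters is through $\delta_n$, and tracking how the constants $(M_k, \max_i \Sigma^*_{ii}, \nu_{D^2}, \nu_{B^*}, \ldots)$ propagate into $C_4$ and $C_5$; the moment concentration itself is a standard application of Rosenthal's inequality.
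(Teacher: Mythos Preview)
Your proposal is correct and follows essentially the same route as the paper. The paper explicitly isolates the deterministic argument into a ``master lemma'' (Lemma~A.2 in the appendix), which gives support recovery and the $\ell_\infty$ bound conditional on an abstract tail condition $\mathcal{T}(f,v_*)$, and then instantiates it with the polynomial tail bound $\mathbb{P}[|S_{ij}-\Sigma^*_{ij}|>\delta] \le c_* n^{-k}\delta^{-2k}$ (quoted from \citep{ravikumar2011high} as Lemma~A.3, whose proof is precisely the Rosenthal/Marcinkiewicz--Zygmund + Markov argument you sketch); the computation of $n_f(\delta,p^\tau)$ and $\delta_f(n,p^\tau)$ from this tail function then yields $C_4$ and $C_5$ exactly as you anticipate.
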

The constants and their dependence on the model complexity parameters are given in the Appendix.
 
\begin{corollary}\label{cor: corollary2}
Suppose the hypotheses in Theorem \ref{thm: bounded moment support recovery} hold.
Then with probability greater than $1-\frac{1}{p^{\tau-2}}$:  $\Vert \widehat{B} - B^{*}\Vert_{F} \leq C_{5}\sqrt{\frac{(s+p)(p^{\tau/k})}{n}}$ and $
    \Vert{\widehat{B} - B^{*}}\Vert_{2}\leq C_{5}\min\{d,\sqrt{s+p}\}\sqrt{\frac{p^{\tau/k}}{n}}$, where $s = \vert\mathcal{E}(B^{*})\vert$.

\end{corollary}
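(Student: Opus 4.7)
The plan is to piggyback on Theorem \ref{thm: bounded moment support recovery} in exactly the way Corollary \ref{cor: corollary1} piggybacks on Theorem \ref{thm: sub-Gaussian support recovery}: once the element-wise $\ellinf$ bound and the exact support recovery are in hand, the Frobenius and operator-norm guarantees reduce to counting non-zero entries in $\widehat{B}-B^*$ and applying standard matrix-norm inequalities.

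First I would condition on the event that parts (a) and (b) of Theorem \ref{thm: bounded moment support recovery} both hold; since each individually holds with probability at least $1-1/p^{\tau-2}$, and in fact they are established on a single event of that probability in the theorem, any deterministic bound derived on this event inherits the same probabilistic guarantee. On this event, $\widehat{B}_{E^c}=0$, which combined with $B^*_{E^c}=0$ gives $(\widehat{B}-B^*)_{E^c}=0$. Writing $\Delta \defeq \widehat{B}-B^*$, the error matrix is therefore supported on the augmented edge set $E$, whose cardinality is $s+p$ (the $s$ off-diagonal non-zeros of $B^*$ plus the $p$ diagonal indices, which lie in $E$ by definition).

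From here the Frobenius bound is immediate: since $\Delta$ has at most $s+p$ non-zero entries,
\[
\Vert \Delta \Vert_F^2 \;=\; \sum_{(i,j)\in E} \Delta_{ij}^2 \;\leq\; (s+p)\,\Vert \Delta\Vert_\infty^2,
\]
and substituting the bound $\Vert \Delta\Vert_\infty \leq C_5 \sqrt{p^{\tau/k}/n}$ from Theorem \ref{thm: bounded moment support recovery}(b) yields $\Vert \Delta\Vert_F \leq C_5 \sqrt{(s+p)\,p^{\tau/k}/n}$. For the operator norm I would use two complementary inequalities and take the minimum. Since $\Delta$ is symmetric, $\Vert \Delta\Vert_2 \leq \vertiii{\Delta}_\infty$, and because each row of $\Delta$ is supported on at most $d$ indices (the degree bound from assumption \textbf{[A2]} applied to $B^*$, together with $\widehat{B}$ and $B^*$ sharing the same support on this event), $\vertiii{\Delta}_\infty \leq d\,\Vert\Delta\Vert_\infty$. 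The universal inequality $\Vert \Delta\Vert_2 \leq \Vert \Delta\Vert_F$ combined with the Frobenius bound gives the alternative rate $\sqrt{s+p}\,\Vert \Delta\Vert_\infty$, and taking the minimum produces the claimed bound.

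I do not anticipate any genuine obstacle: the corollary is a deterministic conversion of the entry-wise $\ellinf$ guarantee into Frobenius and operator-norm guarantees, and contains no additional probabilistic or optimization-theoretic content beyond what Theorem \ref{thm: bounded moment support recovery} already supplies. The only mild subtleties worth flagging in the write-up are (i) being careful that the diagonal is counted, so the sparsity of $\Delta$ is $s+p$ rather than $s$, and (ii) noting that the $d\,\Vert\Delta\Vert_\infty$ bound on $\vertiii{\Delta}_\infty$ requires exact support recovery (part (a) of the theorem); without it one could only use $\sqrt{s+p}\,\Vert\Delta\Vert_\infty$, losing the potential improvement when $d \ll \sqrt{s+p}$.
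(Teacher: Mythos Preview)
Your proposal is correct and mirrors the paper's own argument essentially verbatim: the paper proves Corollary~\ref{cor: corollary2} by pointing to the proof of Corollary~\ref{cor: corollary1}, which in turn counts the $s+p$ non-zero entries of $\widehat{B}-B^{*}$ (using the exact support recovery from part (a)) to pass from the $\ellinf$ bound to the Frobenius bound, and then combines $\Vert\cdot\Vert_2\leq\vertiii{\cdot}_\infty\leq d\Vert\cdot\Vert_\infty$ with $\Vert\cdot\Vert_2\leq\Vert\cdot\Vert_F$ to obtain the operator-norm bound. Your flagged subtleties (i) and (ii) are exactly the points the paper's proof relies on.
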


Interpretations of Theorem \ref{thm: sub-Gaussian support recovery} and Corollary \ref{cor: corollary1} also hold for Theorem \ref{thm: bounded moment support recovery} and Corollary \ref{cor: corollary2}. However, in this setting, we have different sample size $n=\Omega(d^{2}p^{\tau/k})$ and $\vert B^{*}_{\min}\vert =\Omega(\sqrt{p^{\tau/k}/n})$, where $k$ is given by Definition \ref{def: Bounded moments}. In contrast, for sub-Gaussian case we have logarithmic dependence in $p$ (the number of vertices). Finally, albeit fundamentally different from GLASSO estimator, we were able to obtain consistency rates for  
$\widehat{B}$ \eqref{eq: log-det} that are similar to those in \citep{ravikumar2011high, cai2011constrained}. 
\smallskip 
\begin{remark}\label{rmk: GLASSO comparison}\emph{(Comparison with the GLASSO estimator).} For simplicity, suppose that $\Sigma_{X}$ is diagonal. Then, it follows that $B^{*}\succ0$ is the unique square root of $\Theta^{*} = (B^{*})^{2}$. Thus, a na\"{i}ve way to estimate $B^*$ is by taking the square root of the GLASSO estimate $\widehat{\Theta}$. Let us call this estimator $\widehat{B}_{SR}$ and note that $\widehat{B}_{SR}$ inherits its optimal properties from $\widehat{\Theta}$. We show that $\widehat{\Theta}$ has sub-optimal estimation rate than $\widehat{B}$ in \eqref{eq: log-det} for estimating $B^*$. Let $B^{*}$ contains $d$ non-zero elements in every row. Then the underlying graph of $\Theta^{*}$ is a two-hop network with degree $d^{4}$. Using sample complexity results from \citep{ravikumar2011high}, it follows that $\widehat{\Theta}$ requires $n=\Omega(d^{4}\log p)$ to estimate $B^{*}$. Instead, our $\ell_1$-regularized MLE requires $n=\Omega(d^{2}\log p)$ samples. This reduction is more pronounced for networks with a large degree $d$. \qed 
\end{remark}

\subsection{Outline of Main Analysis}\label{sec: outline}
We provide an outline of our methods and main strategies to prove Theorem \ref{thm: sub-Gaussian support recovery}. We employ the \emph{primal-dual witness technique}---a well-known method used to derive statistical guarantees for sparse convex estimators \citep{wainwright2009sharp, wainwright2019high,loh2017support}.  This technique involves constructing a primal-dual pair $(\widetilde{B},\widetilde{Z})$ satisfying the zero-subgradient condition of the convex problem in \eqref{eq: log-det}, such that (the primal) $\widetilde{B}$ has the correct (signed) support. Suppose this construction succeeds, from the uniqueness result in Lemma \ref{lma: uniq soln}, it follows that $\widehat{B}=\widetilde{B}$, and the dual $\widetilde{Z}$ is an optimal solution to the dual of \eqref{lma: uniq soln}. Thus, at the heart of our analysis is in showing that the primal-dual construction succeeds with high-probability. Similar technique is also used to prove Theorem \ref{thm: bounded moment support recovery} (i.e., the non-Gaussian case); see Appendix. 

While our proof methods are inspired from \citep{ravikumar2011high, wainwright2009sharp}, our analysis is more involved due to the presence of $B^2$, as opposed to $B$, in the loss function of \eqref{eq: log-det}. Consequently, we require more nuanced assumptions (as in \textbf{[A2]-[A3]}) and dual feasiblity condition than the ones in \citep{ravikumar2011high} (see below).
 
\subsection{Primal-dual pair and supporting lemmas}\label{sec: PDW}
We briefly introduce the primal-dual witness construction. In Lemma \ref{eq: suff cond}, we provide sufficient conditions under which this construction succeeds.

We construct the primal-dual pair $(\widetilde{B},\widetilde{Z})$ as follows. The primal solution $\widetilde{B}$ is determined by solving
\begin{align}\label{eq: rest log-det}
    \widetilde{B} \triangleq \argmin_{B=B^{T},B\succ 0,B_{E^{c}}=0}\left[\Tr(DBSBD) - \log\det(B^{2})+\lambda_{n}\Vert B\Vert_{1,\text{off}}\right]. 
\end{align}
Here \eqref{eq: rest log-det} is a restricted problem in that we impose ${B}_{E^{c}} = 0$. Also, we have $\widetilde{B}\succ 0$ and $\widetilde{B}_{E^{c}}=0$. The dual $\widetilde{Z}\in \partial \Vert \widetilde{B}\Vert_{1,\text{off}}$ is chosen such that it satisfies the zero-subgradient condition of \eqref{eq: rest log-det}. This is obtained by setting $2\lambda_{n}\widetilde{Z}_{ij} = [\widetilde{B}^{-1}]_{ij} - [D^{2}\widetilde{B}S]_{ij}$, 
for all $(i,j)\in E^{c}$. It can be verified that $(\widetilde{B},\widetilde{Z})$ satisfies the zero-subgradient condition (see the statement of Lemma \ref{lma: uniq soln}) of the original problem in \eqref{eq: log-det}. Thus, it remains to establish the strict dual feasibility condition; that is $\vert \widetilde{Z}_{ij}\vert<1$, for any $(i,j)\in E^{c}$. 

We introduce some notation. Let $W\triangleq S-{\Theta^{*}}^{-1}$, where $S$ is the sample covariance and ${\Theta^{*}}^{-1}$ is the true covariance of $Y$. Let $\Delta \triangleq \widetilde{B}-B^{*}$ be a measure of distortion between the primal solution $\widetilde{B}$ as defined in equation \eqref{eq: rest log-det} and the true matrix to be estimated $B^{*}$.  We also need the 
the higher order terms of the Taylor expansion of the gradient $\nabla \log\det(\widetilde{B})$ centered around $B^{*}$ \citep{Boyd}:  
\begin{align}
  \nabla \log\det(\widetilde{B})&=  {B^{*}}^{-1} + {B^{*}}^{-1}\Delta {B^{*}}^{-1}+\underbrace{\widetilde{B}^{-1}-{B^{*}}^{-1} - {B^{*}}^{-1}\Delta {B^{*}}^{-1}}_{\triangleq R(\Delta)}. \label{eq: reminder term}
\end{align}

\begin{lemma}\label{eq: suff cond}(Sufficient conditions for strict dual feasibility) Let the regularization parameter $\lambda_n>0$ and $\alpha$ be defined as in {\bf [A1]}. Suppose the following holds 
\begin{align}\label{eq: strict duality conditions}
    \max\left\lbrace\vertiii{\Gamma(D^{2}\Delta)+\Gamma(D^{2}B^{*})}_{\infty}\Vert W\Vert_{\infty},\Vert R(\Delta)\Vert_{\infty},\vertiii{\Gamma(D^{2}\Delta)}_{\infty}\Vert {\Theta^{*}}^{-1}\Vert_{\infty}\right\rbrace\leq \frac{\lambda_{n}\alpha}{24}. 
\end{align}
Then the dual vector $\widetilde{Z}_{E^{c}}$ satisfies $\Vert \widetilde{Z}_{E^{c}}\Vert_{\infty}<1$, and hence, $\widetilde{B} = \widehat{B}$. 
\end{lemma}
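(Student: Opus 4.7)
\emph{Proof plan.} My strategy is the standard primal--dual witness argument tailored to the quadratic-in-$B$ loss of \eqref{eq: log-det}: I will start from the zero-subgradient condition (Lemma \ref{lma: uniq soln}) satisfied by the constructed pair $(\widetilde{B},\widetilde{Z})$, Taylor-expand the gradient of $\log\det$ around $B^{*}$ using \eqref{eq: reminder term}, linearize the sample-covariance term by writing $S = {\Theta^{*}}^{-1} + W$, and then isolate an equation in which $\widetilde{Z}_{E^{c}}$ is expressed in terms of $\Delta$, $W$ and $R(\Delta)$. Once this equation is in place, the mutual incoherence condition \textbf{[A1]} converts a bound on the ``noise'' terms into a bound on $\|\widetilde{Z}_{E^{c}}\|_{\infty}$.

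Concretely, substituting $\widetilde{B} = B^{*}+\Delta$ and $S = {\Theta^{*}}^{-1}+W$ into the subgradient condition and using the identity $D^{2}B^{*}{\Theta^{*}}^{-1} = (B^{*})^{-1}$ (which follows from $\Theta^{*} = B^{*}D^{2}B^{*}$), the leading term $(B^{*})^{-1}$ cancels against the first-order term of the Taylor expansion, and I obtain
\begin{equation*}
\tfrac{\lambda_{n}}{2}\widetilde{Z} \;=\; (B^{*})^{-1}\Delta(B^{*})^{-1} + R(\Delta) - D^{2}B^{*}W - D^{2}\Delta\,{\Theta^{*}}^{-1} - D^{2}\Delta W.
\end{equation*}
Vectorizing and using $\mvec((B^{*})^{-1}\Delta(B^{*})^{-1}) = \Gamma^{*}\mvec(\Delta)$ together with $\mvec(D^{2}B^{*}W) = \Gamma(D^{2}B^{*})\mvec(W)$ and $\mvec(D^{2}\Delta W) = \Gamma(D^{2}\Delta)\mvec(W)$, I write the identity compactly as $\tfrac{\lambda_{n}}{2}\mvec(\widetilde{Z}) = \Gamma^{*}\mvec(\Delta) + \mvec(T)$, where $T \defeq R(\Delta) - D^{2}(B^{*}+\Delta)W - D^{2}\Delta\,{\Theta^{*}}^{-1}$.

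Next I exploit that $\widetilde{B}_{E^{c}}=0=B^{*}_{E^{c}}$, so $\mvec(\Delta)_{E^{c}}=0$. The $E$-block of the identity gives $\mvec(\Delta)_{E} = (\Gamma^{*}_{EE})^{-1}\bigl[\tfrac{\lambda_{n}}{2}\widetilde{Z}_{E} - \mvec(T)_{E}\bigr]$, and substituting into the $E^{c}$-block yields
\begin{equation*}
\tfrac{\lambda_{n}}{2}\widetilde{Z}_{E^{c}} \;=\; \mvec(T)_{E^{c}} - \Gamma^{*}_{E^{c}E}(\Gamma^{*}_{EE})^{-1}\mvec(T)_{E} + \tfrac{\lambda_{n}}{2}\Gamma^{*}_{E^{c}E}(\Gamma^{*}_{EE})^{-1}\widetilde{Z}_{E}.
\end{equation*}
Taking $\|\cdot\|_{\infty}$, applying the triangle inequality, \textbf{[A1]}, and $\|\widetilde{Z}_{E}\|_{\infty}\le 1$, I obtain $\tfrac{\lambda_{n}}{2}\|\widetilde{Z}_{E^{c}}\|_{\infty} \le (2-\alpha)\|T\|_{\infty} + (1-\alpha)\tfrac{\lambda_{n}}{2}$.

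It remains to control $\|T\|_{\infty}$ by the three terms that appear in the hypothesis. Using $\|AB\|_{\infty}\le \vertiii{A}_{\infty}\|B\|_{\infty}$ and the fact that $\vertiii{\Gamma(M)}_{\infty}=\vertiii{M}_{\infty}$ since $\Gamma(M)=I\otimes M$, I bound $\|D^{2}(B^{*}+\Delta)W\|_{\infty}\le \vertiii{\Gamma(D^{2}B^{*})+\Gamma(D^{2}\Delta)}_{\infty}\|W\|_{\infty}$ and $\|D^{2}\Delta\,{\Theta^{*}}^{-1}\|_{\infty}\le \vertiii{\Gamma(D^{2}\Delta)}_{\infty}\|{\Theta^{*}}^{-1}\|_{\infty}$, while $\|R(\Delta)\|_{\infty}$ is already elementwise. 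The hypothesis \eqref{eq: strict duality conditions} then gives $\|T\|_{\infty}\le 3\cdot \tfrac{\lambda_{n}\alpha}{24}=\tfrac{\lambda_{n}\alpha}{8}$, from which $\|\widetilde{Z}_{E^{c}}\|_{\infty}\le 1-\alpha + \tfrac{2(2-\alpha)}{\lambda_{n}}\cdot \tfrac{\lambda_{n}\alpha}{8} \le 1-\tfrac{\alpha}{2} < 1$. Strict dual feasibility holds, so $(\widetilde{B},\widetilde{Z})$ is optimal for \eqref{eq: log-det}, and by the uniqueness in Lemma \ref{lma: uniq soln} we conclude $\widetilde{B}=\widehat{B}$.

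The main technical obstacle is step (i) of the algebra: identifying the cancellation $D^{2}B^{*}{\Theta^{*}}^{-1}=(B^{*})^{-1}$ and recognizing that the residual terms can be grouped as $-D^{2}(B^{*}+\Delta)W$, $-D^{2}\Delta\,{\Theta^{*}}^{-1}$, $R(\Delta)$, so that each factor in $T$ matches exactly one of the three quantities controlled by \eqref{eq: strict duality conditions}. This regrouping is what makes the quadratic-in-$B$ analysis reduce to a GLASSO-style primal--dual witness argument; everything else is linear algebra and the standard $\vertiii{\cdot}_{\infty}$ inequality.
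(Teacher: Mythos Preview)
Your proof is correct and follows essentially the same route as the paper's: you derive the same subgradient identity via the cancellation $D^{2}B^{*}{\Theta^{*}}^{-1}=(B^{*})^{-1}$, vectorize, split into $E$ and $E^{c}$ blocks, eliminate $\Delta_{E}$, and then combine the mutual incoherence bound with the hypothesis to obtain $\|\widetilde{Z}_{E^{c}}\|_{\infty}\le 1-\alpha+\tfrac{(2-\alpha)\alpha}{4}<1$. The only cosmetic difference is that the paper carries the three ``noise'' pieces separately (their quantity $H$) whereas you package them as a single $T$ and then unbundle via the triangle inequality; the constants and the final bound coincide.
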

\emph{Proof sketch}: The proof essentially involves expressing the sub-gradient condition in Lemma \ref{lma: uniq soln} as a vectorized form using $R(\Delta)$ (in \eqref{eq: reminder term}) and $W$. By manipulating the vectorized sub-gradient condition, we obtain an expression of $\widetilde{Z}_{E^{c}}$
that is a function of the quantities in \eqref{eq: strict duality conditions}. We finish off the proof by repeated applications of triangle inequality of norms and invoking assumptions in Lemma \eqref{eq: strict duality conditions}. 

The following results provides us with dimension and model complexity dependent bounds on the reminder term $R(\Delta)$ in \eqref{eq: reminder term} and the distortion $\Delta$.  
 
\begin{lemma}({Control of reminder})\label{lma: control remainder}  Suppose that the element-wise $\ellinf$-bound $\Vert \Delta\Vert_{\infty}\leq \frac{1}{3\nu_{{B^{*}}^{-1}}d}$ holds, then the matrix $Q = \sum\limits_{k=0}^{\infty}(-1)^{k}({B^{*}}^{-1}\Delta)^{k}$ satisfies the bound $\nu_{Q^{T}}\leq \frac{3}{2}$ and the matrix $R(\Delta) = {B^{*}}^{-1}\Delta{B^{*}}^{-1}\Delta Q {B^{*}}^{-1}$ has the element-wise $\ellinf$-norm bounded as 
 \begin{align}
     \Vert R(\Delta)\Vert_{\infty}\leq \frac{3}{2}d\Vert \Delta\Vert^{2}_{\infty}\nu_{{B^{*}}^{-1}}^{3}.
\end{align}
\end{lemma}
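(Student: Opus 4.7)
The plan is to derive both $Q$ and the factored form of $R(\Delta)$ from a single Neumann series expansion of $\widetilde{B}^{-1}$, and then bound each piece using the complementary inequalities $\|AB\|_\infty \leq \vertiii{A}_\infty \|B\|_\infty$ and $\|AB\|_\infty \leq \|A\|_\infty \vertiii{B^\transpose}_\infty$ together with the row-sparsity of $\Delta$. First I would write $\widetilde{B}^{-1} = (B^{*}+\Delta)^{-1} = (I+B^{*-1}\Delta)^{-1}B^{*-1}$ and set $U \triangleq B^{*-1}\Delta$. Under the hypothesis $\|\Delta\|_\infty \leq 1/(3\nu_{B^{*-1}}d)$, submultiplicativity gives $\vertiii{U}_\infty \leq \vertiii{B^{*-1}}_\infty \vertiii{\Delta}_\infty \leq \nu_{B^{*-1}}\cdot d\|\Delta\|_\infty \leq 1/3$, where the bound $\vertiii{\Delta}_\infty \leq d\|\Delta\|_\infty$ follows because $\Delta=\widetilde{B}-B^{*}$ inherits its support from $E$ via the primal-dual construction \eqref{eq: rest log-det}. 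Hence $(I+U)^{-1}=\sum_{k\geq 0}(-U)^k$ converges, and stripping off the $k=0,1$ terms yields $R(\Delta) = U^{2}QB^{*-1} = B^{*-1}\Delta B^{*-1}\Delta\,Q\,B^{*-1}$ with $Q=\sum_{k\geq 0}(-U)^k$, matching the lemma's factored expression.

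To obtain $\nu_{Q^\transpose} \leq 3/2$, I would transpose the series: by symmetry of $B^{*}$ and $\Delta$, $Q^\transpose = \sum_{k\geq 0}(-1)^k(\Delta B^{*-1})^k$. Triangle inequality plus submultiplicativity of $\vertiii{\cdot}_\infty$ bound this by the geometric series $\sum_{k\geq 0}\vertiii{\Delta B^{*-1}}_\infty^{k} \leq \sum_{k\geq 0}(1/3)^{k} = 3/2$. For the element-wise bound on $R(\Delta)$, I would split $R(\Delta) = B^{*-1}\cdot(\Delta B^{*-1}\Delta\,Q)\cdot B^{*-1}$ and apply $\|AMB\|_\infty \leq \vertiii{A}_\infty \|M\|_\infty \vertiii{B^\transpose}_\infty$ (using symmetry of $B^{*-1}$) to obtain $\|R(\Delta)\|_\infty \leq \nu_{B^{*-1}}^{2}\,\|\Delta B^{*-1}\Delta\,Q\|_\infty$. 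Then $\|MQ\|_\infty \leq \|M\|_\infty\vertiii{Q^\transpose}_\infty \leq (3/2)\|M\|_\infty$ reduces matters to $\|\Delta B^{*-1}\Delta\|_\infty$, and exploiting the sparsity of $\Delta$ once more via $\|\Delta B^{*-1}\Delta\|_\infty \leq \vertiii{\Delta}_\infty \|B^{*-1}\Delta\|_\infty \leq (d\|\Delta\|_\infty)(\nu_{B^{*-1}}\|\Delta\|_\infty)$ yields the claimed $(3/2)\,d\,\|\Delta\|_\infty^{2}\,\nu_{B^{*-1}}^{3}$.

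The main subtlety is pairing the right norm inequality at each step so that exactly one factor of $d$, one factor of $3/2$, and three factors of $\nu_{B^{*-1}}$ appear in the final bound; swapping $\vertiii{\cdot}_\infty$ and $\|\cdot\|_\infty$ at the wrong place would either waste an extra $d$ (producing a looser $d^{2}$ bound) or force a bound on $\nu_{Q}$ rather than $\nu_{Q^\transpose}$, which is precisely why the lemma's intermediate claim is phrased in terms of the transpose. The row-sparsity estimate $\vertiii{\Delta}_\infty \leq d\|\Delta\|_\infty$, a direct consequence of the constraint $\widetilde{B}_{E^{c}}=0$ imposed in \eqref{eq: rest log-det}, is the one structural input that converts element-wise smallness of $\Delta$ into the operator-norm contraction $\vertiii{U}_\infty \leq 1/3$ driving the whole argument.
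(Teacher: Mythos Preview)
Your proposal is correct and follows exactly the Neumann-series route that the paper adapts from \citep{ravikumar2011high}: expand $(I+B^{*-1}\Delta)^{-1}$, identify $Q$, peel off the first two terms to obtain the factored form of $R(\Delta)$, and then chain the mixed norm inequalities $\|AB\|_\infty\leq\vertiii{A}_\infty\|B\|_\infty$ and $\|AB\|_\infty\leq\|A\|_\infty\vertiii{B^\transpose}_\infty$ together with the row-sparsity bound $\vertiii{\Delta}_\infty\le d\|\Delta\|_\infty$. Your careful tracking of which side carries $\vertiii{\cdot}_\infty$ versus $\|\cdot\|_\infty$ (and hence why $\nu_{Q^\transpose}$ rather than $\nu_Q$ is the relevant intermediate quantity) is precisely the content of the argument the paper cites.
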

The proof, adapted from \citep{ravikumar2011high}, is algebraic in nature and relies on certain matrix expansions. The details are provided in Appendix. In the following result, we provide a sufficient condition under which the element-wise $\ellinf$-bound on $\Delta$ in Lemma \ref{lma: control remainder} holds.

\begin{lemma}({Control of $\Delta$})\label{lma: control of delta} Let $r\!\triangleq\!4\nu_{{\Gamma^{*}}^{-1}}\left[\nu_{D^{2}}\nu_{B^{*}}\Vert W\Vert_{\infty}\!+\!0.5{\lambda_{n}}\right]\leq \min\Big\{\frac{1}{3\nu_{{B^{*}}^{-1}}d},\frac{1}{6\nu_{{\Gamma^{*}}^{-1}}\nu_{{B^{*}}^{-1}}^{3}d}\Big\}$. Then we have the element-wise $\ellinf$ bound $\Vert \Delta\Vert_{\infty} = \Vert \widetilde{B}-B^{*}\Vert_{\infty}\leq r$.
\end{lemma}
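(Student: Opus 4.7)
The plan is to apply Brouwer's fixed-point theorem to a carefully chosen self-map, adapting the strategy of \cite{ravikumar2011high} to our setting in which the loss in \eqref{eq: log-det} is quadratic (rather than linear) in $B$.

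First I would encode the restricted zero-subgradient condition of \eqref{eq: rest log-det} as a fixed-point equation. Starting from $2D^{2}\widetilde{B}S - 2\widetilde{B}^{-1} + \lambda_{n}\widetilde{Z} = 0$ and writing $\widetilde{B} = B^{*} + \Delta$ with $\Delta_{E^{c}} = 0$, I would expand $\widetilde{B}^{-1}$ via \eqref{eq: reminder term}, invoke the identity $D^{2}B^{*}{\Theta^{*}}^{-1} = {B^{*}}^{-1}$ (a consequence of $\Theta^{*} = B^{*}D^{2}B^{*}$), split $S = {\Theta^{*}}^{-1} + W$, vectorize, and restrict indices to $E$, arriving at
\begin{align*}
[D^{2}B^{*}W]_{E} + [(S\otimes D^{2})]_{EE}\Delta_{E} + \Gamma^{*}_{EE}\Delta_{E} - [R(\Delta)]_{E} + \tfrac{\lambda_{n}}{2}\widetilde{Z}_{E} = 0.
\end{align*}
This motivates the continuous map
\begin{align*}
F(\Delta_{E}) \triangleq -(\Gamma^{*}_{EE})^{-1}\left\{[D^{2}B^{*}W]_{E} + [(S\otimes D^{2})]_{EE}\Delta_{E} - [R(\Delta)]_{E} + \tfrac{\lambda_{n}}{2}\widetilde{Z}_{E}\right\},
\end{align*}
whose fixed points are precisely the restricted KKT solutions.

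Next I would verify that $F$ is a self-map of $\mathbb{B}(r) \triangleq \{\Delta : \Delta = \Delta^{\transpose},\, \Delta_{E^{c}} = 0,\, \|\Delta\|_{\infty} \leq r\}$, by bounding each summand of $F(\Delta_{E})$ using $\vertiii{(\Gamma^{*}_{EE})^{-1}}_{\infty} \leq \nu_{{\Gamma^{*}}^{-1}}$ and submultiplicativity of $\vertiii{\cdot}_{\infty}$. The \emph{noise} and \emph{regularization} contributions are $\nu_{{\Gamma^{*}}^{-1}}\|D^{2}B^{*}W\|_{\infty} \leq \nu_{{\Gamma^{*}}^{-1}}\nu_{D^{2}}\nu_{B^{*}}\|W\|_{\infty}$ and $\nu_{{\Gamma^{*}}^{-1}}\lambda_{n}/2$, which by the definition of $r$ sum to exactly $r/4$. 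The \emph{higher-order remainder} is handled by Lemma \ref{lma: control remainder}: its hypothesis $\|\Delta\|_{\infty} \leq 1/(3\nu_{{B^{*}}^{-1}}d)$ is secured by the first half of the assumed bound on $r$, giving $\|R(\Delta)\|_{\infty} \leq \tfrac{3}{2}dr^{2}\nu_{{B^{*}}^{-1}}^{3}$; multiplying by $\nu_{{\Gamma^{*}}^{-1}}$ and applying the second half of the bound on $r$ yields at most $r/4$. The \emph{quadratic-in-$\Delta$ Hessian residue} $[(S\otimes D^{2})]_{EE}\Delta_{E}$, the term with no counterpart in \cite{ravikumar2011high}, is where Assumption \textbf{[A2]} is indispensable: since $\Delta_{E^{c}} = 0$ and every row of $\Delta$ has at most $d$ non-zero off-diagonal entries, $\vertiii{\Delta}_{\infty} \leq (d+1)\|\Delta\|_{\infty} \lesssim dr$, so $\|D^{2}\Delta S\|_{\infty} \leq \vertiii{D^{2}}_{\infty}(dr)\|S\|_{\infty}$; splitting $\|S\|_{\infty} \leq \|{\Theta^{*}}^{-1}\|_{\infty} + \|W\|_{\infty}$, the leading piece contributes $\nu_{{\Gamma^{*}}^{-1}}\vertiii{D^{2}}_{\infty}d\|{\Theta^{*}}^{-1}\|_{\infty}\cdot r \leq r/4$ by \textbf{[A2]}, while the $\|W\|_{\infty}$ piece is lower order and absorbed in the slack. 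Summing the four pieces gives $\|F(\Delta_{E})\|_{\infty} \leq r$.

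Finally, because $\mathbb{B}(r)$ is compact and convex and $F$ is continuous on it (for $r$ small enough that $B^{*} + \Delta \succ 0$ throughout $\mathbb{B}(r)$, which follows from \textbf{[A3]} together with the smallness of $r$), Brouwer's fixed-point theorem yields some $\Delta \in \mathbb{B}(r)$ with $F(\Delta_{E}) = \Delta_{E}$. Extending by zeros on $E^{c}$, $B^{*} + \Delta$ satisfies the restricted KKT of \eqref{eq: rest log-det}, and by the uniqueness argument of Lemma \ref{lma: uniq soln} applied to the restricted program this equals $\widetilde{B}$. Hence $\|\widetilde{B} - B^{*}\|_{\infty} \leq r$. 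The main obstacle throughout is the extra Hessian piece $(S\otimes D^{2})$ generated by the quadratic term $\Tr(DBSBD)$; Assumption \textbf{[A2]}, whose right-hand side $1/(4d\|{\Theta^{*}}^{-1}\|_{\infty}\vertiii{D^{2}}_{\infty})$ is calibrated so that $\nu_{{\Gamma^{*}}^{-1}}\vertiii{D^{2}}_{\infty}d\|{\Theta^{*}}^{-1}\|_{\infty} \leq 1/4$, is exactly what enables this term to enter as one of the $r/4$ contributions rather than derailing the self-mapping argument.
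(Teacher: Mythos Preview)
Your proposal is correct and mirrors the paper's proof: both derive a self-map from the restricted zero-subgradient condition, split into the same four contributions (your $(S\otimes D^{2})\Delta_{E}$ is precisely the paper's $T_{2}+T_{4}$, namely $D^{2}\Delta W$ and $D^{2}\Delta{\Theta^{*}}^{-1}$), bound each by $r/4$ via the stated hypotheses on $r$ and Assumption \textbf{[A2]}, and close with Brouwer's fixed-point theorem together with uniqueness from Lemma~\ref{lma: uniq soln}. Two minor cleanups: the $D^{2}\Delta W$ piece is not ``absorbed in slack'' but needs its own $r/4$ allocation (the paper bounds it explicitly by $r/12$ using $\|W\|_{\infty}\le r/(4\nu_{{\Gamma^{*}}^{-1}}\nu_{D^{2}}\nu_{B^{*}})$ together with $r\le 1/(3\nu_{{B^{*}}^{-1}}d)$), and invertibility of $B^{*}+\Delta$ on $\mathbb{B}(r)$ follows from $\vertiii{{B^{*}}^{-1}\Delta}_{\infty}<1$ and the Neumann series, not from \textbf{[A3]}.
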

\emph{Proof sketch}: By construction $\widetilde{B}_{E^c}=\widetilde{B}^*_{E^c}=0$. Hence, $\|\Delta\|_\infty=\|\Delta_E\|_\infty$, where $\Delta_E=\widetilde{B}_E-{B}_E^*$. We construct a continous vector valued function $F:\Delta_E\to \Delta_E$ that has a unique fixed point. Invoking assumptions {\bf [A1]-[A2]}, we show that $F(\cdot)$ is a contractive map on the $\ellinf$ ball defined as $\mathbb{B}_{r} = \{A: \Vert A\Vert_{\infty}\leq r\}$ with $r$ defined in the statement of the lemma. Specifically, we show that $F(\mathbb{B}_{r})\subseteq \mathbb{B}_{r}$. Finally, we finish off the proof by an application of Brower's fixed point theorem \citep{kellogg} to show that the unique fixed point is inside $\mathbb{B}_{r} $. Consequently, $\Vert \Delta\Vert_{\infty}\leq r$. 

Finally, the result in Theorem \ref{thm: sub-Gaussian support recovery} follows by putting these lemmas together for an appropriate choice of $\lambda_n$ and the sample size requirement, and there upon, invoking some known concentration inequalities. 

\section{Experiments}\label{sec: experiments}
We validate the support recovery performance of our $\ell_1$-regularized MLE on synthetic and a benchmark power distribution network (see Fig.~\ref{fig: network_illustration}).
We choose $\lambda_{n}$ proportional to $\sqrt{{\log p}/{n}}$. Our results are averaged over $100$ trials of $n$ independent samples of $Y$.
We compare our $\ell_1$-regularized MLE performance with (i) the square-root estimator (hereafter, GLASSO+SR) that identifies the support of $B^*$ by determining $(i,j)$ for which ${\widehat{\Theta}}^{\frac{1}{2}}_{i,j}\ne 0$; and (ii) the GLASSO+2HR (Hop Refinement) estimator \citep{DekaTSG2020} that identifies the support of $B^*$ by determining $(i,j)$ for which $\widehat{\Theta}_{i,j}\leq -\tau $ for $\tau=1e-02$. Here $\widehat{\Theta}$ is the GLASSO estimate of the inverse covariance matrix of $Y$ \citep{friedman2008sparse}. These estimators are described in detail in Introduction. To have a fair comparison with the GLASSO based estimators, we set $\Sigma_X$ ($X$ is the injected vector) to be diagonal. However, as discussed earlier, our $\ell_1$ regularized ML estimator works for any $\Sigma_X\succ 0$.

We consider $p$ to be as large as $64$ nodes,  Computational examples involving large data matrices for $B^*$ having a lower triangular matrix form has been reported in \citep{jelisavcic2018fast}. 

\emph{(i) Synthetic data}: We consider two undirected graphs for $B^*$, the chain graph and the grid graph for $p=\{32,64\}$ nodes. We set $B^{*}_{ij} = 1$ for $(i,j)\in E$ and $B^{*}_{ij} = 0$ for $(i,j)\in E^{c}$, where $E$ can be the edge set of the chain or grid graph. We then adjust the diagonal elements of $B^{*}$ to ensure $B^*\succ 0$.   

\emph{(ii) Power network:} We set $B^*$ to be the Laplacian of the IEEE 33 bus power distribution network \citep{zimmerman2010matpower}. For this data, we note that $B^*$ is non-invertible because one of one zero eigenvalue. We obtain the reduced $B^*$ by deleting the first row and column of $B^*$. We also slightly modify the network by adding three loops of cycle length three, two of cycle length four, and one loop of cycle length five (see Fig \ref{fig: network_illustration}). We made these modifications to highlight that $\ell_1$-MLE imposes no connectivity assumptions on the graph underlying $B^*$, except sparsity. In contrast, GLASSO+2HR estimator \citep{DekaTSG2020} restricts the graph underlying $B^*$ from having cycles of length three (i.e., triangle-free). 

In Fig.~\ref{fig: support_recovery}, we show empirical support recovery probabilities for all three estimators as a function of the number of samples $n$. Both on synthetic and power network data, our $\ell_1$-MLE achieved superior rates than the other competing estimators. In fact, $\ell_1$-MLE exactly recovers the support of $B^*$ when the number of samples is in the order of $d^2\log p$, which is in excellent agreement with the proposed theory. Instead, for a similar performance, GLASSO+SR needed $d^4\log p$ samples (see Remark \ref{rmk: GLASSO comparison}).

We implement all three estimators using CVXPY 1.2 open source python package on Google Colab. All the simulation results reported in this paper can be reproduced using the code available at
\url{https://github.com/AnirudhRayas/SLNSCL}.

\setlength{\textfloatsep}{10pt plus 1.0pt minus 2.0pt}
\begin{figure}
    \centering
    \includegraphics[width=0.95\linewidth]{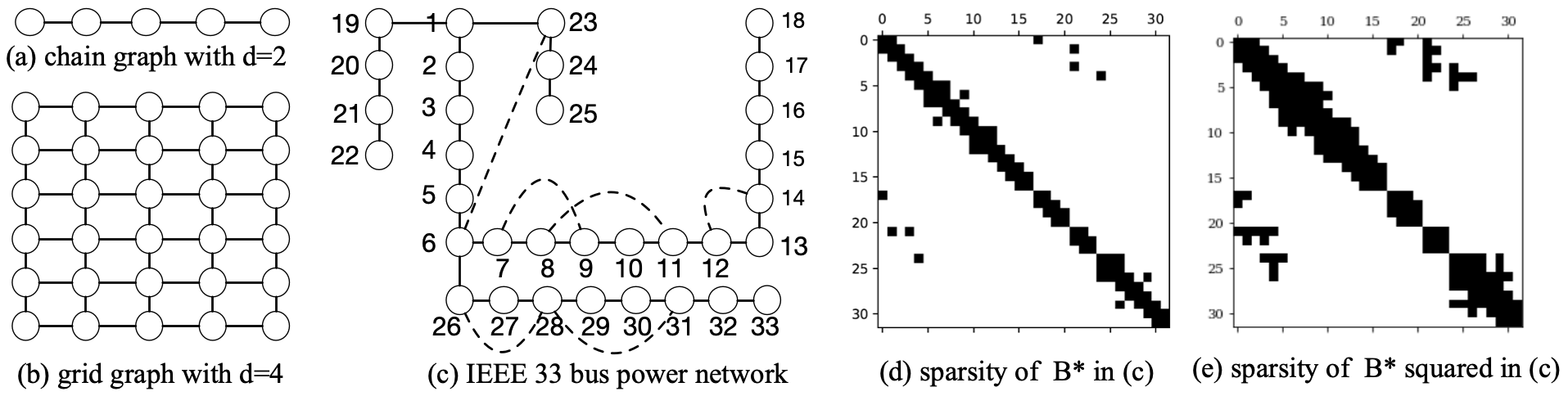}
    \caption{\footnotesize{Graphs used in experiments. (a) Chain graph with maximum degree $d=2$. (b) Grid graph with $d=4$. (c) IEEE 33 bus (node) distribution network with additional loops (shown in dashed lines). (d) Sparsity of $B^*$ associated with the IEEE 33 bus network. (d) Sparsity of ${(B^*)}^2$.} Notice that ${(B^*)}^2$ is denser relative to $B^*$. Consequently, GLASSO+HR needs more samples than $\ell_1$-MLE to recover the support (see plot below).}
    \label{fig: network_illustration}
\end{figure}
\begin{figure}
    \centering
    \includegraphics[width=0.95\linewidth]{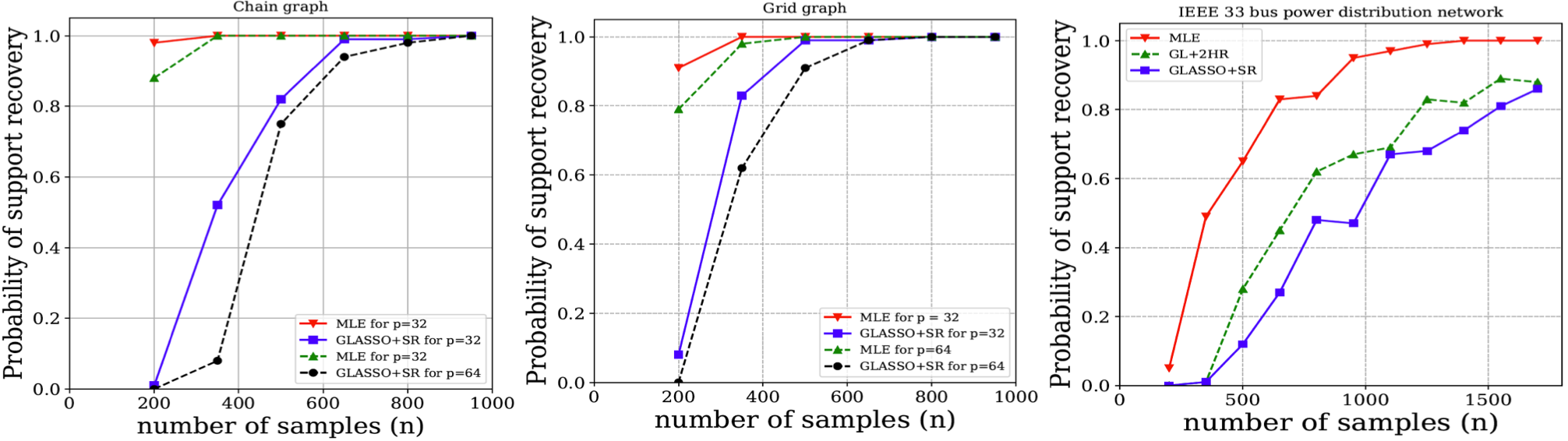}
    \caption{\footnotesize{Empirical probability of success of various estimators versus the raw sample size $n$ for (left) chain graph, (middle) grid graph, and (right) IEEE 33 bus network. For chain and grid graph, we compare our $\ell_1$ regularized MLE performance with GLASSO+SR for $p\in \{32, 64\}$. Instead, for IEEE 33 bus network, we compare $\ell_1$ regularized MLE with GLASSO+SR and GLASSO+2HR.}}
    \label{fig: support_recovery}
\end{figure}

\section{Discussions and Future Work}\label{sec: discussion}

Large networked systems obeying conservation laws of the form $X=B^*Y$ are often used to model and study interactions among different conserved quantities in various engineering and scientific disciplines. For such systems, we design a novel estimator of the unknown structure of the network (i.e., sparsity pattern of $B^\ast$) using an $\ell_1$-regularized maximum likelihood estimator. Our estimator only needs samples of the node potentials $Y$ and some knowledge of the statistics of the node injections $X$. We showed that this estimator is well defined under certain natural conditions by showing that the corresponding convex optimization has a unique optimum. We  established sparsistency and norm consistency of our estimator under a novel mutual incoherence condition. We then provided several numerical results that not only validated our theory but also showed that our proposed estimator outperforms several state of the art techniques for structure recovery in such systems. 

In our framework, we neither require knowledge of the actual injected flows ($X$), nor do we need $B^*$ to be a Laplacian matrix; this  allows our framework to be general enough to be applicable for a variety of domains ranging from electrical networks to social networks. Consequently, our framework and theoretical results admit many  extensions and refinements, such as recasting \eqref{eq: log-det0} as the minimization of the Bregman divergence for more general loss functions. In this work we restricted $B^*$ in the model $B^*Y-X=0$ to be invertible and positive definite. In several applications such as transportation, hydrodynamic, and neuronal networks $B^*$ might not be symmetric or non-normal \cite{asllani2018structure}, and hence, not positive definite. Extending our analysis to these cases could be a fruitful avenue for future work. Another area for future exploration is to consider practically relevant and methodologically challenging problems in various systems that demands network reconstruction from incomplete \cite{anguluri2021grid, vinci2019graph, dasarathy2019gaussian, ghoroghchian2021graph} or adaptively acquired data \cite{dasarathy2016active}; these variations in-turn may result in significant improvements to the data-requirement even in the setting of this paper. Finally, as is well known, verifying regularity conditions, such as the mutual incoherence,\footnote{ Interestingly, this condition is necessary and sufficient for sparse linear regression problems \citep{wainwright2009sharp}.} in practice is computationally hard. Hence, it would be a worthwhile pursuit to deduce computationally tractable sufficient conditions that are operationally interpretable for systems obeying conservation laws. 

\bibliographystyle{plainnat}
\bibliography{mybibilography} 

\newpage
\section*{Appendix}
\appendix

We begin by giving a brief overview of the problem set-up and provide proofs for all the technical results.
 
\noindent{\bf Overview}: We begin with a brief overview of the problem set-up and state the necessary assumptions. Then, we provide proofs for all the technical results. Recall that our observation model is $Y = {B^{*}}^{-1}X$, where $B^{*}$ is a $p\times p$ sparse matrix which encodes the structure of a network with the property that $B^{*}_{ij} = 0$ for all $(i,j)\in E^{c}$, $Y\in \mathbb{R}^{p}$ is the random vector of node potentials and $X\in\mathbb{R}^{p}$ is the unknown vector of injected flows with known covariance matrix $\Sigma_{X}$. Given $n$ i.i.d samples of the vector $Y$ our goal is to learn the sparsity structure of the matrix $B^{*}$. Towards this we propose an estimator $\widehat{B}$ which is the solution of the following $\ell_{1}$ regularized log-det problem
\begin{align}\label{appeq: log-det problem}
\widehat{B} = \argmin_{B\succ0}\left[\Tr(DBSBD)-\log\det(B^2)+\lambda_{n}\Vert B\Vert_{1,\text{off}}\right].
\end{align}
where $D\in \mathbb{R}^{p\times p}$ is the unique square root of $\Sigma_{X}^{-1}$ and $S$ is the sample covariance matrix constructed from $n$ samples of the random vector $Y$. We recall the assumptions necessary to prove our results.\\

\textbf{[A1] Mutual incoherence condition.}  Let $\Gamma^{*}$ be the Hessian of the log-determinant function in \eqref{appeq: log-det problem}: 
\begin{align}\label{appeq: hessian log-det}
    \Gamma^{*} \triangleq \nabla^{2}_{B}\log\det(B)\vert_{B=B^{*}} = {{B^{*}}^{-1}}\otimes {{B^{*}}^{-1}}. 
\end{align}
For $\Gamma^{*}$ in \eqref{appeq: hessian log-det}, there exists some $\alpha \in (0,1]$ such that 
    $\vertiii{\Gamma^{*}_{E^{c}E}(\Gamma^{*}_{EE})^{-1}}_{\infty}\leq 1-\alpha$. 

\textbf{[A2] Hessian regularity condition.} Let $d$ be the maximum number of non zero entries among all the rows in $B^*$ (i.e., the degree of the underlying graph), $\Theta^*=B^*\Sigma^{-1}_{X}B^*$, and $D^2=\Sigma^{-1}_{X}$. Then, 
\begin{align}\label{appeq: Hessian regularity}
    \vertiii{{\Gamma^{*}}^{-1}}_{\infty}\leq \frac{1}{4d\Vert {\Theta^{*}}^{-1}\Vert_{\infty}\vertiii{D^{2}}_{\infty}}.
\end{align}

\textbf{[A3] Maximum row norm condition.}\label{app: max row norm} There exists a constant $c>0$ for which 
$\vertiii{B^{*}}_{\infty}\geq c$, or equivalently, the spectral norm is bounded as $ \vertiii{B^{*}}_{2} \geq c/\sqrt{p}$.\\

Our analysis is based on the Primal-Dual Witness (PDW) construction to certify the behaviour of the estimator $\widehat{B}$. The PDW technique  consists of constructing a primal-dual pair $(\widetilde{B},\widetilde{Z})$, where $\widetilde{B}$ is the primal solution of the restricted log-det problem defined below
\begin{align}\label{appeq: rest log-det}
  \widetilde{B} \triangleq \argmin_{B=B^{T},B\succ 0,B_{E^{c}}=0}\left[\Tr(DBSBD) - \log\det(B^{2})+\lambda_{n}\Vert B\Vert_{1,\text{off}}\right].   
\end{align}
where $\widetilde{Z}$ is the optimal dual solution. By definition the primal solution $\widetilde{B}$ satisfies $\widetilde{B}_{E^{c}} = B^{*}_{E^{c}}=0$. Furthermore the pair $(\widetilde{B},\widetilde{Z})$ are solutions to the zero gradient conditions of the restricted problem $\eqref{appeq: rest log-det}$. Therefore, when the PDW construction succeeds the solution $\widehat{B}$ is equal to the primal solution $\widetilde{B}$ which guarantees the support recovery property ie. $\widehat{B}_{E^{c}}=0$.\\

\pgfkeys{/csteps/inner color = white}
\pgfkeys{/csteps/outer color = black}
\pgfkeys{/csteps/fill color = black}

We now summarize our technical results. \Circled{1} We begin by showing that the $\ell_{1}$ regularized log-det problem in \eqref{appeq: log-det problem} is convex and admits a unique solution $\widehat{B}$ (see Lemma \ref{app: convexity and uniqueness}). 
\Circled{2} We then proceed to derive the sufficient conditions under which the PDW construction succeeds (see Lemma \ref{app: sufficient condition}).  
\Circled{3} We then guarantee that the remainder term $R(\Delta)$ is bounded if $\Delta$ is bounded (see Lemma \ref{app: control remainder}). 
\Circled{4} Furthermore, for a specific choice of radius $r$ as a function of $\Vert W\Vert_{\infty}$we show that $\Delta$ lies in a ball $\mathbb{B}_{r}$ of radius $r$ (see Lemma \ref{app: control of delta}). 
\Circled{5} We then derive a lemma which we call the master lemma which gives support recovery guarantees and element-wise $\ellinf$ norm consistency for our estimator $\widehat{B}$ under no specific distributional assumptions (see Lemma \ref{app: master lemma}). 
\Circled{6} Using known concentration results  on sub-gaussian and moment bounded random vectors we prove our main result for the two distributions mentioned above. Recall that our main result gives sufficient conditions on the number of samples required for our estimator $\widehat{B}$ to recover the exact sparsity structure of $B^{*}$. We also show that under these sufficient conditions $\widehat{B}$ is consistent with $B^{*}$ in the element-wise $\ellinf$ norm and achieves sign consistency if $\vert B^{*}_{\min}\vert$ (the minimum non-zero entries of $B^{*}$) is lower bounded (see Theorem \ref{app: sub-gaussian support recovery} and Theorem~\ref{app: bounded moment support recovery}). 
\Circled{7} Finally, we show that $\widehat{B}$ is consistent in the Frobenius and spectral norm.

\noindent\textbf{Numbering convention}: To make the Appendix self contained we restated statements of all theorems, lemmas, and definitions with their numbers unchanged. For the numbered environments that are specifically introduced in Appendix, the environment begins with the label "A" (e.g., Lemma A.1).

\setcounter{lemma}{0}
\begin{lemma}{{\emph(Convexity and uniqueness)}}\label{app: convexity and uniqueness}
For any $\lambda_{n}\!>\!0$ and $B\!\succ\! 0$, (i) the $\ell_{1}$-log determinant problem in \eqref{appeq: log-det problem} is convex and (ii) $\widehat{B}$ in \eqref{appeq: log-det problem} is the unique minima satisfying the sub-gradient condition $2D^{2}\widehat{B}S - 2\widehat{B}^{-1}\!+\!\lambda_{n}\widehat{Z}\!=\!0$. Here
$\widehat{Z}$ belong to the sub-gradient $\partial\Vert \widehat{B} \Vert_{1,\text{off}}$ such that $\widehat{Z}_{ij}=0$, for $i=j$, $\widehat{Z}_{ij}\!=\!\mathrm{sign}(\widehat{B}_{ij})$ when $\widehat{B}_{ij}\neq 0$ and $|\widehat{Z}_{ij}|\leq 1$ when $\widehat{B}_{ij}=0$, for $i \ne j$.  

\end{lemma}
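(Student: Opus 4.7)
For convexity, my plan is to establish it term-by-term. The trace term admits the clean rewriting
\begin{align*}
\Tr(DBSBD) = \Vert S^{1/2}BD\Vert_F^2,
\end{align*}
which is the squared Frobenius norm of a linear function of $B$ and hence convex. For the log-determinant term, since $B\succ 0$ implies $\det(B^2)=(\det B)^2$, one has $-\log\det(B^2)=-2\log\det B$, which is strictly convex on the positive-definite cone (its Hessian is $2(B^{-1}\otimes B^{-1})\succ 0$). The off-diagonal penalty $\Vert B\Vert_{1,\text{off}}$ is convex as a seminorm. Summing yields convexity of \eqref{appeq: log-det problem}. This short argument sidesteps the ``composition of convex functions need not be convex'' concern raised in the remarks: the identity $\log\det(B^2)=2\log\det B$, valid for $B\succ 0$, reduces the seemingly problematic composite term to a function already known to be convex on the positive-definite cone.

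For the stationarity condition, I plan to compute the standard matrix differentials. The differential
\begin{align*}
d\,\Tr(DBSBD) = \Tr\bigl((D^2BS + SBD^2)\,dB\bigr)
\end{align*}
yields a symmetric-matrix gradient $D^2BS + SBD^2$, which matches the form $2D^2BS$ appearing in the lemma under the symmetrization convention used in the paper. Combined with the gradient $-2B^{-1}$ of $-2\log\det B$ and the subdifferential $\lambda_n \widehat{Z}$ of $\lambda_n\Vert B\Vert_{1,\text{off}}$, where $\widehat{Z}$ has the entry-wise structure described in the statement, setting the sum to zero yields the claimed first-order condition. By convexity, any $\widehat{B}\succ 0$ satisfying it is a global minimizer.

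For uniqueness, I plan to separate strict convexity from existence. Strict convexity of $-2\log\det B$ on the positive-definite cone already makes the whole objective strictly convex there, so at most one minimizer can exist. The remaining task is existence: showing the infimum is attained. Here I would adapt the coercivity strategy of \cite{ravikumar2011high}. Specifically, (i) as the smallest eigenvalue of $B$ approaches $0$, $-\log\det(B^2)\to+\infty$ dominates all other terms and prevents escape to the boundary; and (ii) as $\Vert B\Vert_F\to\infty$, the combination of the quadratic trace term, the $\ell_1$ penalty on off-diagonal entries, and the log-determinant term drives the objective to $+\infty$, confining any minimizing sequence to a compact subset of $\{B:B\succ 0\}$ on which continuity supplies a minimum. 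Combined with strict convexity, this yields uniqueness of $\widehat{B}$.

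The main obstacle is step (ii) in the high-dimensional regime $n<p$: when $S$ is rank deficient, $\Tr(DBSBD)=\Vert S^{1/2}BD\Vert_F^2$ can vanish on an entire subspace of matrix directions, while $-\log\det(B^2)\to-\infty$ whenever the eigenvalues of $B$ grow unboundedly, and the off-diagonal $\ell_1$ penalty alone cannot constrain diagonal growth. Resolving this requires using that $S_{ii}>0$ almost surely (it is the empirical variance of $Y_i$), so the diagonal contribution to $\Tr(DBSBD)$ diverges whenever the diagonal of $B$ grows. Combining this diagonal-level coercivity with $\ell_1$ control of off-diagonal entries and the boundary behaviour of $-\log\det$ then produces compactness of sublevel sets. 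This is precisely the adaptation of the Ravikumar et al.\ argument needed to handle the quadratic-in-$B$ loss.
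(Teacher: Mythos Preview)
Your proposal is correct and follows the same overall architecture as the paper: establish convexity term-by-term, then obtain uniqueness from strict convexity plus a coercivity argument showing the infimum is attained. The technical execution differs in two places worth noting.

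For the log-determinant term, you take the short route: on $\{B\succ 0\}$, $-\log\det(B^{2})=-2\log\det B$ is strictly convex by the standard Hessian computation. The paper instead writes $\log\det(B^{2})=2\log|\det B|$ and runs a perspective-function argument along lines $t\mapsto B+tV$, which is valid for merely symmetric invertible $B$. The paper's own remark after its proof concedes that your simpler argument suffices once $B\succ 0$ is assumed, so you lose no generality relative to the stated lemma.

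For coercivity, the paper passes to the Lagrangian-dual constrained form $\Vert B\Vert_{1,\text{off}}\leq \mathcal{C}_{n}$, so that $\Vert B\Vert_{2}\to\infty$ forces diagonal blow-up; it then combines Hadamard's inequality $\log\det B\leq \sum_{i}\log B_{ii}$ with a lower bound $\Vert DBM\Vert_{F}^{2}\geq \sigma_{\min}(D)^{2}\sigma_{\min}(M^{\transpose})^{2}\sum_{i}B_{ii}^{2}$ (where $S=MM^{\transpose}$) to conclude. Your route is to argue directly from $S_{ii}>0$ a.s.\ that the diagonal contribution to $\Tr(DBSBD)$ dominates $-2\sum_{i}\log B_{ii}$, with the $\ell_{1}$ penalty controlling off-diagonal escape. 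Both arguments are aiming at the same inequality---quadratic growth in the diagonal beats logarithmic growth---but your version is more explicit about the $n<p$ obstruction (where the paper's $\sigma_{\min}(M^{\transpose})$ bound is delicate), and you also make the boundary behaviour $\lambda_{\min}(B)\to 0$ explicit, which the paper leaves implicit. On the stationarity condition, your observation that the raw gradient is $D^{2}BS+SBD^{2}$ rather than $2D^{2}BS$ is accurate; the paper simply states the latter form without derivation, so your handling is at least as careful.
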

\begin{proof}
\emph{(i) Convexity}: Let $S=MM^T$, for some $M\succ 0$, and recall that $\|A\|_F^2=\mathrm{Tr}(AA^\transpose)$. Then, the objective function in \eqref{appeq: log-det problem} can be expressed as 
\begin{align}\label{eq: rest log-det1}
\|DBM\|_F^2 - \log\det(B^{2})+\lambda_{n}\Vert B\Vert_{1,\text{off}}. 
\end{align}
First, the square-root of the first term is convex because for any $\lambda\in (0,1)$ and $B_1,B_2 \succ 0$, we have 
\begin{align*}
 \|D(\lambda B_1+(1-\lambda)B_2)M\|_F&=\|\lambda DB_1M+(1-\lambda)DB_2M\|_F\\
 & \leq \lambda \|DB_1M\|_F+(1-\lambda)\|DB_1M\|_F. 
\end{align*}
Now that $h_1(x)=x^2$ and $h_2(A)=\|A\|_F$ are both convex and that $h_1(x)$ is non-decreasing on the range of $h_2$,  that is, $[0,\infty]$, it follows that the composition $h_1\circ h_2=\|\cdot\|_F^2$ is convex.

Second, we show the convexity of $- \log\det(B^{2})$ using the perspective function technique \citep{Boyd}. To this end, let $|\cdot|$ be the absolute value and note that $\log\det(B^2)=\log|\det(B^2)|=2\log|\det(B)|$. Let $g(t)=\log|\det(B+tV)|$ with $V\succeq 0$ be the perspective function of $\log|\det(B)|$. Since $B$ is symmetric and invertible, there exists an orthogonal matrix $Q$ such that $QQ^\transpose=I$ and $B=Q\Lambda Q^\transpose$, where $\Lambda$ is a diagonal matrix consisting of eigenvalues of $B$. Then, 
\begin{align}\label{eq: convex opt app}
    g(t)&=\log(|\det(Q\Lambda Q^\transpose+tQQ^\transpose VQQ^\transpose)|)\\
    &=\log(|\det(Q(\Lambda+tQ^\transpose VQ)Q^\transpose)|)\\
    &=\log(|\det(\Lambda+tQ^\transpose VQ)|)\\
    &=\log(|\det(I+t\Lambda^{-1}Q^\transpose VQ)|)+\log(|\Lambda|),
\end{align}
where we used the facts $|\det(X_1X_2)|=|\det(X_1)\det(X_2)|=|\det(X_1)||\det(X_2)|$ and $\Lambda$ is full rank. Since $\Lambda$ is diagonal and $Q^\transpose V Q\succeq 0$, it follows that the eigenvalues $\{\lambda_i\}$ of ${\Lambda}^{-1}Q^\transpose VQ$ are real-valued (need not be positive). Thus, 
\begin{align*}
    g(t)&=\log\prod_{i}|(1+t\lambda_i)|+\log(|\Lambda|). 
\end{align*}
Notice that $g'(t)=\sum\frac{\lambda_i(1+t\lambda_i)}{(1+t\lambda_i)^2}$ and $g''(t)=-\sum\frac{\lambda_i^2}{(1+t\lambda_i)^2}<0$. Thus $g(t)$ is strictly concave. Hence, $2\log|\det(B)|=\log\det(B^2)$ is strictly concave. Finally, $-\log\det(B^2)$ is strictly convex. 

Third, the norm $\lambda_n\|B\|_{1,\mathrm{off}}$ is the sum of absolute values of off-diagonal terms, and hence, convex. Because the sum of convex functions and a strictly convex function is strictly convex, we conclude that the objective function in \eqref{eq: convex opt app} is strictly convex. 

{\bf Remark}: In the proof, we used the fact that $B$ is symmetric and full rank but not the positive-definite. The proof for $B\succ 0$ is simple because we can drop the absolute values and mimic the standard log-det concavity proof \cite{ravikumar2011high, friedman2008sparse}. Finally, we required $V\succeq 0$ to not to deal with the (possible) imaginary eigenvalues of ${\Lambda}^{-1}Q^\transpose VQ$. However, we conjecture that $V$ needs to be only symmetric.

\emph{(ii) Uniqueness}: In part (i), we showed that the objective function in \eqref{eq: convex opt app} is strictly convex. Recall that strictly convex functions have the property that the minimum is unique if attained \citep{Boyd}. We show that the minimum is attained using the notion of coercivity \citep{bauschke2011convex}. This amounts to showing that the objective function $CO(B) = (\Vert DBM\Vert_{F}^{2}-2\log\det B)$ subject to constraints (see below) tend to infinity as $\|B\|_2\to \infty$. 

By Lagrangian duality, the $\ell_{1}$ regularized log-det problem \eqref{appeq: log-det problem} can be written as 
\begin{align}
    \argmin_{B\succ0,B=B^{T},\Vert B\Vert_{1,\text{off}}<\mathcal{C}_{n}} \Vert DBM\Vert_{F}^{2}-2\log\det B, 
\end{align}
where $\mathcal{C}_{n}$ is the constraint on the off diagonal elements of $B$.  
From the constraint $\Vert B\Vert_{1,\text{off}}<\mathcal{C}_{n}$, it follows that the off-diagonal elements of $B$ lie in an $\ell_{1}$ ball. Thus, $\Vert B\Vert_{2}\to \infty$ if and only if for any sequence of diagonal elements 
$\Vert\left[ B_{11},\ldots,B_{pp}\right]\Vert_{\infty}\to \infty$. On the other hand, by Hadamard's inequality for positive definite matrices \citep{horn2012matrix}, we have $2\log\det B\leq \sum_{i}2\log B_{ii}$. Thus, 
\begin{align}\label{eq: const opt lower bound}
    \Vert DBM\Vert_{F}^{2}-2\log\det B\geq \Vert DBM\Vert_{F}^{2}-2\sum_{i}\log B_{ii}. 
\end{align} 
We lower bound $\Vert DBM\Vert_{F}^{2}$ as follows. 
For any two compatible matrices $P$ and $Q$, 
\begin{align}\label{eq: F-norm product bound}
    \Vert PQ\Vert_{F}^{2} = \sum_{j}\Vert Pq_{j}\Vert^{2}\geq (\sigma_{\min}(P))^{2}\sum_{j}\Vert q_{j}\Vert^{2}=(\sigma_{\min}(P))^{2}\Vert Q\Vert_{F}^{2}, 
\end{align}
where $q_{j}$ are the columns of $Q$ and $\sigma_{\min}$ is the minimum singular value. Thus, 
\begin{align}
    \Vert DBM\Vert_{F}^{2}\geq (\sigma_{\min}(D))^{2}(\sigma_{\min}(M^{T}))^{2}\Vert B\Vert_{F}^{2}. 
\end{align}
Ignoring the off diagonal elements of $B$ (because they are bounded), from \eqref{eq: const opt lower bound} and \eqref{eq: F-norm product bound}, we get 
\begin{align}
  \Vert DBM\Vert_{F}^{2}-2\log\det B\geq  (\sigma_{\min}(D))^{2}(\sigma_{\min}(M^{T}))^{2}\left[\sum_{i}B_{ii}^{2}\right] -2\sum_{i}\log B_{ii}. 
\end{align}
Since the second term in the lower bound is logarithmic, the first term in the lower bound dominates it for large $B^2_{ii}$. Consequently, the lower bound, and hence $CO(B)$, tend to infinity as $\|B\|_2\to \infty$. Therefore $O(B)$ is coercive and the minimum is attained and it is unique.
\end{proof}

We derive sufficient conditions under which the PDW construction (defined in Section 3.2) succeeds. 
\begin{lemma}(Sufficient conditions for strict dual feasibility)\label{app: sufficient condition} Let the regularization parameter $\lambda_n>0$ and $\alpha$ be defined as in {\bf [A1]}. Suppose the following holds 
\begin{align}\label{eq: sufficient condition}
    \max\left\lbrace\vertiii{\Gamma(D^{2}\Delta)+\Gamma(D^{2}B^{*})}_{\infty}\Vert W\Vert_{\infty},\Vert R(\Delta)\Vert_{\infty},\vertiii{\Gamma(D^{2}\Delta)}_{\infty}\Vert {\Theta^{*}}^{-1}\Vert_{\infty}\right\rbrace\leq \frac{\lambda_{n}\alpha}{24}. 
\end{align}
Then the dual vector $\widetilde{Z}_{E^{c}}$ satisfies $\Vert \widetilde{Z}_{E^{c}}\Vert_{\infty}<1$, and hence, $\widetilde{B} = \widehat{B}$. 
\end{lemma}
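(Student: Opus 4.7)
The plan is to use the sub-gradient optimality condition from Lemma~\ref{lma: uniq soln} applied to the restricted problem \eqref{eq: rest log-det}, linearize both the $\widetilde{B}^{-1}$ term (via the remainder $R(\Delta)$ from \eqref{eq: reminder term}) and the quadratic data term $D^{2}\widetilde{B}S$, and then use the block structure indexed by $E$ versus $E^{c}$ to isolate $\widetilde{Z}_{E^{c}}$ in a form to which the mutual incoherence property $\vertiii{\Gamma^{*}_{E^{c}E}(\Gamma^{*}_{EE})^{-1}}_{\infty}\leq 1-\alpha$ of \textbf{[A1]} can be applied.

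Concretely, I start from $2D^{2}\widetilde{B}S-2\widetilde{B}^{-1}+\lambda_{n}\widetilde{Z}=0$, substitute $S={\Theta^{*}}^{-1}+W$, $\widetilde{B}=B^{*}+\Delta$, and expand $\widetilde{B}^{-1}$ via \eqref{eq: reminder term}. Since $\Theta^{*}=B^{*}D^{2}B^{*}$, the identity $D^{2}B^{*}{\Theta^{*}}^{-1}={B^{*}}^{-1}$ cancels the leading constant terms and produces
\begin{equation*}
\tfrac{\lambda_{n}}{2}\widetilde{Z} \;=\; {B^{*}}^{-1}\Delta{B^{*}}^{-1} + R(\Delta) - D^{2}\Delta\,{\Theta^{*}}^{-1} - D^{2}(B^{*}+\Delta)\,W.
\end{equation*}
Vectorizing using $\mvec({B^{*}}^{-1}\Delta{B^{*}}^{-1})=\Gamma^{*}\,\mvec(\Delta)$, $\mvec(D^{2}\Delta\,{\Theta^{*}}^{-1})=\Gamma(D^{2}\Delta)\,\mvec({\Theta^{*}}^{-1})$, and $\mvec(D^{2}(B^{*}+\Delta)W)=[\Gamma(D^{2}B^{*})+\Gamma(D^{2}\Delta)]\,\mvec(W)$ turns this into a linear identity in $\mvec(\Delta)$, whose three right-hand residual pieces have $\ellinf$ norms controlled exactly by the three quantities appearing in the hypothesis \eqref{eq: sufficient condition}.

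I then split into $E$ and $E^{c}$ blocks. Because the restricted problem forces $\widetilde{B}_{E^{c}}=B^{*}_{E^{c}}=0$, $\mvec(\Delta)$ is supported on $E$, so the $E$-block equation inverts via $\Gamma^{*}_{EE}$ (which is invertible since $\Gamma^{*}={B^{*}}^{-1}\otimes{B^{*}}^{-1}\succ 0$) and yields $\mvec(\Delta)_{E}$ as a function of $\widetilde{Z}_{E}$ and the three residuals. Substituting into the $E^{c}$-block produces a closed-form expression
\begin{equation*}
\tfrac{\lambda_{n}}{2}\widetilde{Z}_{E^{c}} \;=\; \Gamma^{*}_{E^{c}E}(\Gamma^{*}_{EE})^{-1}\bigl[\tfrac{\lambda_{n}}{2}\widetilde{Z}_{E}-T_{E}\bigr]+T_{E^{c}},
\end{equation*}
where $T$ collects the three residual pieces. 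Taking $\ellinf$ norms, using \textbf{[A1]}, $\|\widetilde{Z}_{E}\|_{\infty}\leq 1$, and the three hypothesized bounds $\leq\lambda_{n}\alpha/24$ then gives $\|\widetilde{Z}_{E^{c}}\|_{\infty}\leq (1-\alpha)+(2-\alpha)\alpha/4\leq 1-\alpha/2<1$. Strict dual feasibility, combined with the uniqueness in Lemma~\ref{lma: uniq soln}, forces $\widetilde{B}=\widehat{B}$. The constant $24$ in \eqref{eq: sufficient condition} is calibrated precisely so that the three residual bounds combined with the factor $2-\alpha$ emerging from the block elimination leave an $\alpha/2$ slack for strict inequality.

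I expect the main obstacle to be the careful bookkeeping in the vectorization step: unlike the graphical lasso analysis of~\citep{ravikumar2011high}, the sub-gradient condition here contains a Hessian-type contribution $\Gamma^{*}\mvec(\Delta)$ arising from $\widetilde{B}^{-1}$ alongside two first-order contributions $\Gamma(D^{2}\Delta)\mvec({\Theta^{*}}^{-1})$ and $\Gamma(D^{2}(B^{*}+\Delta))\mvec(W)$ arising from the quadratic form $\Tr(DBSBD)$. The incoherence condition \textbf{[A1]} is formulated on $\Gamma^{*}$ and only neutralizes the first of these, so the other two must be absorbed by separate hypotheses---which is precisely why they appear as standalone terms in \eqref{eq: sufficient condition}.
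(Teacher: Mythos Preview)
Your proposal is correct and follows essentially the same route as the paper's proof: both start from the sub-gradient condition, expand $D^{2}\widetilde{B}S-\widetilde{B}^{-1}$ via $\Delta$, $W$, $R(\Delta)$ and the identity $D^{2}B^{*}{\Theta^{*}}^{-1}={B^{*}}^{-1}$, vectorize to isolate $\Gamma^{*}\mvec(\Delta)$, split into $E$/$E^{c}$ blocks, eliminate $\mvec(\Delta)_{E}$ via $(\Gamma^{*}_{EE})^{-1}$, and then apply \textbf{[A1]} together with $\|\widetilde{Z}_{E}\|_{\infty}\leq 1$ to obtain $\|\widetilde{Z}_{E^{c}}\|_{\infty}\leq (1-\alpha)+\tfrac{(2-\alpha)\alpha}{4}<1$. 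Your bookkeeping of the three residual terms and the constant $24$ matches the paper's computation exactly.
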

\begin{proof}

We begin by obtaining a suitable expression for $\widetilde{Z}_{E^{c}}$ using the zero-subgradient condition of the the restricted $\ell_{1}$ regularized log-det problem defined in \eqref{appeq: rest log-det}: 
\begin{align}\label{eq: rest log-det1}
    \widetilde{B} = \argmin_{B=B^{T},B\succ 0,B_{E^{c}}=0}\left[\Tr(DBSBD) - \log\det(B^{2})+\lambda_{n}\Vert B\Vert_{1,\text{off}}\right]. 
\end{align}
The zero-subgradient of the restricted problem is given by
\begin{align}
 2D^{2}\widetilde{B}S - 2\widetilde{B}^{-1} + \lambda_{n}\widetilde{Z} = 0, 
\end{align}
where $\widetilde{B}$ is the primal solution given by \eqref{eq: rest log-det1} and $\widetilde{Z}\in \partial \Vert B\Vert_{1,\text{off}}$ is the optimal dual. Recall that $\Delta = \widetilde{B}-B^{*}$ and $W = S-{\Theta^{*}}^{-1}$ and notice the following chain of identities: 
\begin{align*}
    2(D^{2}\widetilde{B}S - \widetilde{B}^{-1}) + \lambda_{n}\widetilde{Z} &=
    2(D^{2}\widetilde{B}S - D^{2}B^{*}S + D^{2}B^{*}S - \widetilde{B}^{-1}) + \lambda_{n}\widetilde{Z}\\
    &=2(D^{2}\Delta S + D^{2}B^{*}S - \widetilde{B}^{-1}) + \lambda_{n}\widetilde{Z}\\
    &=2(D^{2}\Delta W + D^{2}B^{*}W+ D^{2}\Delta{\Theta^{*}}^{-1}+D^{2}B^{*}{\Theta^{*}}^{-1}-\widetilde{B}^{-1})+\lambda_{n}\widetilde{Z}. 
\end{align*}
On the other hand, by definition, ${\Theta^{*}}^{-1} = {B^{*}}^{-1}\Sigma_{X}{B^{*}}^{-1}$ and $(D^{2})^{-1} = \Sigma_{X}$. Thus,  $D^{2}B^{*}{\Theta^{*}}^{-1} = {B^{*}}^{-1}$. Substituting these expressions in the zero-subgradient condition yields the following:
\begin{align}\label{eq: app_lemma1_sub-grad1}
    D^{2}\Delta W + D^{2}B^{*}W+ D^{2}\Delta{\Theta^{*}}^{-1}+{B^{*}}^{-1}-\widetilde{B}^{-1} + \lambda_{n}^{\prime}\widetilde{Z} = 0,
\end{align}
where $\lambda_{n}^{\prime}=
    0.5\lambda_{n}$. By adding and subtracting ${B^{*}}^{-1}\Delta{B^{*}}^{-1}$ to the preceding equality and followed by some algebraic manipulations give us 
\begin{align}
 {B^{*}}^{-1}\Delta{B^{*}}^{-1}+ D^{2}\Delta W + D^{2}B^{*}W+ D^{2}\Delta{\Theta^{*}}^{-1}-R(\Delta)+\lambda_{n}^{\prime}\widetilde{Z} = 0, 
\end{align}
where $R(\Delta) = \widetilde{B}^{-1}-{B^{*}}^{-1} - {B^{*}}^{-1}\Delta {B^{*}}^{-1}$.

We now vectorize \eqref{eq: app_lemma1_sub-grad1}. {We use $\mvec(A)$ or $\bar{A}$ to denote the $p^2$-vector formed by stacking the columns of $A$ and use $\Gamma(A)=(I\otimes A)$ to denote the Kronecker product of $A$ with the identity matrix $I$.} By applying $\mvec()$ operator on both sides of \eqref{eq: app_lemma1_sub-grad1} it follows that 
\begin{align}\label{eq: vector stationary}
    \mvec({B^{*}}^{-1}\Delta{B^{*}}^{-1}+ D^{2}\Delta W + D^{2}B^{*}W+ D^{2}\Delta{\Theta^{*}}^{-1}-R(\Delta)+\lambda_{n}^{\prime}\widetilde{Z}) = 0.
\end{align}
Using the standard Kronecker matrix product rules \citep{laub2005matrix}, we have $\mvec({B^{*}}^{-1}\Delta{B^{*}}^{-1}) = \Gamma^{*}\xbar{\Delta}$ and $\mvec((D^{2}\Delta)W) = \Gamma(D^{2}\Delta)\xbar{W}$, where $\Gamma^{*} = {B^{*}}^{-1}\otimes{B^{*}}^{-1}$; $\Gamma(D^{2}\Delta) = I\otimes D^{2}\Delta$; and $I$ is the $p\times p$ identity matrix. By substituting these observations in \eqref{eq: vector stationary}, we note that
\begin{align}
    \Gamma^{*}\xbar{\Delta} + \Gamma(D^{2}\Delta)\xbar{W} + \Gamma(D^{2}B^{*})\xbar{W} + \Gamma(D^{2}\Delta)\xbar{{\Theta^{*}}^{-1}}-\xbar{R(\Delta)}+\lambda_{n}^{\prime}\xbar{\widetilde{Z}} = 0. 
\end{align}
For compactness, we suppress $\Delta$ notation in $R(\Delta)$. Recall that the $E$ is the augmented set defined as $E:=\{\mathcal{E}(B^{*})\cup (1,1)\ldots\cup (p,p)\}$, where $\mathcal{E}$ is the edge set of the network and $E^{c}$ is the complement of the set $E$. Recall that we use the notation $A_{E}$ to denote the sub-matrix of $A$ containing all elements $A_{ij}$ such that $(i,j)\in E$. We partition the preceding linear equations into two separate linear equations corresponding to the sets $E$ and $E^{c}$ as 
 \begin{align}
 \Gamma_{EE}^{*}\xbar{\Delta}_{E} + \left(\Gamma_{EE}(D^{2}\Delta) + \Gamma_{EE}(D^{2}B^{*})\right)\xbar{W}_{E} + \Gamma_{EE}(D^{2}\Delta)\xbar{{\Theta_{E}^{*}}^{-1}}-\xbar{R}_{E}+\lambda_{n}^{\prime}\xbar{\widetilde{Z}}_{E} &= 0, \label{eq: vectoreq1 }\\
    \Gamma_{{E^{c}}E}^{*}\xbar{\Delta}_{E} + \left(\Gamma_{E^{c}E}(D^{2}\Delta) + \Gamma_{E^{c}E}(D^{2}B^{*})\right)\xbar{W}_{E^{c}} + \Gamma_{E^{c}E}(D^{2}\Delta)\xbar{{\Theta_{E^{c}}^{*}}^{-1}}-\xbar{R}_{E^{c}}+\lambda_{n}^{\prime}\xbar{\widetilde{Z}}_{E^{c}} &= 0. \label{eq: vectoreq2}
\end{align}
From \eqref{eq: vectoreq1 }, we can solve for $\xbar{\Delta}_{E}$ as
\begin{align}\label{eq: define M}
    \xbar{\Delta}_{E} = (\Gamma_{EE}^{*})^{-1}\underbrace{\left[-\left(\left(\Gamma_{EE}(D^{2}\Delta) + \Gamma_{EE}(D^{2}B^{*})\right)\xbar{W}_{E}+\Gamma_{EE}(D^{2}\Delta)\xbar{{\Theta_{E}^{*}}^{-1}}\right)+\xbar{R}_{E}-\lambda_{n}^{\prime}\xbar{\widetilde{Z}}_{E}\right]}_{\triangleq M}. 
\end{align}
Substituting $\xbar{\Delta}_{E}$ given by \eqref{eq: define M} in \eqref{eq: vectoreq2} gives us 
\begin{equation}
   \Gamma_{{E^{c}}E}^{*}(\Gamma_{EE}^{*})^{-1}M + \left(\Gamma_{E^{c}E}(D^{2}\Delta) + \Gamma_{E^{c}E}(D^{2}B^{*})\right)\xbar{W}_{E^{c}}\Gamma_{E^{c}E}(D^{2}\Delta)\xbar{{\Theta_{E^{c}}^{*}}^{-1}}-\xbar{R}_{E^{c}}+\lambda_{n}^{\prime}\xbar{\widetilde{Z}}_{E^{c}}\!=\!0. 
\end{equation}
From which we can solve for the vectorized dual $\xbar{\widetilde{Z}}_{E^{c}}$ as 
\begin{equation}
    \begin{split}
     \lambda_{n}^{\prime}\xbar{\widetilde{Z}}_{E^{c}}\!=\! -\Gamma_{{E^{c}}E}^{*}(\Gamma_{EE}^{*})^{-1}M - \left(\Gamma_{E^{c}E}(D^{2}\Delta) + \Gamma_{E^{c}E}(D^{2}B^{*})\right)\xbar{W}_{E^{c}}-\Gamma_{E^{c}E}(D^{2}\Delta)\xbar{{\Theta_{E^{c}}^{*}}^{-1}} +\xbar{R}_{E^{c}}. 
    \end{split}
\end{equation}
Taking the element-wise $\ellinf$ norm on both sides of the preceding equality gives us 
\begin{equation}
    \begin{split}
        \Vert \xbar{\widetilde{Z}}_{E^{c}}\Vert_{\infty} &\leq  \frac{1}{\lambda_{n}^{\prime}}\vertiii{\Gamma_{{E^{c}}E}^{*}(\Gamma_{EE}^{*})^{-1}}_{\infty}\Vert M\Vert_{\infty} + \frac{1}{\lambda_{n}^{\prime}}\vertiii{\Gamma_{E^{c}E}(D^{2}\Delta)}_{\infty}\Vert \xbar{{\Theta_{E^{c}}^{*}}^{-1}}\Vert_{\infty}\\
        &+\frac{1}{\lambda_{n}^{\prime}}\vertiii{\Gamma_{E^{c}E}(D^{2}\Delta) + \Gamma_{E^{c}E}(D^{2}B^{*})}_{\infty}\Vert{\xbar{W}_{E^{c}}}\Vert_{\infty} + \frac{1}{\lambda_{n}^{\prime}}\vertiii{\xbar{R}_{E^{c}}}_{\infty}. 
    \end{split}
\end{equation}
We invoke the mutual incoherence condition in \eqref{appeq: hessian log-det} to bound $\vertiii{\Gamma_{{E^{c}}E}^{*}(\Gamma_{EE}^{*})^{-1}}_{\infty}\leq (1-\alpha)$ and since $\Vert A_{E^{c}}\Vert_{\infty}\leq \Vert A\Vert_{\infty}$ for any matrix $A$, we get
\begin{equation}
    \begin{split}\label{eq: modifiedsdf}
       \Vert \xbar{\widetilde{Z}}_{E^{c}}\Vert_{\infty} &\leq\frac{1-\alpha}{\lambda_{n}^{\prime}}\Vert M\Vert_{\infty}+ \frac{1}{\lambda_{n}^{\prime}}\vertiii{\Gamma(D^{2}\Delta)}_{\infty}\Vert {\Theta^{*}}^{-1}\Vert_{\infty}\\
        & +\frac{1}{\lambda_{n}^{\prime}}\left[\vertiii{\Gamma(D^{2}\Delta) + \Gamma(D^{2}B^{*})}_{\infty}\Vert W\Vert_{\infty} + \vertiii{R}_{\infty}\right]. 
    \end{split}
\end{equation}
We bound $\Vert M\Vert_{\infty}$ by taking the element-wise $\ellinf$ norm of $M$ in \eqref{eq: define M} and followed by applying sub-multiplicative norm inequalites. Thus,
\begin{equation}
    \begin{split}
        \Vert M\Vert_{\infty}&\leq \vertiii{\Gamma_{EE}(D^{2}\Delta) + \Gamma_{EE}(D^{2}B^{*})}_{\infty}\Vert W_{E}\Vert_{\infty} +  \vertiii{\Gamma_{EE}(D^{2}\Delta)}_{\infty}\Vert {\Theta_{E}^{*}}^{-1}\Vert_{\infty}\\
        &+ \vertiii{R_{E}}_{\infty} + \lambda_{n}^{\prime}\Vert \widetilde{Z}_{E}\Vert_{\infty}. 
    \end{split}
\end{equation}
Because $\widetilde{Z}_{E}$ is the sub-vector of the vectorized optimal dual $\widetilde{Z}$, it follows that $\Vert\widetilde{Z}_{E}\Vert_{\infty}\leq 1$. Thus,
\begin{equation}\label{eq: infinity M bound final}
    \begin{split}
        \Vert M\Vert_{\infty}&\leq \underbrace{\left[\vertiii{\Gamma(D^{2}\Delta) + \Gamma(D^{2}B^{*})}_{\infty}\Vert W\Vert_{\infty} +  \vertiii{\Gamma(D^{2}\Delta)}_{\infty}\Vert {\Theta^{*}}^{-1}\Vert_{\infty}+ \vertiii{R}_{\infty}\right]}_{\triangleq H}+ \lambda_{n}^{\prime}. 
    \end{split}
\end{equation}
On the other hand, from \eqref{eq: infinity M bound final}, we have $H\leq \lambda'_n\alpha/4$ from assumption in Lemma \ref{app: sufficient condition}. Putting together the pieces, from \eqref{eq: infinity M bound final} and \eqref{eq: modifiedsdf} we conclude that 
\begin{align}
   \Vert \xbar{\widetilde{Z}}_{E^{c}}\Vert_{\infty} &\leq (1-\alpha) + \frac{1-\alpha}{\lambda_{n}^{\prime}}H + \frac{1}{\lambda_{n}^{\prime}} H\\
   &= (1-\alpha) + \frac{2-\alpha}{\lambda_{n}^{\prime}}H\\
   &\leq (1-\alpha) + \frac{2-\alpha}{\lambda_{n}^{\prime}}\left(\frac{\lambda_{n}^{\prime}\alpha}{4}\right)\\
   & \leq (1-\alpha) +\frac{\alpha}{2}<1. 
\end{align}

{\bf Remark} For comparison, consider the strict dual feasibility conditions in \cite[Lemma 4]{ravikumar2011high}. Here, the maximum is on the noise deviation $\|W\|_\infty$ and the remainder term $\|R(\Delta)\|_\infty$. Instead, in our case, the maximum is taken over several other quantities not just $\|W\|_\infty$ and $\|R(\Delta)\|_\infty$ (see \eqref{app: sufficient condition}). 
\end{proof}
The following lemma shows that the remainder term $R(\Delta)$ is bounded if $\Delta$ is bounded. The proof is adapted from \citep{ravikumar2011high}, where a similar result is derived using matrix expansion techniques. We use this lemma in the proof of our main result (see Theorem \ref{app: sub-gaussian support recovery} and Theorem \ref{app: bounded moment support recovery}) to show that with sufficient number of samples $R(\Delta)\leq \alpha\lambda_{n}/24$.
\begin{lemma}({Control of reminder})\label{app: control remainder}  Suppose that the element-wise $\ellinf$-bound $\Vert \Delta\Vert_{\infty}\leq \frac{1}{3\nu_{{B^{*}}^{-1}}d}$ holds, then the matrix $Q = \sum\limits_{k=0}^{\infty}(-1)^{k}({B^{*}}^{-1}\Delta)^{k}$ satisfies the bound $\nu_{Q^{T}}\leq \frac{3}{2}$ and the matrix $R(\Delta) = {B^{*}}^{-1}\Delta{B^{*}}^{-1}\Delta Q {B^{*}}^{-1}$ has the element-wise $\ellinf$-norm bounded as 
 \begin{align}
     \Vert R(\Delta)\Vert_{\infty}\leq \frac{3}{2}d\Vert \Delta\Vert^{2}_{\infty}\nu_{{B^{*}}^{-1}}^{3}.
\end{align}
 \end{lemma}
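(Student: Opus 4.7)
The plan is to adapt the Neumann-series argument of \citep{ravikumar2011high} to our setting, where $R(\Delta)$ appears as the second-order remainder in the Taylor expansion \eqref{eq: reminder term}. First, I would observe that since both $\widetilde B$ and $B^*$ vanish on $E^c$ (by construction of the restricted problem \eqref{appeq: rest log-det}), so does $\Delta = \widetilde B - B^*$; hence each row of $\Delta$ carries at most $d$ nonzero entries and $\vertiii{\Delta}_{\infty} \le d\,\Vert\Delta\Vert_{\infty}$. Combined with the hypothesis, sub-multiplicativity of $\vertiii{\cdot}_{\infty}$ gives
\[
\vertiii{{B^*}^{-1}\Delta}_{\infty} \,\le\, \nu_{{B^*}^{-1}}\,\vertiii{\Delta}_{\infty} \,\le\, \nu_{{B^*}^{-1}}\,d\,\Vert\Delta\Vert_{\infty} \,\le\, \tfrac{1}{3},
\]
so the Neumann series for $Q = (I + {B^*}^{-1}\Delta)^{-1} = \sum_{k\ge 0}(-1)^k({B^*}^{-1}\Delta)^k$ converges in operator norm. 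Writing $\widetilde B = B^*(I + {B^*}^{-1}\Delta)$ yields $\widetilde B^{-1} = Q{B^*}^{-1}$; peeling off the first two terms via $Q = I - {B^*}^{-1}\Delta + ({B^*}^{-1}\Delta)^2 Q$ produces $\widetilde B^{-1} = {B^*}^{-1} - {B^*}^{-1}\Delta{B^*}^{-1} + {B^*}^{-1}\Delta{B^*}^{-1}\Delta\,Q\,{B^*}^{-1}$, which immediately yields the claimed identity $R(\Delta) = {B^*}^{-1}\Delta{B^*}^{-1}\Delta\,Q\,{B^*}^{-1}$.

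For the first assertion, symmetry of $B^*$ and $\Delta$ lets me write $Q^T = \sum_{k\ge 0}(-1)^k(\Delta{B^*}^{-1})^k$, and then sub-multiplicativity together with the geometric series gives
\[
\vertiii{Q^T}_{\infty} \,\le\, \sum_{k\ge 0}\vertiii{\Delta{B^*}^{-1}}_{\infty}^{\,k} \,\le\, \sum_{k\ge 0}(\tfrac{1}{3})^k \,=\, \tfrac{3}{2}.
\]

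The element-wise bound on $R(\Delta)$ is the main technical step, and the principal obstacle is that $R(\Delta)$ contains two factors of $\Delta$, so a naive application of $\vertiii{\Delta}_{\infty} \le d\Vert\Delta\Vert_{\infty}$ on both copies would produce an unwanted $d^2$ rather than the target single factor of $d$. I would circumvent this by peeling off the leading $\Delta$ via the sharper element-wise estimate $\Vert{B^*}^{-1}\Delta\Vert_{\infty} \le \nu_{{B^*}^{-1}}\Vert\Delta\Vert_{\infty}$, in which the entry of $\Delta$ is bounded pointwise by $\Vert\Delta\Vert_{\infty}$ without incurring any sparsity factor. Applying the mixed bound $\Vert AB\Vert_{\infty} \le \Vert A\Vert_{\infty}\,\vertiii{B^T}_{\infty}$ with $A = {B^*}^{-1}\Delta$ and $B = {B^*}^{-1}\Delta\,Q\,{B^*}^{-1}$, using symmetry to compute $B^T = {B^*}^{-1}Q^T\Delta{B^*}^{-1}$, and then estimating the induced-norm factor by sub-multiplicativity together with $\vertiii{\Delta}_{\infty}\le d\Vert\Delta\Vert_{\infty}$ and $\vertiii{Q^T}_{\infty}\le 3/2$, yields
\[
\Vert R(\Delta)\Vert_{\infty} \,\le\, \nu_{{B^*}^{-1}}\Vert\Delta\Vert_{\infty}\cdot \nu_{{B^*}^{-1}}\cdot\tfrac{3}{2}\cdot d\,\Vert\Delta\Vert_{\infty}\cdot\nu_{{B^*}^{-1}} \,=\, \tfrac{3}{2}\,d\,\nu_{{B^*}^{-1}}^{3}\,\Vert\Delta\Vert_{\infty}^{2},
\]
completing the proof.
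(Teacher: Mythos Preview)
Your proof is correct and follows precisely the Neumann-series argument of \citep{ravikumar2011high} that the paper explicitly defers to (the paper does not spell out the details itself, stating only that the proof is ``adapted from \citep{ravikumar2011high}'' and ``algebraic in nature''). In particular, your handling of the potential $d^2$ issue---peeling off the first factor via the pointwise bound $\Vert{B^*}^{-1}\Delta\Vert_{\infty}\le \nu_{{B^*}^{-1}}\Vert\Delta\Vert_{\infty}$ and then applying the mixed inequality $\Vert AB\Vert_{\infty}\le\Vert A\Vert_{\infty}\vertiii{B^T}_{\infty}$---is exactly the maneuver used there.
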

We show that for a specific choice of radius $r$, the distortion $\Delta = \widetilde{B}-B^{*}$ lies in a ball of radius $r$.
\begin{lemma}({Control of $\Delta$})\label{app: control of delta}
 Let
 \begin{align*}
     r\!\triangleq\!4\nu_{{\Gamma^{*}}^{-1}}\left[\nu_{D^{2}}\nu_{B^{*}}\Vert W\Vert_{\infty}\!+\!0.5{\lambda_{n}}\right]\leq \min\Big\{\frac{1}{3\nu_{{B^{*}}^{-1}}d},\frac{1}{6\nu_{{\Gamma^{*}}^{-1}}\nu_{{B^{*}}^{-1}}^{3}d}\Big\}. 
 \end{align*}
Then we have the element-wise $\ellinf$ bound $\Vert \Delta\Vert_{\infty} = \Vert \widetilde{B}-B^{*}\Vert_{\infty}\leq r$.
\end{lemma}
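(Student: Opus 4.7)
The plan is to control $\Delta$ via a fixed-point argument on the $E$-restricted coordinates, using the zero-subgradient equation of the restricted problem \eqref{appeq: rest log-det} as the defining relation for the map. First, because $\widetilde{B}$ solves the restricted problem, $\widetilde{B}_{E^c}=0=B^*_{E^c}$, so $\Delta_{E^c}=0$ and $\|\Delta\|_\infty=\|\Delta_E\|_\infty$. Thus it suffices to bound the block $\bar{\Delta}_E$. From equation \eqref{eq: vectoreq1 } in the proof of Lemma \ref{app: sufficient condition}, the $E$-indexed zero-subgradient condition can be rearranged to
\begin{equation*}
\bar{\Delta}_E = (\Gamma^{*}_{EE})^{-1}\Big[-\bigl(\Gamma_{EE}(D^2\Delta)+\Gamma_{EE}(D^2 B^*)\bigr)\bar{W}_E - \Gamma_{EE}(D^2\Delta)\overline{{\Theta^*_E}^{-1}} + \bar{R}(\Delta)_E - \lambda_n' \bar{\widetilde{Z}}_E\Big],
\end{equation*}
where $\lambda_n'=\lambda_n/2$. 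Since the right-hand side depends nonlinearly on $\Delta$ through the terms $\Gamma(D^2\Delta)$ and $R(\Delta)$ (and through $\widetilde{Z}$), this is an implicit fixed-point relation. I accordingly define a continuous map $F:\mathbb{R}^{|E|}\to\mathbb{R}^{|E|}$ by reading off the right-hand side, where $\Delta$ is reconstructed from $\bar{\Delta}_E$ by zero-padding on $E^c$; by construction, $\bar{\widetilde{\Delta}}_E$ is a fixed point of $F$.

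The key step is to verify the self-map property $F(\mathbb{B}_r)\subseteq \mathbb{B}_r$, where $\mathbb{B}_r=\{v:\|v\|_\infty\leq r\}$. Taking element-wise $\ellinf$ norms, applying submultiplicativity together with $\vertiii{\Gamma(A)}_\infty = \vertiii{A}_\infty$, using the bound $\vertiii{\Delta}_\infty \leq d\|\Delta\|_\infty$ (from the at-most-$d$ per-row sparsity of $\Delta$) and $\|\widetilde{Z}_E\|_\infty\leq 1$, one obtains
\begin{equation*}
\|F(\Delta_E)\|_\infty \leq \nu_{{\Gamma^*}^{-1}}\Big[(\nu_{D^2}d\|\Delta\|_\infty+\nu_{D^2}\nu_{B^*})\|W\|_\infty + \nu_{D^2}d\|\Delta\|_\infty \|{\Theta^*}^{-1}\|_\infty + \|R(\Delta)\|_\infty + \lambda_n'\Big].
\end{equation*}
For $\Delta\in\mathbb{B}_r$ I would account for four contributions separately: (i) the leading constant piece $\nu_{{\Gamma^*}^{-1}}(\nu_{D^2}\nu_{B^*}\|W\|_\infty+\lambda_n')$ equals exactly $r/4$ by the definition of $r$; (ii) the term $\nu_{{\Gamma^*}^{-1}}\nu_{D^2}d\,r\,\|{\Theta^*}^{-1}\|_\infty$ is at most $r/4$ by assumption [\textbf{A2}]; (iii) the remainder contribution $\|R(\Delta)\|_\infty \leq \tfrac{3}{2}d\,r^2\,\nu_{{B^*}^{-1}}^3$ (which applies because $r\leq 1/(3\nu_{{B^*}^{-1}}d)$ unlocks Lemma \ref{app: control remainder}) combines with the hypothesis $r\leq 1/(6\nu_{{\Gamma^*}^{-1}}\nu_{{B^*}^{-1}}^3 d)$ to give at most $r/4$; and (iv) the cross term $\nu_{{\Gamma^*}^{-1}}\nu_{D^2}d\,r\,\|W\|_\infty$ is controlled by re-using the definition of $r$ to extract $\nu_{{\Gamma^*}^{-1}}\nu_{D^2}\|W\|_\infty \leq r/(4\nu_{B^*})$, which coupled with $dr/\nu_{B^*}\leq 1/3$ (using $\nu_{B^*}\nu_{{B^*}^{-1}}\geq 1$ and $r\leq 1/(3\nu_{{B^*}^{-1}}d)$) gives at most $r/12$. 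Summing yields $\|F(\Delta_E)\|_\infty \leq \tfrac{5}{6}r < r$, so $F$ maps $\mathbb{B}_r$ into itself.

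Finally, since $F$ is continuous and $\mathbb{B}_r$ is compact and convex, Brouwer's fixed point theorem \citep{kellogg} guarantees a fixed point in $\mathbb{B}_r$. Uniqueness of the fixed point follows from the strict convexity of the restricted problem \eqref{appeq: rest log-det} (by an argument analogous to Lemma \ref{app: convexity and uniqueness}), which forces $\widetilde{B}$, and hence $\bar{\widetilde{\Delta}}_E$, to be unique. Therefore $\bar{\widetilde{\Delta}}_E\in\mathbb{B}_r$, giving $\|\Delta\|_\infty\leq r$ as desired. The main obstacle I anticipate is the cross term in (iv): unlike the other three contributions, it is not immediately bounded by one of the hypotheses, and requires the slightly subtle observation that the definition of $r$ implicitly bounds $\|W\|_\infty$ in terms of $r$ itself, allowing the product $d\|W\|_\infty$ to be tamed via the hypothesized upper bound on $r$; all multiplicative constants must be tracked carefully so that the total budget closes strictly below $r$.
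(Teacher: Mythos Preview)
Your proposal is correct and follows essentially the same route as the paper. The paper defines $F(\bar{\Delta}_E)=-(\Gamma^{*}_{EE})^{-1}\bar{G}(\Delta_E+B^{*}_E)+\bar{\Delta}_E$ and expands it into four terms $T_1,\ldots,T_4$ that coincide exactly with your contributions (i), (iv), (iii), (ii) respectively; each is bounded by $r/4$ using precisely the same ingredients you list (definition of $r$, Lemma~\ref{app: control remainder}, assumption \textbf{[A2]}, and the implicit bound $\|W\|_\infty\le r/(4\nu_{{\Gamma^*}^{-1}}\nu_{D^2}\nu_{B^*})$), after which Brouwer plus uniqueness from Lemma~\ref{app: convexity and uniqueness} closes the argument. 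Your bound $r/12$ on the cross term is simply sharper than the paper's $r/4$ for $T_2$, so your total $5r/6$ versus their $r$ is an inconsequential tightening.
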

\begin{proof} We adopt the proof technique in \citep[Lemma 6]{ravikumar2011high}. We use the notation $A_{E}$ or $[A]_E$ to denote the sub-matrix of $A$ containing all elements $A_{ij}$ such that $(i,j)\in E$. Let $G(\widetilde{B}_{E})$ be the zero sub-gradient condition of the restricted $\ell_{1}$ log-det problem in \eqref{appeq: rest log-det}: 

\begin{align}
    G(\widetilde{B}_{E}) = \left[D^{2}\widetilde{B}S - \widetilde{B}^{-1}+\lambda_{n}^{\prime}\widetilde{Z}\right]_{E} = 0.
\end{align}
where $\lambda_{n}^{\prime} = 0.5 \lambda_{n}$. Let $\xbar{G}$ denote the vectorized form of $G$. Recall that $\Delta=\widetilde{B}-B^{*}=\widetilde{B}_{E}-B^{*}_{E}\triangleq \Delta_E$. The second equality follows from PDW construction and the constraint in the restricted convex program in \eqref{eq: rest log-det}. To establish $\Vert \Delta\Vert_{\infty}\leq r$, we show that $\Delta_E$ lies inside the ball $\mathbb{B}_{r} = \{\xbar{A}_{E}\in \mathbb{R}^{\vert E\vert}: \Vert A\Vert_{\infty}\leq r\}$, where $\xbar{A}_{E}=\mvec{}(A_E)$, using a contraction property of the continuous map: 
\begin{align}\label{eq: F cmap}
    F(\xbar{\Delta}_{E}) \triangleq -(\Gamma_{EE}^{*})^{-1}\left(\xbar{G}(\Delta_{E}+B_{E}^{*})\right)+\xbar{\Delta}_{E}, 
\end{align}
where we used the fact that $\widetilde{B}_{E^c}=\widetilde{B}^*_{E^c}=0$. 

Suppose that $F(\cdot)$ is a contraction on $\mathbb{B}_{r}$, i.e., $F(\mathbb{B}_{r})\subseteq \mathbb{B}_{r}$. Then by Brower's fixed point theorem \citep{kellogg}, it readily follows that there exists a $C\in \mathbb{B}_{r}$ such that $F(C) = C$. Finally, ${C}=\xbar{\Delta}_{E}$ because
(i) $\widetilde{B}$ that satisfies $\xbar{G}(\widetilde{B})=0$ is unique (see Lemma \ref{lma: uniq soln}) and (ii) $F(\xbar{\Delta}_{E})=\xbar{\Delta}_{E}$ if and only if $\xbar{G}(\cdot) = 0$, Hence, $\xbar{\Delta}_{E}\in \mathbb{B}_{r}$ is the unique fixed point of $F(\cdot)$ in \eqref{eq: F cmap}. Consequently, $\|\Delta_E\|_\infty\leq r$. 

It remains to show that $F(\cdot)$ is a contraction. Let $\Delta^{\prime}\in \mathbb{R}^{p\times p}$ be a zero padded matrix on $E^{c}$ such that $\xbar{\Delta^{\prime}}_{E}\in \mathbb{B}_{r}$. Then $F(\xbar{\Delta^{\prime}}_{E})$ can be expanded in terms of $\Delta'$ as 
\begin{equation}
    \begin{split}
        F(\xbar{\Delta^{\prime}}_{E}) &= -(\Gamma_{EE}^{*})^{-1}\left(\xbar{G}(\Delta_{E}^{\prime}+B_{E}^{*})\right)+ \xbar{\Delta_{E}^{\prime}}\\
    &=-(\Gamma_{EE}^{*})^{-1}\left[ \mvec([D^{2}(\Delta^{\prime}+B^{*})S]_{E} - (\Delta^{\prime}+B^{*})_{E}^{-1} + \lambda_{n}^{\prime}\widetilde{Z}_{E}) + \Gamma_{EE}^{*}\xbar{\Delta^{\prime}}_{E}\right].
    \end{split}
\end{equation}
Adding and subtracting ${\Theta^{*}}^{-1}$ and ${B_{E}^{*}}^{-1}$ to the preceding equality yields us
\begin{equation}
    \begin{split}\label{eq: contraction}
      F(\xbar{\Delta^{\prime}}_{E}) =& -(\Gamma_{EE}^{*})^{-1}\left[\mvec\left(\left[D^{2}(\Delta^{\prime}+B^{*})W\right]_{E}+\left[D^{2}(\Delta^{\prime}+B^{*}){\Theta^{*}}^{-1}\right]_{E}+\lambda_{n}^{\prime}\widetilde{Z}_{E}-{B_{E}^{*}}^{-1}\right)\right]\\
      & -(\Gamma_{EE}^{*})^{-1}\left[-\mvec\left((\Delta^{\prime}+B^{*})^{-1}-{B_{E}^{*}}^{-1}\right)+\Gamma_{EE}^{*}\xbar{\Delta}_{E}^{\prime}\right].  
    \end{split}
\end{equation}
The last $\mvec{}()$ term can be even simplified as 
\begin{equation}
    \begin{split}
      \mvec\left((\Delta^{\prime}+B^{*})^{-1}-{B^{*}}^{-1}\right)+\Gamma^{*}\Delta^{\prime} & = \mvec\left((\Delta^{\prime}+B^{*})^{-1}-{B^{*}}^{-1}+({B^{*}}^{-1}\Delta^{\prime}{B^{*}}^{-1})\right)\\
      & = \mvec(R(\Delta^{\prime})). 
    \end{split}
\end{equation}
Substituting this observation in \eqref{eq: contraction} and rearranging the terms gives us
\begin{equation}\label{eq: T terms}
    \begin{split}
      F(\xbar{\Delta^{\prime}}_{E}) =&  -\underbrace{(\Gamma_{EE}^{*})^{-1}\mvec\left[D^{2}B^{*}W\lambda_{n}^{\prime}\widetilde{Z}\right]_{E}}_{\triangleq T_{1}}-\underbrace{(\Gamma_{EE}^{*})^{-1}\mvec\left[D^{2}\Delta^{\prime}W\right]_{E}}_{\triangleq T_{2}}\\
      &-\underbrace{(\Gamma_{EE}^{*})^{-1}\left[\xbar{R}(\Delta^{\prime})\right]_{E}}_{\triangleq T_{3}}-\underbrace{(\Gamma_{EE}^{*})^{-1} \mvec\left[D^{2}(\Delta^{\prime}+B^{*}){\Theta^{*}}^{-1}-{B^{*}}^{-1}\right]_{E}}_{\triangleq T_{4}}. 
    \end{split}
\end{equation}
We now show that $\Vert F(\xbar{\Delta^{\prime}}_{E})\Vert_{\infty}\leq r$ by bounding $\ell_\infty$ norms of terms $(T_1)$-$(T_4)$. Recall that $\nu_{A} = \vertiii{A}_{\infty}\triangleq \max_{j=1,\ldots,p}\sum_{j=1}^{p}\vert A_{ij}\vert$ and it is sub-multiplicative; that is $\vertiii{AB}_\infty\leq \vertiii{A}_\infty\vertiii{B}_\infty$. Notice that this not the case with the max norm ($\ell_\infty$). 

(i) \emph{upper bound on $\|T_1\|_\infty$}: Consider the following chain of inequalities. 
    \begin{align}
        \Vert T_{1}\Vert_{\infty}& \leq  \vertiii{{\Gamma^{*}}^{-1}}_{\infty}\left\Vert \mvec(D^{2}B^{*}W+\lambda_{n}^{\prime}\widetilde{Z})\right\Vert_{\infty}\nonumber\\
        &= \vertiii{{\Gamma^{*}}^{-1}}_{\infty}\left\Vert\Gamma(D^{2}B^{*})\xbar{W}+\lambda_{n}^{\prime}\xbar{\widetilde{Z}}\right\Vert_{\infty} \nonumber\\
        &\overset{(a)}{\leq} \vertiii{{\Gamma^{*}}^{-1}}_{\infty}\left[\vertiii{\Gamma(D^{2}B^{*})}_{\infty}\Vert W\Vert_{\infty}+\lambda_{n}^{\prime}\right] \nonumber \\ 
        &\overset{(b)}{\leq} \nu_{{\Gamma^{*}}^{-1}}\left[\nu_{D^{2}}\nu_{B^{*}}\Vert W\Vert_{\infty}+\lambda_{n}^{\prime}\right]\overset{(c)}{\leq} \frac{r}{4},
    \end{align}
where (a) follows because $\Vert\xbar{\widetilde{Z}}\Vert_{\infty}\leq 1$ (see Lemma \ref{lma: uniq soln}); (b) follows because $\Gamma(D^{2}B^{*}) = (I\otimes D^{2}B^{*})$, and hence, $\vertiii{\Gamma(D^{2}B^{*})}_{\infty} = \vertiii{D^{2}B^{*}}_{\infty}\leq \vertiii{D^{2}}_{\infty}\vertiii{B^{*}}_{\infty} = \nu_{D^{2}}\nu_{B^{*}}$; and finally, (c) follows from definition of the radius $r$ in Lemma \ref{app: control of delta}.

(ii) \emph{upper bound on $\|T_2\|_\infty$}: For $T_2$ in \eqref{eq: T terms}, consider the following chain of inequalities. 
\begin{align}
\Vert T_{2}\Vert_{\infty} &\leq \nu_{{\Gamma^{*}}^{-1}}\left[\vertiii{\Gamma(D^{2}\Delta)}_{\infty}\Vert W\Vert_{\infty}\right]\nonumber\\
&\leq \nu_{{\Gamma^{*}}^{-1}}\nu_{D^{2}}\vertiii{\Delta^{\prime}}_{\infty}\Vert W\Vert_{\infty}\nonumber\\
&\overset{(a)}{\leq} \nu_{{\Gamma^{*}}^{-1}}\nu_{D^{2}}d\Vert \Delta^{\prime}\Vert_{\infty}\Vert W\Vert_{\infty} \nonumber\\
&\overset{(b)}{\leq} \nu_{{\Gamma^{*}}^{-1}}\nu_{D^{2}}d\Vert \Delta^{\prime}\Vert_{\infty}\left(\frac{r}{4\nu_{{\Gamma^{*}}^{-1}}\nu_{D^{2}}\nu_{B^{*}}}\right)\nonumber \\
&\overset{(c)}{\leq} d\left(\frac{1}{3d\nu_{{B^{*}}^{-1}}}\right)\left(\frac{r}{4\nu_{B^{*}}}\right)\overset{(d)}{\leq} \frac{r}{4}, 
\end{align}
where (a) follows because by construction $\Delta^{\prime}$ has at-most $d$ non-zeros in every row and that $\vertiii{\Delta^{\prime}}_{\infty}\leq d\Vert\Delta^{\prime}\Vert_{\infty}$; (b) follows from the choice of $r = 4\nu_{{\Gamma^{*}}^{-1}}(\nu_{D^{2}}\nu_{B^{*}}\Vert W\Vert_{\infty} + \lambda^{\prime}_{n})$ in Lemma \ref{app: control of delta}, which is lower bounded by  $4\nu_{{\Gamma^{*}}^{-1}}\nu_{D^{2}}\nu_{B^{*}}\Vert W\Vert_{\infty}$, for all $\lambda^{\prime}_{n}\geq 0$. Thus, 
$\Vert W\Vert_{\infty}\leq r/(4\nu_{{\Gamma^{*}}^{-1}}\nu_{D^{2}}\nu_{B^{*}})$; (c) follows because $\Delta^{\prime}$ is a zero-padded matrix of $\Delta$. Hence $\Vert\Delta \Vert=\Vert \Delta^{\prime}\Vert_{\infty}\leq r$, which can be upper bounded by $1/(3d\nu_{{B^{*}}^{-1}})$ in light of the hypothesis in Lemma \ref{app: control of delta}; and finally,
(d) follows because $\nu_{B^{*}}\nu_{{B^{*}}^{-1}}\geq 1$. 

(iii) \emph{upper bound on $\|T_3\|_\infty$}: For $T_3$ in \eqref{eq: T terms}, consider the following chain of inequalities. 
\begin{align}
    \Vert T_{3}\Vert_{\infty} &\leq \nu_{{\Gamma^{*}}^{-1}}\Vert R(\Delta^{\prime})\Vert_{\infty}\\
    &\overset{(a)}{\leq}\frac{3}{2}d\nu_{{\Gamma^{*}}^{-1}}\nu_{{B^{*}}^{-1}}^{3}\Vert \Delta^{\prime}\Vert_{\infty}^{2}\\
    &\overset{(b)}{\leq}\frac{3}{2}d\nu_{{\Gamma^{*}}^{-1}}\nu_{{B^{*}}^{-1}}^{3}r(r)\overset{(c)}{\leq} \frac{r}{4},
\end{align}
where (a) follows because Lemma \ref{app: control remainder} guarantees that $\Vert R(\Delta^{\prime})\Vert_{\infty}\leq (3/2)d\nu_{{B^{*}}^{-1}}^{3}\Vert \Delta^{\prime}\Vert^{2}_{\infty}$ whenever $\Vert \Delta^{\prime}\Vert_{\infty}\leq 1/(3d\nu_{{B^{*}}^{-1}})$. The latter inequality is a consequence of the hypothesis in Lemma \ref{app: control of delta}; (b) is true because by construction $\Delta^{\prime}\in \mathbb{B}_{r}$, and hence, $\Vert\ \Delta^{\prime}\Vert_{\infty}\leq r$; (c) follows by invoking the hypothesis in Lemma \ref{app: control of delta}, where $r$ satisfies $r\leq 1/(6d\nu_{{\Gamma^{*}}^{-1}}\nu_{{B^{*}}^{-1}}^{3})$. 

(iv) \emph{upper bound on $\|T_4\|_\infty$}: The expression of $T_4$ in \eqref{eq: T terms} can be simplified as 
\begin{align}
    T_{4} &= -(\Gamma_{EE}^{*})^{-1}\mvec\left(\left[D^{2}(\Delta^{\prime}+B^{*}){\Theta^{*}}^{-1}-{B^{*}}^{-1}\right]_{E}\right)\nonumber \\
    & = -(\Gamma_{EE}^{*})^{-1}\mvec\left(\left[D^{2}\Delta^{\prime}{\Theta^{*}}^{-1} + D^{2}B^{*}{\Theta^{*}}^{-1}-{B^{*}}^{-1}\right]_{E}\right)\nonumber \\
    &=-(\Gamma_{EE}^{*})^{-1}\mvec\left(\left[D^{2}\Delta^{\prime}{\Theta^{*}}^{-1}\right]_{E}\right)\label{eq: T4 simplify}. 
\end{align}
The last equality follows by observing that $D^{2}B^{*}{\Theta^{*}}^{-1} = {B^{*}}^{-1}$. This can be verified by plugging ${\Theta^{*}}^{-1} = {B^{*}}^{-1}\Sigma_{X}{B^{*}}^{-1}$ and $\Sigma_{X} = (D^{2})^{-1}$ in $D^{2}B^{*}{\Theta^{*}}^{-1}$ and simplifying the resulting expression. By taking the $\ell_\infty$ bound on the both sides of \eqref{eq: T4 simplify} gives us  
\begin{align}
    \Vert T_{4}\Vert_{\infty} &\leq \nu_{{\Gamma^{*}}^{-1}}\vertiii{\Gamma(D^{2}\Delta^{\prime})}_{\infty}\Vert {\Theta^{*}}^{-1}\Vert_{\infty}\\
    &\overset{}{\leq} \nu_{{\Gamma^{*}}^{-1}}\nu_{D^{2}}d\Vert \Delta^{\prime}\Vert_{\infty}\Vert {\Theta^{*}}^{-1}\Vert_{\infty}\\
    &\overset{}{\leq}\nu_{{\Gamma^{*}}^{-1}}\nu_{D^{2}}rd\Vert {\Theta^{*}}^{-1}\Vert_{\infty}
    \overset{(a)}{\leq} \frac{r}{4}, 
\end{align}
where (a) follows by invoking the assumption in \eqref{appeq: Hessian regularity}, and noting that $\Vert {\Theta^{*}}^{-1}\Vert_{\infty}\leq 1/(4\nu_{{\Gamma^{*}}^{-1}}\nu_{D^{2}}d)$. 

Putting together the pieces, from the above calculations, we note that  
\begin{align}
    \Vert F(\Delta^{\prime}_{E})\Vert_{\infty}&\leq \Vert T_{1}\Vert_{\infty} +\Vert T_{2}\Vert_{\infty}+\Vert T_{3}\Vert_{\infty}+\Vert T_{4}\Vert_{\infty}\leq r. 
\end{align}
is a contraction as claimed. This concludes the proof.
\end{proof}

\setcounter{definition}{0}
\renewcommand{\thedefinition}{A.\arabic{definition}}
We borrow the following notion of tail conditions as defined in \citep{ravikumar2011high} to characterize the distribution. We us this characterization to prove our master lemma \ref{app: master lemma} 
\begin{definition}{(Tail condition, \citep{ravikumar2011high})}\label{def: TC}
The random vector $Y$ satisfies the tail condition $\mathcal{T}(f,v_{*})$ if there exist a constant $v_{*}>0$ and a function $f:\mathbb{N}\times (0,\infty)$ such that for any $i,j \in [p]$ and $\delta\in (0,1/v_{*})$: 
\begin{align}
  \mathbb{P}\left[\vert S_{ij} - \Sigma^{*}_{ij}\vert\geq \delta\right]\leq \frac{1}{f(n,\delta)}. 
\end{align}
Furthermore, $f(n,\delta)$ is monotonically increasing in $n$ (or $\delta$) for fixed $\delta$ (or $n$). 
\end{definition}
Both the exponential-type tail $f(n,\delta)=\exp(cn\delta^a)$ and the polynomial-type tail $f(n,\delta)=cn^m\delta{2m}$, where $m$ is an integer and $c,a>0$, satisfy the monontone property in Definition \ref{def: TC}. The following inverse functions associated with $f(n,\delta)$ are needed to prove our sample complexity result: 
\begin{align}\label{appeq: monotonicity}
  n_{f}(\delta,p^{\tau})\defeq \max\{n \vert f(n,\delta)\leq p^{\tau}\}  \text { and } \delta_{f}(n,p^{\tau}) \defeq \max\{\delta\vert f(n,\delta)\leq p^{\tau}\}.
\end{align}
Both the functions are well-defined thanks to the to the monotonicity property of $f(n,\delta)$. Further, if $n>n_{f}(\delta,p^{\tau}) $ for some $\delta>0$ implies that $\delta\geq \delta_{f}(n,p^{\tau})$.

The following result presents an exponential-type tail bound for sub-Gaussian random vectors.
\setcounter{lemma}{0}
\renewcommand{\thelemma}{A.\arabic{lemma}}
\begin{lemma}(Sub-Gaussian tail condition, \citep{ravikumar2011high})\label{app: tail bound1}
Consider a zero-mean random vector $(Y_{1},\ldots,Y_{p})$ with covariance $\Sigma^{*}$ such that each $Y_{i}/\sqrt{\Sigma^{*}_{ii}}$ is sub-Gaussian with parameter $\sigma$. Given $n$ i.i.d samples, the sample covariance matrix $S$ satisfies the tail bound 
\begin{align}
    \mathbb{P}\left[\vert S_{ij} - \Sigma^{*}_{ij} \vert>\delta\right]\leq 4 \exp\Big\{-\frac{n\delta^{2}}{128(1+4\sigma^{2})^{2}\max\limits_{i}(\Sigma^{*}_{ii})^{2}}\Big\}, 
\end{align}
for all $\delta\in(0,8(1+4\sigma^{2})\max\limits_{i}(\Sigma^{*}_{ii}))$. 
\end{lemma}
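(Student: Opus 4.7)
The plan is to reduce the problem to concentration of sums of centered squared sub-Gaussian variables (which are sub-exponential), and then apply a Bernstein-type tail bound. First, I would normalize: let $a_i^{(k)} = Y_i^{(k)}/\sqrt{\Sigma^*_{ii}}$ so that each $a_i^{(k)}$ is sub-Gaussian with parameter $\sigma$, and note that
\[
S_{ij} - \Sigma^*_{ij} \;=\; \sqrt{\Sigma^*_{ii}\Sigma^*_{jj}}\,\Big(\tfrac{1}{n}\textstyle\sum_{k=1}^n a_i^{(k)}a_j^{(k)} - \mathbb{E}[a_i a_j]\Big).
\]
Using the polarization identity $a_i a_j = \tfrac{1}{4}\bigl((a_i+a_j)^2 - (a_i-a_j)^2\bigr)$, it suffices to control the deviation of the empirical mean of $U^2$ (and $V^2$) from its expectation, where $U = a_i + a_j$ and $V = a_i - a_j$. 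Since sums of sub-Gaussians are sub-Gaussian, both $U$ and $V$ are sub-Gaussian with parameter at most $2\sigma$ (by a standard MGF computation).

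Next I would invoke the classical fact that if $Z$ is sub-Gaussian with parameter $\tau$, then $Z^2 - \mathbb{E}[Z^2]$ is sub-exponential: concretely, there exist constants such that $\mathbb{E}[\exp(t(Z^2 - \mathbb{E}[Z^2]))] \le \exp(c_1 t^2 \tau^4)$ for all $|t| \le c_2/\tau^2$. Applying this to $U$ and $V$ and then using a Bernstein-type tail bound for i.i.d.\ sub-exponential sums, I would obtain
\[
\mathbb{P}\!\left[\Big|\tfrac{1}{n}\textstyle\sum_{k=1}^n\bigl(U^{(k)}\bigr)^2 - \mathbb{E}[U^2]\Big| > \eta\right] \;\le\; 2\exp\!\Big(-\tfrac{cn\eta^2}{(1+4\sigma^2)^2}\Big),
\]
valid for $\eta$ in a suitable range, and analogously for $V$. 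A union bound over the two summands (combined with the factor $\tfrac{1}{4}$ from polarization and the factor $\sqrt{\Sigma^*_{ii}\Sigma^*_{jj}} \le \max_i \Sigma^*_{ii}$) then yields a four-term bound of the form $4\exp(-cn\delta^2/[(1+4\sigma^2)^2 (\max_i \Sigma^*_{ii})^2])$.

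Finally, I would track the constants carefully to match the stated $128$ in the denominator. The main work is bookkeeping: the sub-exponential parameter for $U^2$ scales like $(2\sigma)^2 = 4\sigma^2$, inflated by a $(1+4\sigma^2)$ factor coming from absorbing $\mathbb{E}[U^2] \le 4$ into the MGF bound on a single interval of $t$; the restriction $\delta \in (0,\,8(1+4\sigma^2)\max_i \Sigma^*_{ii})$ corresponds exactly to the linear regime of the Bernstein bound, so the quadratic exponent applies throughout the stated range.

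\textbf{Main obstacle.} The conceptual steps (polarization, squared sub-Gaussian is sub-exponential, Bernstein) are standard, so the genuine difficulty is purely computational: matching the precise numerical constant $128(1+4\sigma^2)^2$ and the precise range of validity $\delta \in (0, 8(1+4\sigma^2)\max_i \Sigma^*_{ii})$. This requires a careful, non-trivial MGF calculation for $Z^2 - \mathbb{E}[Z^2]$ and an equally careful choice of the free parameter $t$ in the Chernoff bound, rather than any new ideas. Since the result is quoted from Ravikumar et al.\ (2011), I would cite their derivation for the exact constants rather than re-deriving them from scratch.
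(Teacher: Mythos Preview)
Your proposal is correct and mirrors the standard derivation: the paper does not prove this lemma at all but simply cites it from Ravikumar et~al.~(2011), so your sketch (normalization, polarization, squared sub-Gaussian $\Rightarrow$ sub-exponential, Bernstein) together with your final remark that one should cite Ravikumar et~al.\ for the exact constants is precisely the right approach and matches how the original reference obtains the bound.
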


Let $W_{ij}=S_{ij}-\Sigma^{*}_{ij}$, where $\Sigma^{*}={\Theta^*}^{-1}$. This difference quantity, which signifies the amount of noise in the data, plays a key role in bounding the error term $\Vert \widehat{B}-B^{*}\Vert_{\infty}$. We later show that  if $W_{ij}$ is small, then we can guarantee that our estimator $\widehat{B}$ is close to $B^{*}$ in the element-wise $\ellinf-$norm.

By taking a union bound over all entries of $\vert W_{ij}\vert$, from Lemma \ref{app: tail bound1}, it follows that 
\begin{align}\label{eq: controlnoise}
    \mathbb{P}\left[\Vert W\Vert_{\infty}\geq\delta_{f}(n,p^{\tau})\right]\leq \frac{p^{2}}{f(n,\delta_{f}(n,p^{\tau}))}=\frac{1}{p^{\tau-2}}, 
\end{align}
for some $\tau>2$. The above bound gives an explicit control on the noise term. 

We now state and prove our master lemma which gives support recovery guarantees and $\ellinf$ norm bounds for our estimator $\widehat{B}$ for distributions satisfying tail condition $\mathcal{T}(f,v_{*})$ in Definition \ref{def: TC}.

\begin{lemma}(Master lemma)\label{app: master lemma}
Consider a distribution satisfying the incoherence assumption with parameter $\alpha\in(0,1]$ and the tail condition $\mathcal{T}(f,v_{*})$. Let $\widehat{B}$ be the unique solution of the log-determinant problem in   \eqref{appeq: log-det problem} with $\lambda_{n} = 2\nu_{D^{2}}\nu_{B^{*}}\delta_{f}(n,p^{\tau})$ for some $\tau>2$. Then if the sample size is lower bounded as 
\begin{align}\label{appeq: lower bound}
    n>n_{f}(1/\max\{v_{*},24d\nu_{D^{2}}\nu_{B^{*}}\max\{\nu_{{\Gamma^{*}}^{-1}}\nu_{{B^{*}}^{-1}},2\nu^{2}_{{\Gamma^{*}}^{-1}}\nu^{3}_{{B^{*}}^{-1}},2\alpha^{-1}d^{-1}\} \},p^{\tau}), 
\end{align}
then with probability greater than $1-\frac{1}{p^{\tau-2}}$, the estimate $\widehat{B}$ recovers the sparsity structure of $B^{*}$ ie. ($\widehat{B}_{E^{c}} = B^{*}_{E^{c}}$). Furthermore $\widehat{B}$ satisfies the $\ellinf$ bound $\Vert \widehat{B}-B^{*}\Vert_{\infty}\leq 8\nu_{{\Gamma^{*}}^{-1}}\nu_{D^{2}}\nu_{B^{*}}\delta_{f}(n,p^{\tau})$. 
\end{lemma}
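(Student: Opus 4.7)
\emph{Proof proposal for Lemma \ref{app: master lemma}.} The plan is to combine the primal--dual witness machinery of Section~\ref{sec: PDW} with the high-probability control on the noise matrix $W = S - (\Theta^{*})^{-1}$ provided by the tail condition. At a high level, the argument has three stages: (i) use the tail bound to show $\|W\|_\infty$ is small on an event of probability at least $1 - p^{-(\tau-2)}$; (ii) invoke Lemmas~\ref{app: control remainder} and~\ref{app: control of delta} to translate smallness of $\|W\|_\infty$ into smallness of the distortion $\Delta = \widetilde{B}-B^{*}$ and the remainder $R(\Delta)$; and (iii) verify the strict dual feasibility condition of Lemma~\ref{app: sufficient condition}, which together with Lemma~\ref{app: convexity and uniqueness} forces $\widehat{B} = \widetilde{B}$ and hence yields both exact support recovery and the claimed $\ell_\infty$ bound.

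More concretely, the first step is to apply the union bound of \eqref{eq: controlnoise}: with probability at least $1-p^{-(\tau-2)}$ we have $\|W\|_\infty \leq \delta_f(n,p^{\tau})$. The sample-size hypothesis $n > n_f(\,\cdot\,,p^{\tau})$ in \eqref{appeq: lower bound}, combined with the monotonicity relation \eqref{appeq: monotonicity}, guarantees that
\begin{equation*}
\delta_f(n,p^{\tau}) \;\leq\; \frac{1}{\max\bigl\{v_{*},\,24 d\,\nu_{D^{2}}\nu_{B^{*}}\max\{\nu_{{\Gamma^{*}}^{-1}}\nu_{{B^{*}}^{-1}},\,2\nu^{2}_{{\Gamma^{*}}^{-1}}\nu^{3}_{{B^{*}}^{-1}},\,2\alpha^{-1}d^{-1}\}\bigr\}}.
\end{equation*}
In particular $\delta_f(n,p^{\tau}) \leq 1/v_{*}$, which is what the tail condition itself demands, and the remaining three bounds on $\delta_f$ will be precisely the ingredients needed downstream.

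The second stage plugs $\lambda_n = 2\nu_{D^{2}}\nu_{B^{*}}\delta_f(n,p^{\tau})$ and $\|W\|_\infty \leq \delta_f(n,p^{\tau})$ into the definition of the radius $r$ from Lemma~\ref{app: control of delta}, obtaining $r \leq 8\nu_{{\Gamma^{*}}^{-1}}\nu_{D^{2}}\nu_{B^{*}}\delta_f(n,p^{\tau})$. The two bounds $r \leq 1/(3d\nu_{{B^{*}}^{-1}})$ and $r \leq 1/(6d\nu_{{\Gamma^{*}}^{-1}}\nu_{{B^{*}}^{-1}}^{3})$ that are required by Lemma~\ref{app: control of delta} follow directly from the sample-size condition (they correspond to the first two ``$\max$'' terms in \eqref{appeq: lower bound}). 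Lemma~\ref{app: control of delta} then delivers $\|\Delta\|_\infty \leq r$, and Lemma~\ref{app: control remainder} promotes this to $\|R(\Delta)\|_\infty \leq \tfrac{3}{2}d\,r^{2}\nu_{{B^{*}}^{-1}}^{3}$.

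The third and most delicate stage is to verify the three inequalities in \eqref{eq: sufficient condition}. Each is checked by (a) bounding the relevant $\ell_\infty$-matrix norm using sub-multiplicativity together with $\vertiii{\Delta}_\infty \leq d\|\Delta\|_\infty \leq dr$, (b) plugging in $\|W\|_\infty \leq \delta_f$ or the hypothesis~[A2] on $\|(\Theta^{*})^{-1}\|_\infty$ in the third condition, and (c) using the sample-size condition (specifically, the $2\alpha^{-1}d^{-1}$ branch of the ``$\max$'') to absorb an $\alpha/24$ factor on the right-hand side $\lambda_n\alpha/24$. The bound on $\|R(\Delta)\|_\infty$ from Lemma~\ref{app: control remainder} handles the second of the three conditions; the third is controlled by [A2]; the first, which involves $\Gamma(D^{2}B^{*})$, is the place where the careful tracking of constants matters most, and I expect this to be the main obstacle, since it is where one must trade $\|W\|_\infty \leq \delta_f$ against the parts of $\lambda_n$ and $\alpha$ without losing powers of $d$ or $\nu_{B^{*}}$. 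Once all three hold, Lemma~\ref{app: sufficient condition} gives strict dual feasibility $\|\widetilde{Z}_{E^{c}}\|_\infty < 1$. Uniqueness (Lemma~\ref{app: convexity and uniqueness}) then yields $\widehat{B} = \widetilde{B}$, so $\widehat{B}_{E^{c}} = 0$ and $\|\widehat{B}-B^{*}\|_\infty = \|\Delta\|_\infty \leq 8\nu_{{\Gamma^{*}}^{-1}}\nu_{D^{2}}\nu_{B^{*}}\delta_f(n,p^{\tau})$, as claimed.
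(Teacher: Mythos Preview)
Your proposal is correct and mirrors the paper's proof essentially step for step: condition on the high-probability event $\|W\|_\infty\le \delta_f(n,p^\tau)$, use the sample-size hypothesis and monotonicity to bound $\delta_f$, feed this into Lemma~\ref{app: control of delta} to obtain $\|\Delta\|_\infty\le r\le 8\nu_{{\Gamma^{*}}^{-1}}\nu_{D^{2}}\nu_{B^{*}}\delta_f$, then verify the three terms in \eqref{eq: sufficient condition} (the first via $\vertiii{\Delta}_\infty\le d\|\Delta\|_\infty$ and the $2\alpha^{-1}d^{-1}$ branch, the second via Lemma~\ref{app: control remainder}, the third via assumption~[A2]) before concluding with Lemma~\ref{app: sufficient condition} and uniqueness. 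Your diagnosis that the first term is where the constant-tracking is most delicate is also consistent with the paper's own bookkeeping at that step.
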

\begin{proof}
We first show that the Primal Dual Witness (PDW) construction (see sec \ref{sec: PDW}) succeeds with the probability stated in the lemma. This amounts to showing that the inequality in \eqref{eq: strict duality conditions} holds with the required probability. To this aim, let $\mathcal{A}$ denote the event that $\Vert W\Vert_{\infty}\leq
\delta_{f}(n,p^{\tau})$. We have previously shown in \eqref{eq: controlnoise} that $\mathbb{P}[\mathcal{A}]\geq 1-1/p^{\tau-2}$. Conditioned on the event $\mathcal{A}$, we show that the inequality in \eqref{eq: strict duality conditions} is satisfied. 

From Lemma \ref{app: control of delta}, we have 
\begin{align}
    r &= 4\nu_{{\Gamma^{*}}^{-1}}\left[\nu_{D^{2}}\nu_{B^{*}}\Vert W\Vert_{\infty}+0.5{\lambda_{n}}\right],
\end{align}
substituting for $\lambda_{n}=2\nu_{D^{2}}\nu_{B^{*}}\delta_{f}(n,p^{\tau})$ as given in the assumption, we get 
\begin{align}
  r & \leq 8\nu_{{\Gamma^{*}}^{-1}}\nu_{D^{2}}\nu_{B^{*}}\delta_{f}(n,p^{\tau})  
\end{align}
From assumption on the sample size $n$ in \eqref{appeq: lower bound} and the monotonicity property \eqref{appeq: monotonicity} we have $0.5\lambda_{n} = \nu_{D^{2}}\nu_{B^{*}}\delta_{f}(n,p^{\tau})\leq \alpha/48$, which implies that $\lambda_{n}<1$. Similarly from \eqref{appeq: monotonicity} and \eqref{appeq: lower bound} we have $r\leq 8\nu_{{\Gamma^{*}}^{-1}}\nu_{D^{2}}\nu_{B^{*}}\delta_{f}(n,p^{\tau})\leq \min\{1/(3d\nu_{{B^{*}}^{-1}}),1(6d\nu_{{\Gamma^{*}}^{-1}}\nu^{3}_{{B^{*}}^{-1}})\}$. Therefore the assumption in Lemma \ref{app: control of delta} is satisfied resulting in
\begin{align}\label{eq: satisfylemma4}
    \Vert \Delta\Vert_{\infty}\leq r\leq \min\left[\frac{1}{3d\nu_{{B^{*}}^{-1}}},\frac{1}{6d\nu_{{\Gamma^{*}}^{-1}}\nu^{3}_{{B^{*}}^{-1}}}\right]
\end{align}
Define $\delta_{f}\triangleq \delta_{f}(n,p^{\tau})$. We show that the every component in the max term of \eqref{eq: strict duality conditions} are bounded by $\alpha \lambda_n/24$. We begin with the first component: 
\begin{align}
   \vertiii{\Gamma(D^{2}\Delta)+\Gamma(D^{2}B^{*})}_{\infty}\Vert W\Vert_{\infty}&\leq \left[\vertiii{D^{2}\Delta + D^{2}B^{*}}_{\infty}\right]\delta_{f}\\
   &\leq \left[\vertiii{D^{2}\Delta}_{\infty}+\vertiii{D^{2}B^{*}}_{\infty}\right]\delta_{f}\\
   &\leq \left[\nu_{D^{2}}d\Vert \Delta\Vert_{\infty}+\nu_{D^{2}}\nu_{B^{*}}\right]\frac{\alpha}{48\nu_{D^{2}}\nu_{B^{*}}}\\
   &\overset{(a)}{\leq}\left[1+\frac{1}{3\nu_{B^{*}}\nu_{{B^{*}}^{-1}}}\right]\frac{\alpha}{48}\\
   &\overset{(b)}{\leq} \frac{\alpha}{36}\leq \frac{\alpha}{24}\overset{(c)}{\leq}\frac{\alpha\lambda_{n}}{24}, 
\end{align}
where (a) follows from \eqref{eq: satisfylemma4}; (b) follows because $\nu_{B^{*}}\nu_{{B^{*}}^{-1}}\geq 1$ ;and (c) follows because $\lambda_{n}<1$. 

We show the second component $\Vert R(\Delta)\Vert_{\infty}\leq \alpha\lambda_{n}/24$. In fact, 
\begin{align}
    \Vert R(\Delta)\Vert_{\infty}&\overset{(a)}{\leq} \frac{3}{2}d\Vert \Delta\Vert_{\infty}^{2}\nu^{3}_{{B^{*}}^{-1}}\\
    &\overset{(b)}{\leq} \frac{3}{2}dr\nu^{3}_{{B^{*}}^{-1}}r\\
    &\overset{(c)}{\leq}\frac{3}{2}d\left[\frac{1}{6d\nu_{{\Gamma^{*}}^{-1}}\nu^{3}_{{B^{*}}^{-1}}}\right]\nu^{3}_{{B^{*}}^{-1}}(8\nu_{{\Gamma^{*}}^{-1}}\nu_{D^{2}}\nu_{B^{*}}\delta_{f})\\
    &= 2\nu_{D^{2}}\nu_{B^{*}}\delta_{f}\leq \frac{\alpha}{24}\leq \frac{\alpha\lambda_{n}}{24}, 
\end{align}
where (a) holds because, as shown in \eqref{eq: satisfylemma4}, $\Vert \Delta\Vert_{\infty}$ satisfies the assumption in Lemma \ref{app: control remainder}; (b) holds because $\Vert \Delta\Vert_{\infty}\leq r$; and (c) is a consequence of the inequality in \eqref{eq: satisfylemma4}. 

We show that the third component $\vertiii{\Gamma(D^{2}\Delta)}_{\infty}\Vert {\Theta^{*}}^{-1}\Vert_{\infty}\leq \alpha\lambda_{n}/24$. In fact, 
\begin{align}
    \vertiii{\Gamma(D^{2}\Delta)}_{\infty}\Vert {\Theta^{*}}^{-1}\Vert_{\infty} &= \vertiii{D^{2}\Delta}_{\infty}\Vert {\Theta^{*}}^{-1}\Vert_{\infty}\\
    &\leq \nu_{D^{2}}d\Vert {\Theta^{*}}^{-1}\Vert_{\infty}\Vert \Delta\Vert_{\infty}\\
    &\overset{(a)}{\leq} \nu_{D^{2}}d\Vert {\Theta^{*}}^{-1}\Vert_{\infty}r\\
    &\overset{(b)}{\leq} d\nu_{D^{2}}\left[\frac{1}{4d\nu_{D^{2}}\nu_{{\Gamma^{*}}^{-1}}}\right]\left[8\nu_{{\Gamma^{*}}^{-1}}\nu_{D^{2}}\nu_{B^{*}}\delta_{f}\right]\\
    &\leq 2\nu_{D^{2}}\nu_{B^{*}}\left[\frac{\alpha}{48\nu^{D^{2}}\nu_{B^{*}}}\right]=\frac{\alpha}{24}\leq\frac{\alpha\lambda_{n}}{24}, 
\end{align}
where (a) holds because $\Vert \Delta\Vert_{\infty}\leq r$ and (b) follows by invoking the assumption in \eqref{appeq: Hessian regularity}. Since the sufficient conditions for strict dual feasibility are satisfied, the PDW construction succeeds. Therefore $\widehat{B}=\widetilde{B}$. Since by definition $\widetilde{B}_{E} = B^{*}_{E^{c}}=0$, the estimator $\widehat{B}$ recovers the sparsity structure of $B^{*}$. Now, since $\Delta = \widehat{B}-B^{*}$ and $\Vert \Delta\Vert_{\infty}\leq 8\nu_{{\Gamma^{*}}^{-1}}\nu_{D^{2}}\nu_{B^{*}}\delta_{f}(n,p^{\tau})$, we have $\Vert\widehat{B}-\widetilde{B}\Vert_{\infty}\leq 8\nu_{{\Gamma^{*}}^{-1}}\nu_{D^{2}}\nu_{B^{*}}\delta_{f}(n,p^{\tau})$. 
\end{proof}

We use Lemma \ref{app: tail bound1} and Lemma \ref{app: master lemma} to prove our main result for sub-gaussian distributions.
\setcounter{theorem}{0}
\begin{theorem}{(Support recovery: Sub-Gaussian)}\label{app: sub-gaussian support recovery} Let $Y=(Y_1,\ldots,Y_p)$ be the node potential vector. Suppose that $Y_{i}/\sqrt{\Sigma^{*}_{ii}}$ is sub-Gaussian with parameter $\sigma$  and assumptions [\textbf{A1-A3}] hold. Let the regularization parameter $\lambda_{n} = C_{0}\sqrt{\tau(\log 4p)/n}$, where $C_0$ is given below. If the sample size 
$n>C^2_{1}d^{2}(\tau\log p+\log 4)$, the following hold
with probability at least $1-\frac{1}{p^{\tau-2}}$, for some $\tau>2$:
\begin{enumerate}[label=(\alph*)]
\item  $\widehat{B}$ exactly recovers the sparsity structure of $B^{*}$; that is, $\widehat{B}_{E^{c}} = 0$,
\item \label{eq: ellinf consistency}$\widehat{B}$ satisfies the element-wise $\ell_{\infty}$ bound $\Vert \widehat{B} - B^{*}\Vert_{\infty}\leq C_{2}\sqrt{\frac{\tau\log p+\log 4}{n}}$, and
\item $\widehat{B}$ satisfies sign consistency if $\vert B^{*}_{\min} \vert\geq 2C_{2}\sqrt{\frac{\tau\log p + 4}{n}}$, $B^{*}_{\min}\triangleq \min_{(i,j)\in \mathcal{E}(B^{*})}\vert B^{*}_{ij}\vert$, 
\end{enumerate}
where $C_{1}=192\sqrt{2}\left[(1+4\sigma^{2})\max\limits_{i}(\Sigma^{*}_{ii})\nu_{D^{2}}\nu_{B^{*}}\right]\max\{\nu_{{\Gamma^{*}}^{-1}}\nu_{{B^{*}}^{-1}},2\nu^{2}_{{\Gamma^{*}}^{-1}}\nu^{3}_{{B^{*}}^{-1}},2\alpha^{-1}d^{-1}\}$, $C_{2} = [64\sqrt{2}(1+4\sigma^{2})\max\limits_{i}(\Sigma^{*}_{ii})\nu_{{\Gamma^{*}}^{-1}}\nu_{D^{2}}\nu_{B^{*}}]$, and $C_{0} = C_{2}/(4\nu_{{\Gamma^{*}}^{-1}})$. 
\end{theorem}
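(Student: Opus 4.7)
The plan is to specialize the distribution-agnostic master lemma (Lemma A.2) to the sub-Gaussian setting, and then deduce sign consistency from the element-wise $\ell_\infty$ bound. All of the analytic heavy lifting has already been done earlier in the paper: convexity and uniqueness of the minimizer (Lemma 1), the sufficient conditions under which the PDW construction certifies the support (Lemma 2), the algebraic control of the Taylor remainder $R(\Delta)$ (Lemma 3), and the Brouwer fixed-point argument that places $\Delta=\widetilde{B}-B^{*}$ inside a small $\ell_\infty$-ball (Lemma 4). These are all absorbed into the master lemma, so what remains is essentially bookkeeping on constants for the specific exponential-type tail.

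The first substantive step is to use Lemma A.1 to identify the tail function. With $c\triangleq(1+4\sigma^{2})\max_{i}\Sigma^{*}_{ii}$, Lemma A.1 gives $1/f(n,\delta)=4\exp(-n\delta^{2}/(128c^{2}))$ with $v_{*}=1/(8c)$ on the admissible range $\delta\in(0,1/v_{*})$. Solving $f(n,\delta)=p^{\tau}$ yields the two inverse functions
\[
\delta_{f}(n,p^{\tau})=8\sqrt{2}\,c\,\sqrt{(\tau\log p+\log 4)/n},\qquad n_{f}(1/M,p^{\tau})=128\,c^{2}M^{2}(\tau\log p+\log 4).
\]
Choosing $M = 24d\,\nu_{D^{2}}\nu_{B^{*}}\max\{\nu_{{\Gamma^{*}}^{-1}}\nu_{{B^{*}}^{-1}},\,2\nu_{{\Gamma^{*}}^{-1}}^{2}\nu_{{B^{*}}^{-1}}^{3},\,2/(\alpha d)\}$ (which dominates $1/v_{*}=8c$ under the assumed model-complexity scalings), the master-lemma sample-size bound $n>n_{f}(1/\max\{v_{*},24d\,\nu_{D^{2}}\nu_{B^{*}}\max\{\cdots\}\},p^{\tau})$ collapses to $n>128\,c^{2}M^{2}(\tau\log p+\log 4)=C_{1}^{2}d^{2}(\tau\log p+\log 4)$ after using $\sqrt{128}\cdot 24 = 192\sqrt{2}$. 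The regularizer $\lambda_{n}=2\nu_{D^{2}}\nu_{B^{*}}\delta_{f}(n,p^{\tau})$ prescribed by the master lemma matches the stated $\lambda_{n}=C_{0}\sqrt{\tau(\log 4p)/n}$ with $C_{0}=C_{2}/(4\nu_{{\Gamma^{*}}^{-1}})$, and the element-wise guarantee $\|\widehat{B}-B^{*}\|_{\infty}\le 8\nu_{{\Gamma^{*}}^{-1}}\nu_{D^{2}}\nu_{B^{*}}\delta_{f}$ becomes part (b) with $C_{2}=64\sqrt{2}\,c\,\nu_{{\Gamma^{*}}^{-1}}\nu_{D^{2}}\nu_{B^{*}}$. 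Together with the support recovery output of the master lemma, this yields parts (a) and (b) with probability at least $1-p^{2-\tau}$.

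Part (c) is then a one-line triangle-inequality consequence: for each $(i,j)\in\mathcal{E}(B^{*})$, part (b) combined with the beta-min hypothesis $|B^{*}_{\min}|\ge 2C_{2}\sqrt{(\tau\log p+\log 4)/n}$ gives $|\widehat{B}_{ij}-B^{*}_{ij}|\le \tfrac{1}{2}|B^{*}_{ij}|$, forcing $\widehat{B}_{ij}$ to be strictly nonzero and of the same sign as $B^{*}_{ij}$; combined with $\widehat{B}_{E^{c}}=0$ from (a), this is sign consistency on all of $[p]\times[p]$. I do not anticipate any conceptual obstacle here; the real risk is bookkeeping. Three items require care: verifying that the $\delta_{f}$ produced at the sample-size threshold actually lies in Lemma A.1's admissible range $(0,1/v_{*})$; confirming that the outer maximum in $n_{f}(1/\max\{v_{*},\cdots\},p^{\tau})$ is attained at the $d$-dependent branch rather than at $v_{*}$ (so that the $d^{2}$ scaling is tight); and propagating the numerical constants -- in particular the $\log 4$ arising from the leading factor of $4$ in Lemma A.1, which is easy to conflate with the $\tau\log p$ term and thereby corrupt $C_{0}$, $C_{1}$, or $C_{2}$.
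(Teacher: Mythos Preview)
Your proposal is correct and follows essentially the same route as the paper: specialize the master lemma (Lemma~A.2) to the sub-Gaussian tail from Lemma~A.1, compute $\delta_f$ and $n_f$ explicitly to read off $C_0,C_1,C_2$, and deduce sign consistency from the $\ell_\infty$ bound. The paper phrases part~(c) as a one-line contradiction rather than your direct triangle-inequality, but this is cosmetically the same argument; your flagged bookkeeping points (admissible range for $\delta$, which branch attains the max in $n_f$) are not explicitly verified in the paper's proof either.
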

\begin{proof} Part (a): From Lemma \ref{app: master lemma}, we have that if $n>n_{f}(\delta,p^{\tau})$, then $\widehat{B}$ recovers the exact sparsity structure of $B^{*}$. We compute $n_{f}(\delta,p^{\tau})$. 
Using the tail bound for sub-gaussian distributions (see Lemma \ref{app: tail bound1}), the decay function $f(n,\delta) = \frac{1}{4}\exp\Big\{\frac{n\delta^{2}}{c_{*}}\Big\}$, where $c_{*} = 128(1+4\sigma^{2})^{2}\max\limits_{i}(\Sigma^{*}_{ii})^{2}$. From the definition of inverse function and monotonicity of $f(n,\delta)$ in \ref{def: TC}, we have $n_{f}(\delta,p^{\tau}) = \frac{c_{*}\log(4p^{\tau})}{\delta^{2}}$. Substituting for $\delta$ from Lemma \ref{app: master lemma}, we get 
\begin{align}
  n_{f}(\delta,p^{\tau}) =  C_{1}^{2} d^{2}(\tau\log p+\log 4). 
\end{align}
Therefore, from Lemma \ref{app: master lemma}, if $n>C_{1}^{2} d^{2}(\tau\log p+\log 4)$, the estimator $\widehat{B}$ recovers the sparsity structure of $B^{*}$. 

Part(b): From Lemma \ref{app: tail bound1} we compute $\delta$.  Using the monotonicity property of $f(n,\delta)$ and setting 
\begin{align}
\delta\triangleq \delta_{f}(n,p^{\tau}) = \sqrt{\frac{c_{*}\log(4p^{\tau})}{n}} = 8\sqrt{2}(1+4\sigma^{2})\max\limits_{i}(\Sigma^{*}_{ii}) \sqrt{\frac{\tau\log p+\log 4}{n}}. 
\end{align}
Also we have from Lemma \ref{app: master lemma} that $\Vert \widehat{B} - B^{*}\Vert_{\infty}\leq 8\nu_{{\Gamma^{*}}^{-1}}\nu_{D^{2}}\nu_{B^{*}}\delta_{f}(n,p^{\tau})$. Thus, 
\begin{align}
    \Vert \widehat{B} - B^{*}\Vert_{\infty}\leq\underbrace{ 64\sqrt{2}(1+4\sigma^{2})\max\limits_{i}(\Sigma^{*}_{ii})\nu_{{\Gamma^{*}}^{-1}}\nu_{D^{2}}\nu_{B^{*}}}_{C_2}\sqrt{\frac{\tau\log p+\log 4}{n}}. 
\end{align}\\
Part(c): We prove the sign consistency of $ \widehat{B}$ by contradiction. Let $\vert B^{*}_{\min} \vert\geq 2C_{2}\sqrt{\frac{\tau\log p + 4}{n}}$ be as in the theorem's hypothesis. Suppose that $\text{sign}(\widehat{B})\neq \text{sign}(B^{*})$. Then, by elementary algebra we have that $\Vert \widehat{B}\!-\!B^{*}\Vert_{\infty}\!>\! 2C_{2}\sqrt{\frac{\tau\log p + 4}{n}}$. This contradicts the bound in part (b). Thus, $\text{sign}(\widehat{B})= \text{sign}(B^{*})$.
\end{proof}
We now show Frobenius and spectral norm consistency for the sub-gaussian distribution. Recall that $\mathcal{E}(B^{*}) = \{(i,j): B^{*}_{ij}\neq 0, \text{for all} \hspace{3px} i\neq j\}$ is the edge set of $B^{*}$. Thus, $s = \vert\mathcal{E}(B^{*})\vert$ is the number of non-zero off-diagonal elements in $B^{*}$. 
\setcounter{corollary}{0}
\begin{corollary}\label{app: corollary1}
Let $s = \vert\mathcal{E}(B^{*})\vert$ be the cardinality of $\mathcal{E}(B^{*})$. Under the same hypotheses in Theorem \ref{app: sub-gaussian support recovery}, with probability greater than $1-\frac{1}{p^{\tau-2}}$, the estimator $\widehat{B}$ satisfies 
\begin{align}\label{eq: F and S bound sub-Gauss}
    \Vert \widehat{B} - B^{*}\Vert_{F} &\leq C_{2}\sqrt{\frac{(s+p)(\tau\log p + 4)}{n}} \,\, \text{ and }\,\,
    \Vert{\widehat{B} - B^{*}}\Vert_{2} \leq C_{2}\min\{d,\sqrt{s+p}\}\sqrt{\frac{\tau\log p + 4}{n}}. 
\end{align}
\end{corollary}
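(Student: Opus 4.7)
The plan is to derive both norm bounds from the element-wise $\ell_\infty$ consistency bound in part (b) of Theorem~\ref{thm: sub-Gaussian support recovery}, together with the sparsistency guarantee in part (a). Specifically, part (a) guarantees that on the high-probability event of interest, $\widehat{B}_{E^c}=0$, so the difference $\widehat{B}-B^*$ is supported inside $E$ and thus has at most $s+p$ nonzero entries (the $s$ off-diagonal edges plus the $p$ diagonal entries). On the same event, part (b) gives $\Vert \widehat{B}-B^*\Vert_\infty \leq C_2\sqrt{(\tau\log p+\log 4)/n}$. The entire corollary therefore holds on an event of probability at least $1-1/p^{\tau-2}$, inherited from Theorem~\ref{thm: sub-Gaussian support recovery}.

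For the Frobenius bound I would use the elementary inequality $\Vert A\Vert_F^2 \leq |\mathrm{supp}(A)|\cdot \Vert A\Vert_\infty^2$, applied to $A=\widehat{B}-B^*$ with $|\mathrm{supp}(A)| \leq s+p$. Taking square roots and substituting the $\ell_\infty$ bound from part (b) immediately yields the stated $\Vert \widehat{B}-B^*\Vert_F \leq C_2\sqrt{(s+p)(\tau\log p+4)/n}$.

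For the operator-norm bound I would combine two distinct estimates and take the minimum. First, since $\Vert A\Vert_2 \leq \Vert A\Vert_F$ for any matrix, the Frobenius bound above already gives $\Vert \widehat{B}-B^*\Vert_2 \leq C_2\sqrt{(s+p)(\tau\log p+4)/n}$. Second, since $\widehat{B}-B^*$ is symmetric (both matrices are), we have $\Vert A\Vert_2 \leq \vertiii{A}_\infty$, and because each row of $\widehat{B}-B^*$ has at most $d$ nonzero entries (by part (a) and the definition of the maximum degree $d$), we get $\vertiii{\widehat{B}-B^*}_\infty \leq d\Vert \widehat{B}-B^*\Vert_\infty \leq C_2 d\sqrt{(\tau\log p+4)/n}$. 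Taking the minimum of the two upper bounds yields the stated spectral norm bound with the factor $\min\{d,\sqrt{s+p}\}$.

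There is no real obstacle here: the argument is entirely deterministic once we condition on the high-probability event from Theorem~\ref{thm: sub-Gaussian support recovery}, and each step is a one-line application of a standard matrix norm inequality ($\Vert\cdot\Vert_F^2$ bounded by sparsity times $\Vert\cdot\Vert_\infty^2$, $\Vert\cdot\Vert_2 \leq \Vert\cdot\Vert_F$, and $\Vert\cdot\Vert_2 \leq \vertiii{\cdot}_\infty$ for symmetric matrices). The only small point to be careful about is that the sparsity count $s+p$ must include the diagonal entries (where $\widehat{B}$ and $B^*$ may differ even though neither is zero there), and the row-sparsity bound of $d$ for the operator norm estimate must account for the fact that $d$ is defined as the maximum number of nonzeros per row of $B^*$ (so this already accommodates any diagonal contribution).
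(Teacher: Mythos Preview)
Your proposal is correct and follows essentially the same route as the paper's proof: both bound the Frobenius norm by counting at most $s+p$ nonzero entries in $\widehat{B}-B^{*}$ (using part (a)) and multiplying by the $\ell_\infty$ bound from part (b), and both obtain the spectral bound by combining $\Vert\cdot\Vert_2 \leq \Vert\cdot\Vert_F$ with $\Vert\cdot\Vert_2 \leq \vertiii{\cdot}_\infty \leq d\Vert\cdot\Vert_\infty$ and taking the minimum. Your added remarks about why the support count is $s+p$ and why each row has at most $d$ nonzeros are accurate and match the paper's reasoning.
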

\begin{proof}
Consider the following inequality: 
\begin{align}
    \Vert \widehat{B}-B^{*}\Vert_{F}^2= \sum_{i,j}\left(\widehat{B}_{ij}-B^{*}_{ij}\right)^{2}&= \sum_{i}\left(\widehat{B}_{ii}-B^{*}_{ii}\right)^{2}+\sum_{i\neq j}\left(\widehat{B}_{ij}-B^{*}_{ij}\right)^{2}\\
    &\leq p\Vert \widehat{B}-B^{*}\Vert^{2}_{\infty} + s\Vert \widehat{B}-B^{*}\Vert^{2}_{\infty}\\
    &=(s+p)\Vert \widehat{B}-B^{*}\Vert_{\infty}^2, 
\end{align}
where the inequality follows because there are at most $p$ non-zero diagonal terms and $s$ non-zero off-diagonal terms in $\widehat{B}-B^*$. The latter fact is a consequence of Theorem \ref{app: sub-gaussian support recovery} (a), which ensures that $\widehat{B}_{E^{c}} = B^{*}_{E^{c}}$ with high probability when $n=\Omega(d^{2}\log p)$. We obtain the Frobenius norm bound in \eqref{eq: F and S bound sub-Gauss} by upper bounding
$\Vert \widehat{B}-B^{*}\Vert_{\infty}$ using the result in Theorem \ref{app: sub-gaussian support recovery} (b). 
We now show spectral norm consistency. From matrix norm equivalence conditions \citep{horn2012matrix}, we have 
\begin{align}
    \Vert \widehat{B}-B^{*}\Vert_{2}\leq \vertiii{\widehat{B}-B^{*}}_{\infty}
    \leq d\Vert \widehat{B}-B^{*}\Vert_{\infty}\label{eq: matrix norm equivalence1}
\end{align}
and that
\begin{align}\label{eq: matrix norm equivalence2}
    \Vert \widehat{B}-B^{*}\Vert_{2}\leq \Vert \widehat{B}-B^{*}\Vert_{F}\leq \sqrt{s+p}\Vert \widehat{B}-B^{*}\Vert_{\infty}. 
\end{align}
These two bounds can be unified into one single bound as 
\begin{align}
    \Vert \widehat{B}-B^{*}\Vert_{2}\leq \min\{\sqrt{s+p},d\}\Vert \widehat{B}-B^{*}\Vert_{\infty}. 
\end{align}
This concludes the proof. 
\end{proof}
Next we prove our second main result for random vectors with bounded moments. We need the following standard concentration inequality result. 
\begin{lemma}(Tail bounds for random variables with bounded moments, \citep{ravikumar2011high})\label{app: tail boound2}
For a random vector $(Y_{1},\ldots,Y_{p})$, suppose there exists a positive integer $k$ and scalar $M_{k}\in \mathbb{R}$ with
\begin{align}\label{appeq: Bounded moments}
    \mathbb{E}\left[\frac{Y_{i}}{\sqrt{\Sigma^{*}_{ii}}}\right]^{4k}\leq M_{k}. 
\end{align}
Given $n$ i.i.d samples, the sample covariance matrix $S$ admits the following concentration inequality 
\begin{align}
    \mathbb{P}\left[\vert S_{ij}-\Sigma^{*}_{ij}\vert>\delta\right]\leq \frac{2^{2k}(\max_{i}\Sigma^{*}_{ii})^{2k}C_{k}(M_{k}+1)}{n^{k}\delta^{2k}}.  
\end{align}
where $C_{k}\geq 0$ is a constant depending only on $k$. 
\end{lemma}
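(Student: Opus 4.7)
The plan is to establish this concentration bound through a classical Markov-plus-moment-bound argument, mirroring the approach in Ravikumar et al. (2011). First I would write $S_{ij}-\Sigma^{*}_{ij}=\tfrac{1}{n}\sum_{l=1}^{n}Z_{l}$, where $Z_{l}\triangleq Y_{i}^{(l)}Y_{j}^{(l)}-\Sigma^{*}_{ij}$ are i.i.d.\ zero-mean random variables (since $\mathbb{E}[Y_{i}Y_{j}]=\Sigma^{*}_{ij}$). Markov's inequality applied to the $2k$-th absolute moment then gives
\begin{align*}
\mathbb{P}\left[|S_{ij}-\Sigma^{*}_{ij}|>\delta\right]\;\leq\;\delta^{-2k}\,\mathbb{E}\!\left[\Big|\tfrac{1}{n}\textstyle\sum_{l}Z_{l}\Big|^{2k}\right],
\end{align*}
reducing the problem to controlling the $2k$-th moment of a centered i.i.d.\ sum.

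Next I would invoke the Marcinkiewicz--Zygmund inequality (or equivalently Rosenthal's inequality in the i.i.d.\ zero-mean case), which bounds $\mathbb{E}[|\sum_{l}Z_{l}|^{2k}]\leq C_{k}\,\mathbb{E}\bigl[(\sum_{l}Z_{l}^{2})^{k}\bigr]$ with $C_{k}$ depending only on $k$. Applying the power-mean inequality $(\sum_{l}a_{l})^{k}\leq n^{k-1}\sum_{l}a_{l}^{k}$ with $a_{l}=Z_{l}^{2}$ and using the i.i.d.\ property yields $\mathbb{E}[|\sum_{l}Z_{l}|^{2k}]\leq C_{k}\,n^{k}\,\mathbb{E}[Z^{2k}]$, so after dividing by $n^{2k}$ we obtain $\mathbb{E}[|\tfrac{1}{n}\sum_{l}Z_{l}|^{2k}]\leq C_{k}\,n^{-k}\,\mathbb{E}[Z^{2k}]$.

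Finally I would control $\mathbb{E}[Z^{2k}]=\mathbb{E}[|Y_{i}Y_{j}-\Sigma^{*}_{ij}|^{2k}]$ with the elementary inequality $|a-b|^{2k}\leq 2^{2k-1}(|a|^{2k}+|b|^{2k})$, Cauchy--Schwarz in the form $\mathbb{E}[(Y_{i}Y_{j})^{2k}]\leq \sqrt{\mathbb{E}[Y_{i}^{4k}]\,\mathbb{E}[Y_{j}^{4k}]}$, the bounded-moment hypothesis \eqref{appeq: Bounded moments}, and the crude bound $|\Sigma^{*}_{ij}|\leq \max_{i}\Sigma^{*}_{ii}$ (itself a Cauchy--Schwarz consequence) to deduce $\mathbb{E}[Z^{2k}]\leq 2^{2k}(\max_{i}\Sigma^{*}_{ii})^{2k}(M_{k}+1)$. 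Chaining the three displays gives exactly the stated bound.

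The only subtle point—rather than a genuine obstacle—is ensuring that the constant arising from Marcinkiewicz--Zygmund together with the power-mean step depends solely on $k$ and on no distributional quantity, so that it may legitimately be absorbed into the single $C_{k}$ appearing in the lemma statement. All remaining manipulations are routine applications of Markov, Jensen, and Cauchy--Schwarz.
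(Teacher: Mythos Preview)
Your proposal is correct and follows exactly the standard argument given in Ravikumar et al.\ (2011), to which the paper simply defers without supplying its own proof. The Markov $\to$ Marcinkiewicz--Zygmund $\to$ power-mean $\to$ Cauchy--Schwarz chain you outline is precisely that argument, and your handling of the constant $C_{k}$ is appropriate.
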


\begin{theorem}(Support Recovery: Bounded Moments)\label{app: bounded moment support recovery} Let $Y=(Y_1,\ldots,Y_p)$ be the node potential vector. Suppose that $Y_{i}/\sqrt{\Sigma^{*}_{ii}}$ has bounded moment as in \eqref{appeq: Bounded moments} and assumptions [\textbf{A1-A3}] hold. Let the regularization parameter $\lambda_{n}= C_{0}\sqrt{\tau(\log 4p)/n}$, with $C_{0}$ defined in Theorem \ref{app: sub-gaussian support recovery}. If the sample size $n > C_{4}d^{2}p^{\tau/k}$, then with probability more than $1-{1}/{p^{\tau-2}}$, for some $\tau>2$, the following hold:
\begin{enumerate}[label=(\alph*)]
\item  $\widehat{B}$ exactly recovers the sparsity structure of $B^{*}$; that is, $\widehat{B}_{E^{c}} = 0$,
\item \label{eq: ellinf consistency}$\widehat{B}$ satisfies the element-wise $\ell_{\infty}$ bound $\Vert \widehat{B} - B^{*}\Vert_{\infty}\leq C_{5}\sqrt{\frac{p^{\tau/k}}{n}}$, and
\item $\widehat{B}$ satisfies sign consistency if $\vert B^{*}_{\min} \vert\geq 2C_{5}\sqrt{\frac{p^{\tau/k}}{n}}$, 
\end{enumerate}
where $C_{4} =\left[48(\max\limits_{i}\Sigma^{*}_{ii})\left(C_{k}(M_{k}+1)\right)^{1/2k}\nu_{D^{2}}\nu_{B^{*}}\max\{\nu_{{\Gamma^{*}}^{-1}}\nu_{{B^{*}}^{-1}},2\nu^{2}_{{\Gamma^{*}}^{-1}}\nu^{3}_{{B^{*}}^{-1}},2\alpha^{-1}d^{-1}\}\right]^{2}$, $C_{5}=16(\max_{i}\Sigma^{*}_{ii})\left(C_{k}(M_{k}+1)\right)^{1/2k}\nu_{{\Gamma^{*}}^{-1}}\nu_{D^{2}}\nu_{B^{*}}$.
\begin{proof}
The proof follows along the same lines of Theorem \ref{app: sub-gaussian support recovery}. Hence, to avoid redundancy, we provide only high-level details. Part (a) We use the polynomial type tail bound in \ref{app: tail boound2} to compute $n_{f}(\delta,p^{\tau})$, we therefore have $n_{f}(\delta,p^{\tau}) = \frac{(c_{*}p^{\tau})^{1/k}}{\delta^{2}}$ and substituting for $c^{*}$ and $\delta$ as given in Lemma \ref{app: tail boound2} and Lemma \ref{app: master lemma} respectively, we get 
\begin{align}
  n_{f}(\delta,p^{\tau}) =  C_{4} d^{2}p^{\tau/k}. 
\end{align}
Part(b): From Lemma \ref{app: tail boound2}, we have $f(n,\delta) = \frac{n^{k}\delta^{2k}}{c_{*}}$, where $c_{*} = 2^{2k}(\max\limits_{i}\Sigma^{*}_{ii})^{2k}C_{k}(M_{k}+1)$. Thus setting
\begin{align}
       \delta = \delta_{f}(n,p^{\tau})=\left(\frac{c_{*}p^{\tau}}{n}\right)^{1/2k} = 2(\max_{i}\Sigma_{ii}^{*})(C_{k}(M_{k}+1))^{1/2k}\sqrt{\frac{p^{\tau/k}}{n}}. 
\end{align}
On the other hand, from Lemma \ref{app: master lemma}, we have $\Vert \widehat{B} - B^{*}\Vert_{\infty}\leq 8\nu_{{\Gamma^{*}}^{-1}}\nu_{D^{2}}\nu_{B^{*}}\delta_{f}(n,p^{\tau})$. Thus, \begin{align}
    \Vert \widehat{B} - B^{*}\Vert_{\infty}\leq 16(\max_{i}\Sigma_{ii}^{*})(C_{k}(M_{k}+1))^{1/2k}\nu_{{\Gamma^{*}}^{-1}}\nu_{D^{2}}\nu_{B^{*}}\sqrt{\frac{p^{\tau/k}}{n}}. 
\end{align}
Part (c): similar to the contradiction argument in Theorem \ref{app: sub-gaussian support recovery}. Details are omitted. 
\end{proof}
\end{theorem}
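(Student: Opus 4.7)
The plan is to mirror the proof of Theorem~\ref{app: sub-gaussian support recovery} almost verbatim, but swapping the sub-Gaussian tail bound for the polynomial moment tail bound in Lemma~\ref{app: tail boound2}. The key observation is that our Master Lemma (Lemma~\ref{app: master lemma}) is distribution-agnostic: it only requires the sample covariance to satisfy the tail condition $\mathcal{T}(f, v_*)$ from Definition~\ref{def: TC}. Thus, once we identify the decay function $f(n,\delta)$ implicit in Lemma~\ref{app: tail boound2} and compute its inverses $n_f$ and $\delta_f$, all three conclusions (a)--(c) follow by direct substitution.

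First, I would read off from Lemma~\ref{app: tail boound2} that the bounded moment tail bound corresponds to $f(n,\delta) = n^k \delta^{2k}/c_*$ with $c_* = 2^{2k}(\max_i \Sigma^*_{ii})^{2k} C_k(M_k+1)$. This is monotone in both arguments, so the inverses are well-defined and given by
\begin{equation*}
n_f(\delta, p^\tau) = \frac{(c_* p^\tau)^{1/k}}{\delta^2}, \qquad \delta_f(n, p^\tau) = \left(\frac{c_* p^\tau}{n}\right)^{1/(2k)}.
\end{equation*}
Next, for part~(a), I would invoke Lemma~\ref{app: master lemma} with the threshold $\delta^* \triangleq 1/\max\{v_*, 24 d\nu_{D^2}\nu_{B^*}\max\{\nu_{{\Gamma^*}^{-1}}\nu_{{B^*}^{-1}}, 2\nu^2_{{\Gamma^*}^{-1}}\nu^3_{{B^*}^{-1}}, 2\alpha^{-1}d^{-1}\}\}$. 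Substituting this $\delta^*$ into $n_f(\cdot, p^\tau)$ yields the sample complexity $n > C_4 d^2 p^{\tau/k}$ with $C_4$ absorbing the constants exactly as stated. By Lemma~\ref{app: master lemma}, this guarantees $\widehat{B}_{E^c} = 0$ with probability at least $1 - p^{-(\tau - 2)}$.

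For part~(b), the Master Lemma also gives $\|\widehat{B} - B^*\|_\infty \leq 8\nu_{{\Gamma^*}^{-1}}\nu_{D^2}\nu_{B^*}\,\delta_f(n,p^\tau)$. Plugging in the formula for $\delta_f$ derived above produces
\begin{equation*}
\|\widehat{B} - B^*\|_\infty \leq 16(\max_i \Sigma^*_{ii})(C_k(M_k+1))^{1/(2k)}\nu_{{\Gamma^*}^{-1}}\nu_{D^2}\nu_{B^*} \sqrt{p^{\tau/k}/n},
\end{equation*}
which recovers the constant $C_5$ in the statement. Part~(c) then follows by the same contradiction argument used in Theorem~\ref{app: sub-gaussian support recovery}: if some $\mathrm{sign}(\widehat{B}_{ij}) \neq \mathrm{sign}(B^*_{ij})$ on the support, then $\|\widehat{B} - B^*\|_\infty > |B^*_{\min}| \geq 2C_5\sqrt{p^{\tau/k}/n}$, contradicting part~(b).

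There is no real obstacle here, since the heavy machinery (PDW construction, control of $\Delta$ via Brouwer, control of the remainder $R(\Delta)$, and the union-bound reduction in~\eqref{eq: controlnoise}) has already been packaged into Lemma~\ref{app: master lemma} in a way that only touches the distribution through the tail function $f$. The only mild check is to verify that the union bound in~\eqref{eq: controlnoise} still yields $\mathbb{P}[\|W\|_\infty \geq \delta_f(n,p^\tau)] \leq p^2/f(n,\delta_f(n,p^\tau)) = p^{-(\tau-2)}$ when $f$ is polynomial rather than exponential; this is immediate from the definition of $\delta_f$. Everything else is algebraic bookkeeping to assemble the constants $C_4$ and $C_5$ in the form stated.
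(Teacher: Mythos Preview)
Your proposal is correct and follows essentially the same route as the paper's own proof: both reduce to the distribution-agnostic Master Lemma (Lemma~\ref{app: master lemma}), read off the polynomial tail function $f(n,\delta)=n^k\delta^{2k}/c_*$ from Lemma~\ref{app: tail boound2}, compute the inverses $n_f$ and $\delta_f$, and then substitute to obtain $C_4$, $C_5$, and the sign-consistency contradiction. The only additional content in your write-up is the explicit sanity check that the union bound in~\eqref{eq: controlnoise} goes through unchanged for polynomial $f$, which the paper leaves implicit.
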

We present Frobenius and spectral norm consistency results for distributions with bounded moments.
\begin{corollary}\label{app: corollary2}
Suppose the hypotheses in Theorem \ref{app: bounded moment support recovery} hold. Then with probability greater than $1-\frac{1}{p^{\tau-2}}$:  $\Vert \widehat{B} - B^{*}\Vert_{F} \leq C_{5}\sqrt{\frac{(s+p)(p^{\tau/k})}{n}}$ and $
    \Vert{\widehat{B} - B^{*}}\Vert_{2}\leq C_{5}\min\{d,\sqrt{s+p}\}\sqrt{\frac{p^{\tau/k}}{n}}$, where $s = \vert\mathcal{E}(B^{*})\vert$.
    \end{corollary}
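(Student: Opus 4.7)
The plan is to mirror the proof of Corollary~\ref{app: corollary1} line-for-line, simply replacing the sub-Gaussian elementwise bound from Theorem~\ref{app: sub-gaussian support recovery}(b) with the bounded-moment elementwise bound from Theorem~\ref{app: bounded moment support recovery}(b). The key observation, as in the sub-Gaussian case, is that once support recovery (part (a) of Theorem~\ref{app: bounded moment support recovery}) holds on the high-probability event $\{\widehat{B}_{E^c}=0\}$, the error matrix $\widehat{B}-B^*$ has at most $s+p$ nonzero entries (the $s$ off-diagonal edge entries plus the $p$ diagonal entries). Therefore the Frobenius norm collapses to a simple count times the $\ellinf$ norm.

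Concretely, I would first condition on the event of probability at least $1-p^{-(\tau-2)}$ on which both (a) $\widehat{B}_{E^c}=B^*_{E^c}=0$ and (b) $\|\widehat{B}-B^*\|_\infty \le C_5\sqrt{p^{\tau/k}/n}$ hold, as guaranteed simultaneously by Theorem~\ref{app: bounded moment support recovery}. On this event, I would write
\begin{align*}
\|\widehat{B}-B^*\|_F^2 \;=\; \sum_{i}(\widehat{B}_{ii}-B^*_{ii})^2 + \sum_{i\ne j}(\widehat{B}_{ij}-B^*_{ij})^2 \;\le\; (p+s)\,\|\widehat{B}-B^*\|_\infty^2,
\end{align*}
and then substitute the Theorem~\ref{app: bounded moment support recovery}(b) bound to obtain the claimed Frobenius estimate with constant $C_5$.

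For the spectral norm bound, I would combine two standard matrix norm comparison inequalities exactly as in Corollary~\ref{app: corollary1}: on the one hand $\|A\|_2\le\vertiii{A}_\infty \le d\,\|A\|_\infty$ whenever $A$ has at most $d$ nonzeros per row (which holds for $\widehat{B}-B^*$ by support recovery and the definition of $d$), and on the other hand $\|A\|_2\le \|A\|_F\le\sqrt{s+p}\,\|A\|_\infty$ from the Frobenius step. Taking the minimum of the two upper bounds and again substituting the bounded-moment $\ellinf$ rate yields the stated $\min\{d,\sqrt{s+p}\}$ factor.

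There is no genuine obstacle here: all the probabilistic work (the polynomial tail bound of Lemma~\ref{app: tail boound2}, the strict dual feasibility argument via the primal-dual witness, the control of $\Delta$ and $R(\Delta)$) has already been absorbed into Theorem~\ref{app: bounded moment support recovery}. The only step one must be careful with is recording that both the support-recovery event and the $\ellinf$ bound occur simultaneously on the same event of probability $\ge 1-p^{-(\tau-2)}$, so that the deterministic norm inequalities above may be applied; this is immediate because both parts of Theorem~\ref{app: bounded moment support recovery} are asserted under a single union bound. Hence the full proof is routine and algebraic, and the Frobenius/spectral rates differ from the sub-Gaussian case only in the replacement of $\sqrt{(\tau\log p+\log 4)/n}$ by $\sqrt{p^{\tau/k}/n}$, reflecting the polynomial rather than exponential tail of the concentration inequality.
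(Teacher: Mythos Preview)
Your proposal is correct and follows exactly the same approach as the paper, which simply states that the proof ``follows along the same lines of Corollary~\ref{app: corollary1}'' and omits the details. Your write-up is in fact more explicit than the paper's own treatment.
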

\begin{proof}
The proof follows along the same lines of Corollary \ref{app: corollary1}. Hence, the details are omitted. 
\end{proof}

\end{document}